\documentclass{article} 

\usepackage[nonatbib,final]{neurips_2025}
\usepackage[numbers,sort&compress]{natbib}

\usepackage{xcolor}
\usepackage{colortbl}
\usepackage{float}

\usepackage{amsmath,amsfonts,bm}









\def\eqref#1{equation~\ref{#1}}









\def\1{\bm{1}}










\DeclareMathAlphabet{\mathsfit}{\encodingdefault}{\sfdefault}{m}{sl}
\SetMathAlphabet{\mathsfit}{bold}{\encodingdefault}{\sfdefault}{bx}{n}



\def\sB{{\mathbb{B}}}


\def\sO{{\mathbb{O}}}








\newcommand{\E}{\mathbb{E}}



\DeclareMathOperator*{\argmax}{arg\,max}

\usepackage{amsthm}

\newtheorem{condition}{Condition}
\newtheorem{lemma}{Lemma}
\newtheorem{theorem}{Theorem}

\newcommand{\lambdamax}{\lambda_{\text{max}}}

\newcommand{\lambdamin}{\lambda_{\text{min}}}
\newcommand{\fisherinfj}{I_{j,\thetastar}}
\newcommand{\fisherinfk}{I_{k,\thetastar}}
\newcommand{\conditionnum}{\gamma_{\thetastar}}

\newcommand{\hiddensegstart}{h^{\text{start}}}

\newcommand{\promptseq}{S_n}

\newcommand{\Lzeroone}{L_{\text{0-1}}}

\newcommand{\indicator}{\mathbf{1}}
\newcommand{\badset}{\sB}

\newcommand{\obs}{o}
\newcommand{\obsset}{\sO}

\newcommand{\obsseg}{O}

\newcommand{\X}{x}
\newcommand{\y}{y}
\newcommand{\Xtest}{x_{\text{test}}}
\newcommand{\ytest}{y_{\text{test}}}

\newcommand{\pprompt}{p_{\text{prompt}}}
\newcommand{\ppromptstart}{p_{\text{prompt}}}

\newcommand{\minv}{{-1}}

\newcommand{\thetastar}{{\theta^*}}



\newlength{\widebarargwidth}
\newlength{\widebarargheight}
\newlength{\widebarargdepth}

\usepackage{amsmath}
\usepackage{amssymb}
\usepackage{mathtools}
\usepackage{amsthm}
\usepackage{enumitem}

\usepackage{microtype}
\usepackage{subcaption}
\usepackage{tabularx}
\usepackage{pifont}
\usepackage{algorithm}
\usepackage{algpseudocode}
\usepackage{array}     

\usepackage{comment}
\usepackage[utf8]{inputenc} 
\usepackage[T1]{fontenc}    
\usepackage{url}            
\usepackage{amsfonts}       
\usepackage{nicefrac}       
\usepackage{xcolor}         

\usepackage{tabularx}
\usepackage{makecell}

\usepackage{hyperref}
\usepackage{url}
\usepackage{graphicx}
\usepackage{booktabs}
\usepackage{multirow}
\usepackage{wrapfig}
\usepackage{subcaption}
\usepackage{amsmath}

\usepackage{bookmark}
\usepackage{makecell}
\usepackage{amssymb}
\usepackage{array}
\usepackage{booktabs}
\usepackage{pifont}
\usepackage{soul}
\usepackage{xcolor}

\usepackage{etoc}
\etocdepthtag.toc{mtchapter}
\etocsettagdepth{mtchapter}{subsection}
\etocsettagdepth{mtappendix}{none}

\newcommand{\method}{ChunkKV}
\sethlcolor{yellow}
\newcommand{\cmark}{\textcolor{blue}{\ding{51}}}



\title{ChunkKV: Semantic-Preserving KV Cache Compression for Efficient Long-Context LLM Inference}

\begin{document}





\author{
Xiang LIU$^{\heartsuit*}$ $\qquad$ Zhenheng TANG$^{\clubsuit*}$ $\qquad$ Peijie DONG$^{\heartsuit}$ $\qquad$ Zeyu LI$^{\heartsuit}$ \\ 
\textbf{Yue LIU}$^{\diamondsuit\dagger}$ $\qquad$ \textbf{Bo LI}$^{\spadesuit\clubsuit}$ $\qquad$ \textbf{Xuming HU$^{\heartsuit\dagger}$} $\qquad$ \textbf{Xiaowen CHU$^{\heartsuit\dagger}$} \\
$^{\heartsuit}$  The Hong Kong University of Science and Technology (Guangzhou)\\ 
$^{\clubsuit}$  CSE, The Hong Kong University of Science and Technology\\
$^{\spadesuit} $Guangzhou HKUST Fok Ying Tung Research Institute \\
$^{\diamondsuit}$ Terminus Technologies \\ 
 \texttt{\{xliu886,pdong212,zli755\}@connect.hkust-gz.edu.cn} \\
 \texttt{\{zhtang.ml, bli\}@cse.ust.hk} \quad \texttt{\{xuminghu, xwchu\}@hkust-gz.edu.cn} 
  }

\maketitle


\begin{abstract}
   Large Language Models (LLMs) require significant GPU memory when processing long texts, with the key value (KV) cache consuming up to 70\% of total memory during inference. Although existing compression methods reduce memory by evaluating the importance of individual tokens, they overlook critical semantic relationships between tokens, resulting in fragmented context and degraded performance. We introduce \method{}, which fundamentally reimagines KV cache compression by treating semantic chunks - rather than isolated tokens - as basic compression units. This approach preserves complete linguistic structures and contextual integrity, ensuring that essential meaning is retained even under aggressive compression. Our innovation includes a novel layer-wise index reuse technique that exploits the higher cross-layer similarity of preserved indices in \method{}, reducing computational overhead and improving throughput by 26.5\%. Comprehensive evaluations on challenging benchmarks: LongBench, Needle-In-A-HayStack, GSM8K, and JailbreakV demonstrate that \method{} outperforms state-of-the-art methods by up to 8.7\% in precision while maintaining the same compression ratio. These results confirm that semantic-aware compression significantly enhances both efficiency and performance for long-context LLM inference, providing a simple yet effective solution to the memory bottleneck problem. \emph{The code is available at \href{https://github.com/NVIDIA/kvpress}{link}.}

\end{abstract}
\def\thefootnote{$*$}\footnotetext{Equal Contribution.}

\def\thefootnote{$\dagger$}\footnotetext{Corresponding Author.}

\section{Introduction}

Large Language Models (LLMs) have become essential for addressing various downstream tasks of natural language processing (NLP), including summarization and question answering, which require the interpretation of a long context from sources such as books, reports, and documents, often encompassing tens of thousands of tokens~\citep{brown2020language, tay2022unifying,tang2025the,wang2025agenttaxo,touvron2023llama2}. Recent advances in long-context technology within the field of machine learning (ML) systems~\citep{flash-attn2, jacobs2023deepspeed, xiao2024efficient} have significantly improved computational throughputs and reduced latency of LLMs to process increasingly large input context lengths~\citep{liu2024world, young2024yi} with historical KV cache (key value attentions). However, the memory requirement of the KV cache in serving super-long contexts becomes a new bottleneck~\citep{zhang2024h2o,zhu2025oraclekv,wang2025all,geminiteam2024gemini}. For instance, the KV cache for a single token in a 7B-parameter model requires approximately 0.5 MB of GPU memory, resulting in a 10,000-token prompt consuming around 5 GB of GPU memory. 

\begin{figure*}[h]
   \centering
   \includegraphics[width=1\textwidth]{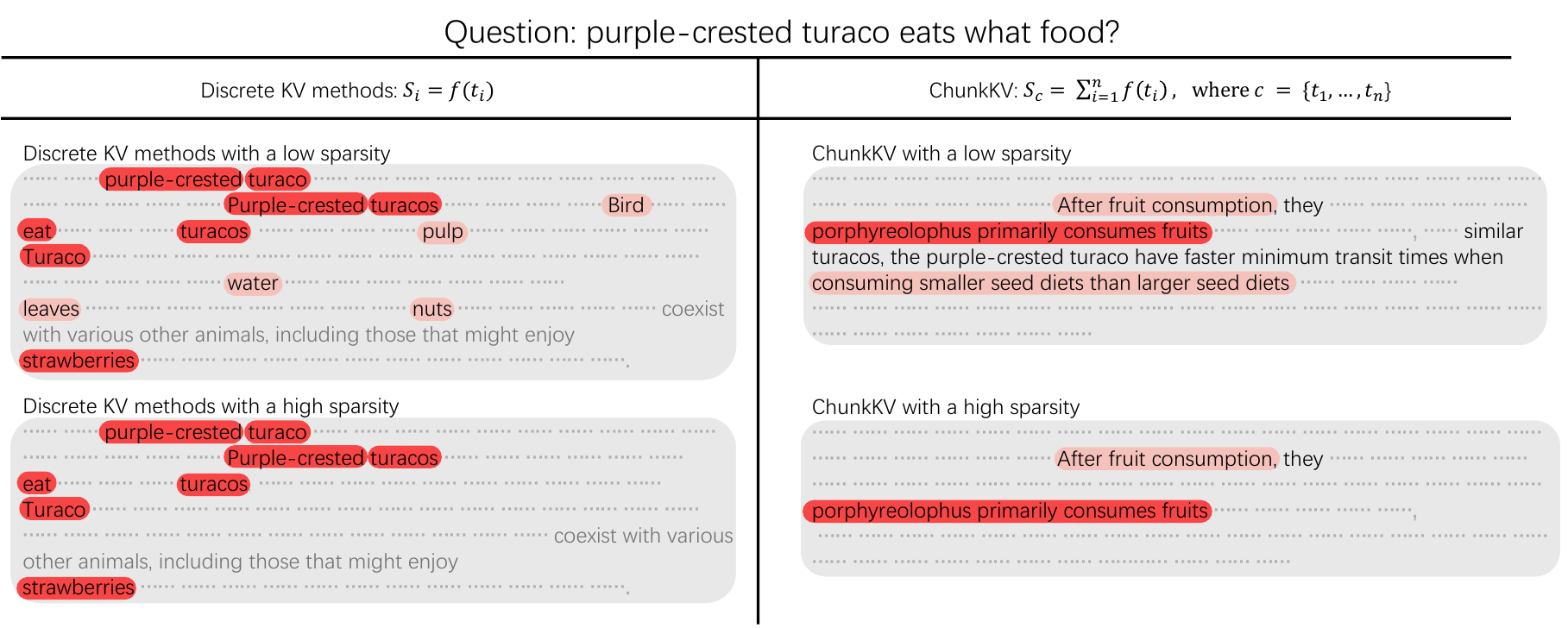}
      \vspace{-10pt}

   \caption{Illustration of the impact of the token discrete method and the chunk method on semantic preservation. The discrete method preserves words related to the question but often omits the subject. In contrast, the chunk method retains the subject of the words, maintaining more accurate semantic information. For the equation: $S$ is the score function, and $c$ is a chunk of tokens.}
   \label{fig:main}
   \vspace{-10pt}
\end{figure*}

To address the substantial GPU memory consumption caused by KV caching, recent studies consider compressing the KV cache by pruning non-important discrete parts from the prompt tokens~\citep{zhang2024h2o,zhu2025oraclekv,li2024snapkv, ge2023model, zhang2024pyramidkv, fu2024lazyllm, yang2024pyramidinfer,liu2024scissorhands, tang2024quest}. H2O~\citep{zhang2024h2o} and SnapKV~\citep{li2024snapkv} have shown that retaining less than 50\% of the discrete KV cache can significantly reduce GPU memory usage with minimal impact on performance. However, we identify that the previous KV cache compression methods~\citep{zhang2024h2o,zhang2024pyramidkv} measure token importance isolatedly, neglecting the dependency between different tokens on the characteristics of the real-world language. For example, as shown in Figure~\ref{fig:main}, focusing on the importance of the token level can excessively focus on words about subjects ``turaco'' in the question while omitting crucial information about objects (foods) in the documents, resulting in the loss of essential semantic information. This motivates us to rethink the following question:

\emph{How to avoid isolated token importance measurement and preserve the semantic information in KV cache?}

\begin{table*}[h]
   \centering
   \vspace{-5pt}
   \caption{Comparison of Methods on KV Cache Compression.}

\scriptsize
\resizebox{0.9\textwidth}{!}{
   \begin{tabular}{lccccc}
       \toprule
       \multirow{2}{*}{\textbf{Method}} & \multirow{2}{*}{\makecell{ \textbf{KV Cache} \\ \textbf{Compression}}} & \multirow{2}{*}{\makecell{ \textbf{Dynamic} \\ \textbf{Policy}}}  & \multirow{2}{*}{\makecell{ \textbf{Layer-Wise} \\ \textbf{Policy}}} & \multirow{2}{*}{\makecell{ \textbf{Semantic} \\ \textbf{Information}}}  & \multirow{2}{*}{\makecell{ \textbf{Efficient} \\ \textbf{Index Reuse}}} \\
       \\
       \midrule 
       StreamingLLM~\citep{xiao2024efficient} & \cmark &  & &  &  \\
       H2O~\citep{zhang2024h2o} & \cmark & \cmark &  & &  \\
       SnapKV~\citep{li2024snapkv} & \cmark & \cmark &  & &  \\
       PyramidInfer~\citep{yang2024pyramidinfer} & \cmark & \cmark & \cmark &  &  \\
       PyramidKV~\citep{zhang2024pyramidkv} & \cmark & \cmark & \cmark &  &  \\ \midrule
       \rowcolor{red!20}\method{}(Ours) & \cmark & \cmark & \cmark & \cmark & \cmark \\
       \bottomrule
   \end{tabular}
   }
   \label{tab:comparison}
   \vspace{-5pt}
\end{table*}

In light of this, we observe that complete semantic information usually appears in a continuous sequence~\citep{miller1956information,wang2025agenttaxo,ramshaw1999text,xie2022an}. Thus, we introduce a straightforward yet effective \method{}, grouping the tokens in a chunk as a basic compressing unit, which should be preserved or discarded as a whole. Thus, it retains the most informative \textbf{semantic chunks} from the original KV cache. As shown in Figure~\ref{fig:main}, preserving a chunk helps catch the subject, predicate, and object. In Table~\ref{tab:quantitative_metrics_detailed}, we quantify the minimal loss and higher recovery of the attention score achieved by the chunk-based approach at the inference stage. Furthermore, we investigate that \textit{the preserved KV cache indices by \method{} exhibit a higher similarity} compared to previous methods. Consequently, we develop a technique called layer-wise index reuse, which reduces the additional computational time introduced by the KV cache compression method. As outlined in Table \ref{tab:comparison}, recent highly relevant KV cache compression methods \textit{lack the ability to retain semantic information and efficiently reuse indices}.

To evaluate \method{}'s performance, we conduct comprehensive experiments across multiple cutting-edge long-context benchmarks: long-context tasks including LongBench~\citep{bai2023longbench} and Needle-In-A-HayStack (NIAH)~\citep{needle}, in-context learning tasks such as GSM8K~\citep{gsm8k} and JailbreakV~\citep{jailbreakv}. And also different models including DeepSeek-R1-Distill-Llama-8B~\citep{deepseekr1},LLaMA-3-8B-Instruct~\citep{meta2024llama3}, Mistral-7B-Instruct~\citep{jiang2023mistral7b}, and Qwen2-7B-Instruct~\citep{qwen2}. Our experimental results demonstrate that \method{} surpasses existing KV cache compression methods in both efficiency and accuracy, primarily due to its ability to preserve essential information through selective chunk retention. These findings establish \method{} as a simple yet effective approach to KV cache compression.

We summarize our key contributions as follows:
\begin{itemize}[leftmargin=*, topsep=0pt, itemsep=2pt, parsep=0pt]
   \item \noindent We identify the phenomenon in which discrete KV cache compression methods inadvertently prune the necessary semantic information.
   \item \noindent We propose \method{}, a simple KV cache compression method that uses the fragmentation method that retains semantic information, and design the layer-wise index reuse technique to reduce the additional computational time.
   \item \noindent We evaluate \method{} on cutting-edge long-context benchmarks including LongBench and Needle-In-A-HayStack, as well as the GSM8K, many-shot GSM8K and JailbreakV in-context learning benchmark, and multi-step reasoning (O1 and R1) LLMs, achieving state-of-the-art performance.
   \end{itemize}

\section{Related Work}
\textbf{KV Cache Compression.} KV cache compression technology has developed rapidly in the era of LLM, with methods mainly focused on evicting unimportant tokens. The compression process occurs before the attention blocks, optimizing both the prefilling time and GPU memory. \citet{xiao2024efficient} and \citet{han2024lm} propose that initial and recent tokens consistently have high attention scores between different layers and attention heads. As a result, retaining these tokens in the KV cache is more likely to preserve important information. Furthermore, FastGen~\citep{ge2023model} evicts tokens based on observed patterns. H2O~\citep{zhang2024h2o} and SnapKV~\citep{li2024snapkv} employ dynamic KV cache compression methods, evaluating the importance of tokens based on attention scores and then evicting the less important ones. As inference scenarios become increasingly complex, dynamic KV cache compression methods demonstrate powerful performance. Recently, \citet{yang2024pyramidinfer} and \citet{zhang2024pyramidkv} have closely examined the distributions of attention scores during the pre-filling stage of the Retrieval-Augmented Generation (RAG) task, discovering a pyramidal KV cache compression pattern in different transformer layers. In contrast, FlowKV~\citep{liu2025flowkv} employs a novel multi-turn isolation mechanism that preserves the accumulated compressed KV cache from past turns and only compresses the KV pairs from the most recent turn, effectively mitigating the re-compression of older context and the problem of catastrophic forgetting.

Although these KV cache compression methods have explored efficient GPU memory management while maintaining original performance, our study focuses more on the semantic information of the prompt. We find that chunks of the original KV cache are more important than discrete tokens.
   

\section{ChunkKV}
\subsection{Preliminary Study of KV Cache Compression}

With the increasing long-context capabilities of LLMs, the KV cache has become crucial for improving the inference efficiency. However, it can consume significant GPU memory when handling long input contexts. The GPU memory cost of the KV cache for the decoding stage can be calculated as follows:
\begin{equation}
   \label{eq:kv_cache_cost}
    M_{KV} = 2 \times \textit{B} \times \textit{S}  \times \textit{L} \times \textit{N} \times \textit{D} \times 2
\end{equation}
   

where $B$ is the batch size, $S$ is the sequence length of prompt and decoded length, $L$ is the number of layers, $N$ is the number of attention heads, $D$ is the dimension of each attention head, and the first $2$ accounts for the KV matrices, while the last $2$ accounts for the precision when using float16. Table \ref{appendix:config_models} shows the configuration parameters for LLaMA-3-8B-Instruct~\citep{meta2024llama3}. With a batch size $B=1$ and a sequence length of prompt $S=2048$, the GPU memory cost of the KV cache is nearly $1$ GB. If the batch size exceeds 24, the GPU memory cost of the KV cache will exceed the capacity of an RTX 4090 GPU. To address this issue, KV cache compression methods have been proposed, with the aim of retaining only a minimal amount of KV cache while preserving as much information as possible. For more details on the LLM configuration parameters, refer to Appendix~\ref{appendix:config_models}.

\subsection{Chunk Based KV Compression}
To address the limitations of existing KV cache compression methods, we propose \method{}, a novel KV cache compression method that retains the most informative semantic chunks. The key idea behind \method{} is to group tokens in the KV cache into chunks that preserve more semantic information, such as a chunk containing a subject, verb, and object. As illustrated in Figure~\ref{fig:main}, \method{} preserves the chunks of the KV cache that contain more semantic information.
First, we define a chunk as a group of tokens that contain related semantic information. By retaining the most informative chunks from the original KV cache, \method{} can effectively reduce the memory usage of the KV cache while preserving essential information. For more information on ChunkKV, please refer to Section~\ref{appendix:chunkkv_analysis}.

\begin{algorithm}
\caption{\method{}}
\label{alg:chunkkv}
\begin{algorithmic}
\State \textbf{Input:} $\mathbf{Q} \in \mathbb{R}^{T_q \times d}$, $\mathbf{K} \in \mathbb{R}^{T_k \times d}$, $\mathbf{V} \in \mathbb{R}^{T_v \times d}$, observe window size $w$, chunk size $c$, compressed KV cache max length $L_{\text{max}}$
\State \textbf{Output:} Compressed KV cache $\mathbf{K}'$, $\mathbf{V}'$
\State \textbf{Observe Window Calculation:}
\State $\mathbf{A} \gets \mathbf{Q}_{T_q - w:T_q} \mathbf{K}^T$ \Comment{Attention scores for the observe window}
\State $C \gets \left\lceil \frac{T_k}{c} \right\rceil$ \Comment{Calculate the number of chunks}
\State \textbf{Chunk Attention Score Calculation}:
\For {$i = 1$ to $C$}
      \State $\mathbf{A}_i \gets \sum_{j=(i-1)c+1}^{ic} \mathbf{A}_{:,j}$ \Comment{Sum of attention scores for each chunk}
\EndFor
\State \textbf{Top-K Chunk Selection}:
\State $k \gets \left\lfloor \frac{L_{\text{max}}}{c} \right\rfloor$
\State $\textit{Top\_K\_Indices} \gets \text{indices of Top-}k \text{ chunks based on } \mathbf{A}_i $ 
\State \textbf{Compression}:
\State $\mathbf{K}', \mathbf{V}' \gets \text{index\_select}(\mathbf{K}, \mathbf{V}, \textit{Top\_K\_Indices})$
\State \textbf{Concatenation}:
\State $\mathbf{K}' \gets \text{concat}(\mathbf{K}'_{0:L_{\text{max}}-w}, \mathbf{K}_{T_k-w:T_k})$
\State $\mathbf{V}' \gets \text{concat}(\mathbf{V}'_{0:L_{\text{max}}-w}, \mathbf{V}_{T_v-w:T_v})$
\State  $\mathbf{K}'$, $\mathbf{V}'$
\end{algorithmic}
\end{algorithm}
   
The algorithm \ref{alg:chunkkv} shows the pseudocode for \method{}. First, following the approach of H2O~\citep{zhang2024h2o} and SnapKV~\citep{li2024snapkv}, we set the observe window by computing the attention scores $\mathbf{A} \gets \mathbf{Q}_{T_q - w:T_q} \mathbf{K}^T$, where $\mathbf{Q}_{T_q - w:T_q}$ is the observe window, $\mathbf{K}$ is the Key matrix and the window size $w$ is usually set to $\{4,8,16,32\}$. Next, the number of chunks $C$ is calculated as $C = \left\lceil \frac{T_k}{c} \right\rceil$, where $T_k$ is the length of the Key matrix and $c$ is the chunk size. The attention scores for each chunk are then computed as $\mathbf{A}_i = \sum_{j=(i-1)c+1}^{ic} \mathbf{A}_{:,j}$ for $i = 1, 2, \ldots, C$. We use the top-$k$ algorithm as the sampling policy for \method{}. The top-$k$ chunks are selected based on their attention scores, where $k = \left\lfloor \frac{L_{\text{max}}}{c} \right\rfloor$, and $L_{\text{max}}$ is the maximum length of the compressed KV cache. The size of the last chunk is set to $\text{min}(c, L_{\text{max}} - (k-1) \times c)$. The indices of the top-$k$ chunks preserve the original sequence order. In the compression step, only the key and value matrices corresponding to the selected indices are retained, resulting in the compressed KV cache. Finally, the observe window of the original KV cache will be concatenated to the compressed KV cache by replacing the last $w$ tokens to keep important information. The compressed KV cache is then used for subsequent attention computations. For implementation, we add vectorized operations, memory optimizations, etc., to optimize the code. Refer to Appendix~\ref{appendix:chunkkv_implementation} for more details.

\begin{figure*}[h]
   \centering
   \includegraphics[scale=0.212, trim=0 0 0 0]{./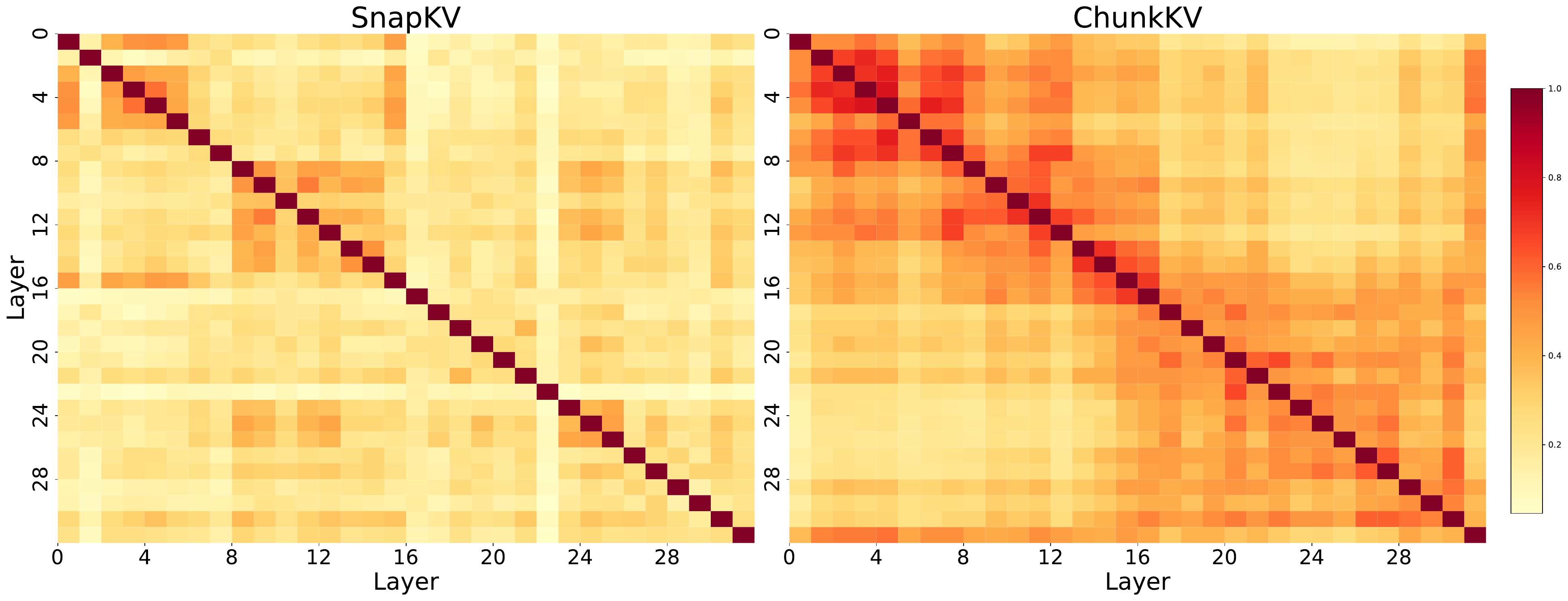}
      \vspace{-10pt}

   \caption{ Layer-wise similarity heatmaps of the preserved KV \textbf{cache indices} by SnapKV (left) and ChunkKV (right) on LLaMA-3-8B-Instruct. Deep colors indicate higher similarity. More visualization can be found in Appendix \ref{appendix:index_reuse_similarity}.}
   
   \label{fig:index_reuse_heatmap}
   \vspace{-20pt}
\end{figure*}

\subsection{Layer-Wise Index Reuse}
\label{sec:layer_wise_index_reuse}

\begin{wraptable}{r}{0.5\textwidth}
      \centering
      \vspace{-20pt}
      \caption{Retained KV Cache Indices Similarity of Adjacent Layers for Different Models.}
         \vspace{-5pt}
      \resizebox{0.5\textwidth}{!}{
      \begin{tabular}{lccc}
      \toprule
      \textbf{Method} & \textbf{H2O} & \textbf{SnapKV} & \textbf{\method{}}  \\
      \midrule
      LLaMA-3-8B      & 25.31\%& 27.95\% & \textbf{57.74\%} \\
      Qwen2-7B        & 14.91\%& 16.50\% & \textbf{44.26\%} \\
      Mistral-7B      & 15.15\% & 15.78\% & \textbf{52.16\%} \\
      \bottomrule
      \end{tabular}
      }
      \vspace{-20pt}
      \label{tab:jaccard_similarity_models}
   \end{wraptable}

Furthermore, we investigated the KV cache indices preserved by \method{} and found that they exhibit higher similarity compared to previous methods. Figure~\ref{fig:index_reuse_heatmap} shows the layer-wise similarity heatmaps of SnapKV and \method{}. Each cell represents the similarity between the preserved KV cache indices of two layers, with deeper colors indicating higher similarity. The results demonstrate that the KV cache indices preserved by \method{} are more similar to those in neighboring layers.

\begin{wrapfigure}{r}{0.52\textwidth}
   \vspace{-20pt}

   \begin{minipage}{0.52\textwidth}     
\begin{algorithm}[H]
   \caption{Layer-wise Index Reuse for \method{}}
   \label{alg:layer_wise_index_reuse}
   \begin{algorithmic}
   \State \textbf{Input:} Number of layers in LLMs $N_{\text{layers}}$, number of reuse layers $N_{\text{reuse}}$
   \State \textbf{Initialize:} Dictionary to store indices $\mathcal{I}_{\text{reuse}} = \{\}$
   \For {$l = 0$ to ($N_{\text{layers}}-1$)}
       \If {$l \mod N_{\text{reuse}} == 0$}
           \State $\mathbf{K}'_{l}, \mathbf{V}'_{l}, \mathcal{I}_l \gets \text{\method{}}(\mathbf{K}_l, \mathbf{V}_l)$ 
           \State $\mathcal{I}_{\text{reuse}}[l] \gets \mathcal{I}_l$ 
       \Else
           \State $\mathcal{I}_l \gets \mathcal{I}_{\text{reuse}}[ \left\lfloor \frac{l}{N_{\text{reuse}}} \right\rfloor \times N_{\text{reuse}} ]$ 
       \EndIf
       \State $\mathbf{K}'_{l} \gets \text{index\_select}(\mathbf{K}_l, \mathcal{I}_l)$ 
       \State $\mathbf{V}'_{l} \gets \text{index\_select}(\mathbf{V}_l, \mathcal{I}_l)$ 
   \EndFor
   
\end{algorithmic}
\end{algorithm}
\end{minipage}
\vspace{-10pt}
\end{wrapfigure}
   
As shown in Table~\ref{tab:jaccard_similarity_models}, \method{} consistently achieves a higher average Jaccard similarity between adjacent layers compared to SnapKV in different model architectures, indicating that the retained token index in \method{} is more similar to each other. For a more detailed visualization, please refer to Appendix \ref{appendix:index_reuse_similarity}.

Based on the above findings of the KV cache index, we propose a training-free \textit{layer-wise index reuse} method to further reduce the additional cost of KV cache compression, which reuses compressed token indices across multiple layers. We evaluate the efficiency and effectiveness of the layer-wise index reuse method in Section~\ref{sec:index_reuse}. Reuse of the layer-wise index reduces the KV cache compression time by 20\% compared to the FullKV baseline, with a performance drop of only 0.5\%.  

\begin{wraptable}{r}{0.52\textwidth}
   \vspace{-10pt}
   \caption{\centering GSM8K Performance Comparison. \\ \# SLM=StreamingLLM, SKV=SnapKV, PKV=PyramidKV}
   \vspace{-5pt}
   \centering
   \resizebox{0.52\columnwidth}{!}{
   \begin{tabular}{l|ccccc}
   \specialrule{1pt}{0pt}{2pt}
   \multirow{2}{*}{\textbf{\makecell{ Ratio}}} & \multirow{2}{*}{\makecell{SLM}} & \multirow{2}{*}{H2O} & \multirow{2}{*}{SKV} & \multirow{2}{*}{PKV} & \multirow{2}{*}{\makecell{\textbf{ChunkKV} \\ 
   \textbf{(Ours)}}} \\
   & & & & & \\
   \midrule
   \multicolumn{6}{c}{DeepSeek-R1-Distill-Llama-8B FullKV: 69.4\% $\uparrow$} \\
   \midrule
   10\% & 51.6\% & 55.6\% & 57.6\% & 62.6\% & \cellcolor{red!20}\textbf{65.7\%} \\
   \midrule
   \multicolumn{6}{c}{LlaMa-3.1-8B-Instruct FullKV: 79.5\% $\uparrow$} \\
   \midrule
   30\% & 70.5\% & 72.2\% & 76.1\% & 77.1\% & \cellcolor{red!20}\textbf{77.3\%} \\
   20\% & 63.8\% & 64.0\% & 68.8\% & 71.4\% & \cellcolor{red!20}\textbf{77.6\%} \\
   10\% & 47.8\% & 45.0\% & 50.3\% & 48.2\% & \cellcolor{red!20}\textbf{65.7\%} \\
   \midrule
   \multicolumn{6}{c}{LlaMa-3-8B-Instruct FullKV: 76.8\% $\uparrow$} \\
   \midrule
   30\% & 70.6\% & 73.6\% & 70.2\% & 68.2\% & \cellcolor{red!20}\textbf{74.6\%} \\
   \midrule
   \multicolumn{6}{c}{Qwen2-7B-Instruct FullKV: 71.1\% $\uparrow$} \\
   \midrule
   30\% & 70.8\% & 61.2\% & 70.8\% & 64.7\% & \cellcolor{red!20}\textbf{73.5\%} \\
   \midrule
   \specialrule{1pt}{0pt}{2pt}
   \end{tabular}
   }
     \vspace{-10pt}

   \label{tab:GSM8K}
\end{wraptable}

This \textit{layer-wise index reuse} method is formally described in Algorithm \ref{alg:layer_wise_index_reuse}. The \method{} compression process returns the compressed KV cache and their respective token indices, denoted as $\mathcal{I}_l$. For layer-wise index reuse, we define a grouping of layers such that all $N_{\text{reuse}}$ layers share the same token indices for \method{}. Specifically, for a group of layers $\left\{l, l+1, \ldots, l+N_{\text{reuse}}-1\right\}$, we perform \method{} on the first layer $l$ to obtain the token indices $\mathcal{I}_l$ and reuse $\mathcal{I}_l$ for the subsequent layers $l+1, l+2, \ldots, l+N_{\text{reuse}}-1$. The notation $\mathbf{K}_l[\mathcal{I}_l]$ and $\mathbf{V}_l[\mathcal{I}_l]$ indicates the selection of key and value caches based on the indices in $\mathcal{I}_l$. The efficiency analysis for layer-wise index reuse is provided in Appendix~\ref{appendix:index_reuse_efficiency}.

\textbf{Theoretical Understanding.} We provide a theoretical understanding from the in-context learning (ICL)~\citep{xie2022an} to interpret why maintaining the KV cache according to a continuous sequence in \method{} is better than according to sparse tokens. 
Informally speaking, the continuously chunk-level KV cache preserves the whole examples (semantic information) in ICL, thus reducing the requirement on distinguishability, i.e., lower bound of KL divergence between the example and the question (Equation~\ref{eq:noise_dinsting} in condition~\ref{cond:2}). The complete analysis is provided in Appendix~\ref{appendix:theory}.

\section{Experiment Results}
\label{sec:experiment_results}
In this section, we conduct experiments to evaluate the effectiveness of \method{} on KV cache compression in two benchmark fields, with a chunk size set to 10 even for various model architectures. The first is the In-Context Learning benchmark, for which we select GSM8K~\citep{gsm8k} and Jailbreakv~\citep{jailbreakv,tang2025ghost,weifan2025jailbreaklora} to evaluate the performance of \method{}, furthermore, we also include multi-step reasoning LLM DeepSeek-R1-Distill-Llama-8B~\citep{deepseekr1} to evaluate the performance of \method{}. The In-Context Learning scenario is a crucial capability for LLMs and has been adapted in many powerful technologies such as Chain-of-Thought~\citep{wei2022chain,lai2025mediatormemoryefficientllmmerging,diao2023active,pan2023plum}. The second is the Long-Context benchmark, which includes LongBench~\citep{bai2023longbench} and Needle-In-A-HayStack (NIAH)~\citep{needle}, both widely used for assessing KV cache compression methods.  All experiments were carried out three times, using the mean score to ensure robustness.

\vspace{-10pt}
\subsection{In-Context Learning}
\label{sec:icl}

The In-Context Learning (ICL) ability significantly enhances the impact of prompts on LLMs. For example, the Chain-of-Thought approach~\citep{wei2022chain} increases the accuracy of the GSM8K of the PaLM model~\citep{chowdhery2022palm} from 18\% to 57\% without additional training. In this section, we evaluate the performance of \method{} on the GSM8K, Many-Shot GSM8K \citep{agarwal2024many}, and JailbreakV \citep{jailbreakv} benchmarks.

\begin{wraptable}{r}{0.52\textwidth}
   \vspace{-5pt}

   \caption{Many-Shot (50-shot) GSM8K Performance Comparison.}
   \vspace{-5pt}

   \centering
   \resizebox{0.52\columnwidth}{!}{
   \begin{tabular}{l|ccccc}
   \specialrule{1pt}{0pt}{2pt}

   \multirow{2}{*}{\textbf{\makecell{ Ratio}}} & \multirow{2}{*}{\makecell{SLM}} & \multirow{2}{*}{H2O} & \multirow{2}{*}{SKV} & \multirow{2}{*}{PKV} & \multirow{2}{*}{\makecell{\textbf{ChunkKV} \\ 
   \textbf{(Ours)}}} \\

   & & & & & \\
   \midrule
   \multicolumn{6}{c}{DeepSeek-R1-Distill-Llama-8B FullKV: 71.2\% $\uparrow$} \\
   \midrule
   10\% & 63.2\% & 54.2\% & 54.1\% & 59.2\% & \cellcolor{red!20}\textbf{68.2\%} \\
   \midrule
   \multicolumn{6}{c}{LlaMa-3.1-8B-Instruct FullKV: 82.4\% $\uparrow$} \\
   \midrule
   10\% & 74.3\% & 51.2\% & 68.2\% & 70.3\% & \cellcolor{red!20}\textbf{79.3\%} \\
   \midrule
   \specialrule{1pt}{0pt}{2pt}
   \end{tabular}
   }
 \vspace{-0.2cm}
   \label{tab:many_shot_GSM8K}
\end{wraptable}

\vspace{-5pt}
\textbf{GSM8K.}
In the in-context learning scenario, we evaluated multiple KV cache compression methods for GSM8K~\citep{gsm8k}, which contains more than 1,000 arithmetic questions on LLaMA-3-8B-Instruct, LLaMA-3.1-8B-Instruct~\citep{meta2024llama3}, Qwen2-7B-Instruct~\citep{qwen2} and DeepSeek-R1-Distill-Llama-8B~\citep{deepseekr1}. Following \citet{agarwal2024many}, we consider many-shot GSM8K as a long-context reasoning scenario, which is a more challenging task than long-context retrieval benchmark LongBench~\citep{bai2023longbench}. The CoT prompt settings for this experiment are the same as those used by~\citet{wei2022chain}, for many-shot GSM8K we set the number of shots to 50, where the prompt length is more than 4k tokens. For more details on the prompt settings, please refer to the APPENDIX \ref{appendix:prompt}.

Table \ref{tab:GSM8K} presents the performance comparison. The results show that \method{} outperforms other KV cache compression methods on different models and compression ratios. Table \ref{tab:many_shot_GSM8K} presents the performance comparison of many-shot GSM8K, and also \method{} outperforms other KV cache compression methods. The consistent superior performance of \method{} in both models underscores its effectiveness in maintaining crucial contextual information for complex arithmetic reasoning tasks by chunk level KV cache rather than the discrete token level.

\begin{wraptable}{r}{0.52\textwidth}
   \vspace{-5pt}
   \caption{JailbreakV Performance Comparison.}
   \vspace{-5pt}
   \centering
   \resizebox{0.52\columnwidth}{!}{
   \begin{tabular}{l|ccccc}
   \specialrule{1pt}{0pt}{2pt}

   \multirow{2}{*}{\textbf{\makecell{ Ratio}}} & \multirow{2}{*}{\makecell{SLM}} & \multirow{2}{*}{H2O} & \multirow{2}{*}{SKV} & \multirow{2}{*}{PKV} & \multirow{2}{*}{\makecell{\textbf{ChunkKV} \\ 
   \textbf{(Ours)}}} \\
   & & & & & \\
   \midrule
   \multicolumn{6}{c}{LlaMa-3.1-8B-Instruct FullKV: 88.9\% $\uparrow$} \\
   \midrule
   20\% & 65.0\% & 71.7\% & 88.0\% & 87.5\% & \cellcolor{red!20}\textbf{89.0\%} \\
   10\% & 53.1\% & 65.4\% & 84.3\% & 85.5\% & \cellcolor{red!20}\textbf{87.9\%} \\
   \midrule
   \specialrule{1pt}{0pt}{2pt}
   \end{tabular}
   }
   \vspace{-10pt}

   \label{tab:jailbreak}
\end{wraptable}

\vspace{-5pt}
\textbf{Jailbreak.} In this section, we evaluate the performance of \method{} on the JailbreakV benchmark~\citep{jailbreakv}, which is a safety Jailbreak benchmark for language models. The prompt settings are the same as those used by~\citet{jailbreakv}.

Table \ref{tab:jailbreak} presents the performance comparison. The results demonstrate that \method{} outperforms other KV cache compression methods on different compression ratios. This shows that for safety benchmark, the chunk level KV cache is more effective than other discrete token level compression methods.

\subsection{Long-Context Benchmark}
\label{sec:long-context-overall}
LongBench and NIAH are two widely used benchmarks for KV cache compression methods. Both benchmarks have a context length that exceeds $10K$. NIAH requires retrieval capability, while LongBench is a meticulously designed benchmark suite that tests the capabilities of language models in handling extended documents and complex information sequences. For more details on LongBench, please refer to the APPENDIX \ref{appendix:evaluation}.

\textbf{LongBench.} We use LongBench~\citep{bai2023longbench} to assess the performance of \method{} on tasks involving long-context inputs.  We evaluated multiple KV cache eviction methods using the LongBench benchmark with LLaMA-3-8B-Instruct~\citep{meta2024llama3}, Mistral-7B-Instruct-v0.3~\citep{jiang2023mistral7b}, and Qwen2-7B-Instruct~\citep{qwen2}, with a KV cache compression ratio of $10\%$. LongBench-ZH provides the Chinese subtask, and Qwen2-7B-Instruct also supports Chinese, so we tested Qwen2-7B-Instruct with different KV cache compression methods on the Chinese subtasks.

\begin{wraptable}{r}{0.53\textwidth}
   \vspace{-12pt}
   
   \caption{KV cache compression methods on the LongBench benchmark. Results show performance gap compared to FullKV baseline (negative values indicate worse performance). }
   \vspace{-5pt}
   \centering
   \resizebox{0.53\columnwidth}{!}{
   \begin{tabular}{l|ccccc}
   \specialrule{1pt}{0pt}{2pt}
   \multirow{2}{*}{\textbf{\makecell{ Ratio}}} & \multirow{2}{*}{\makecell{SLM}} & \multirow{2}{*}{H2O} & \multirow{2}{*}{SKV} & \multirow{2}{*}{PKV} & \multirow{2}{*}{\makecell{\textbf{ChunkKV} \\ 
   \textbf{(Ours)}}} \\
   & & & & & \\
   \midrule
   \multicolumn{6}{c}{LlaMa-3-8B-Instruct FullKV: 41.46 $\uparrow$} \\
   \midrule
   10\% & -13.80\% & -10.61\% & -3.16\% & -3.33\% & \cellcolor{red!20}\textbf{-2.29\%} \\
   20\% & -6.42\% & -8.85\% & -2.24\% & -2.00\% & \cellcolor{red!20}\textbf{-1.74\%} \\
   30\% & -2.36\% & -5.38\% & -0.07\% & -0.22\% & \cellcolor{red!20}\textbf{+0.31\%} \\
   \midrule
   \multicolumn{6}{c}{Mistral-7B-Instruct-v0.3 FullKV: 48.08 $\uparrow$} \\
   \midrule
   10\% & -16.58\% & -9.30\% & -3.54\% & -3.52\% & \cellcolor{red!20}\textbf{-2.85\%} \\
   \midrule
   \multicolumn{6}{c}{Qwen2-7B-Instruct FullKV: 40.71 $\uparrow$} \\
   \midrule
   10\% & -5.28\% & -0.64\% & -0.39\% & -0.98\% & \cellcolor{red!20}\textbf{+0.42\%} \\
   \midrule

   \multicolumn{6}{c}{Qwen2-7B-Instruct on LongBench-ZH FullKV: 38.60 $\uparrow$} \\
   \midrule
   10\% & -15.95\% & -5.31\% & +0.18\% & -5.31\% & \cellcolor{red!20}\textbf{+2.20\%} \\
   \specialrule{1pt}{0pt}{2pt}
   \end{tabular}
   }
      \vspace{-20pt}
   \label{table:longbench_averages}
\end{wraptable}

Tables~\ref{table:longbench_averages} present the performance gap (in percentage) between each method and the FullKV baseline, where negative values indicate performance degradation compared to FullKV. The table is evaluated in both the LongBench English and Chinese subtasks, where \method{} outperforms other compression methods overall. This suggests that \method{}'s approach of retaining semantic chunks is more effective in preserving important information compared to other discrete token-based compression methods. For the 70B model and Chinese subtask results, please refer to Appendices \ref{appendix:longbench} and \ref{appendix:multilingual}.

\begin{wraptable}{r}{0.52\textwidth}
   \vspace{-13pt}
   \caption{NIAH Performance Comparison.}
   \vspace{-5pt}
   \centering
   \resizebox{0.52\columnwidth}{!}{
   \begin{tabular}{c|ccccc}
   \specialrule{1pt}{0pt}{2pt}
   \multirow{2}{*}{\textbf{\makecell{KV cache \\ Size}}} & \multirow{2}{*}{\makecell{SLM}} & \multirow{2}{*}{H2O} & \multirow{2}{*}{SKV} & \multirow{2}{*}{PKV} & \multirow{2}{*}{\makecell{\textbf{ChunkKV} \\ 
   \textbf{(Ours)}}} \\
   & & & & & \\
   \midrule
   \multicolumn{6}{c}{LlaMa-3.1-8B-Instruct FullKV: 74.6\% $\uparrow$} \\
   \midrule
   512 & 32.0\% & 68.6\% & 71.2 \% & 72.6\% & \cellcolor{red!20}\textbf{74.5\%} \\
   256 & 28.0\% & 61.7\% & 68.8\% & 69.5\% & \cellcolor{red!20}\textbf{74.1\%} \\
   128 & 23.7\% & 47.9\% & 58.9\% & 65.1\% & \cellcolor{red!20}\textbf{73.8\%} \\
   96 & 21.5\% & 41.0\% & 56.2\% & 63.2\% & \cellcolor{red!20}\textbf{70.3\%} \\
   \midrule
   \multicolumn{6}{c}{Mistral-7B-Instruct FullKV: 99.8\% $\uparrow$} \\
   \midrule
   128 & 44.3\% & 88.2\% & 91.6\% & 99.3\% & \cellcolor{red!20}\textbf{99.8\%} \\
   \midrule
   \specialrule{1pt}{0pt}{2pt}
   \end{tabular}
   }
   \label{tab:main_NIAH}
   \vspace{-15pt}
\end{wraptable}

\textbf{Needle-In-A-HayStack.} 
We use NIAH~\citep{needle} to evaluate the long-context retrieval capability of LLMs. NIAH assesses how well LLM extracts hidden tricked information from extensive documents, and following LLM-as-a-Judge~\citep{zheng2023judging} we apply GPT-4o-mini~\citep{openai2023gpt4omini} to assess the accuracy of the retrieved information. We evaluated multiple KV cache eviction methods using NIAH with LLaMA-3-8B-Instruct and Mistral-7B-Instruct-v0.2, setting benchmark context lengths to 8k and 32k tokens.

\begin{figure}[t]
   \centering
   \begin{subfigure}[b]{0.49\textwidth}
       \centering
       \includegraphics[width=1\textwidth]{./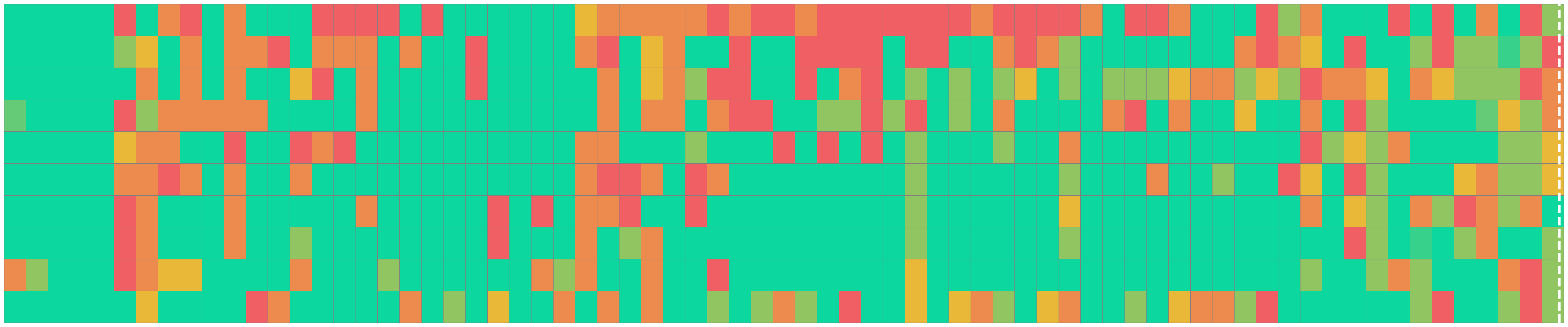}
       \caption{\method{}, accuracy 73.8\%}
       \label{fig:NIAH_llama3_spankv}
   \end{subfigure}
   \begin{subfigure}[b]{0.49\textwidth}
      \centering
      \includegraphics[width=1\textwidth]{./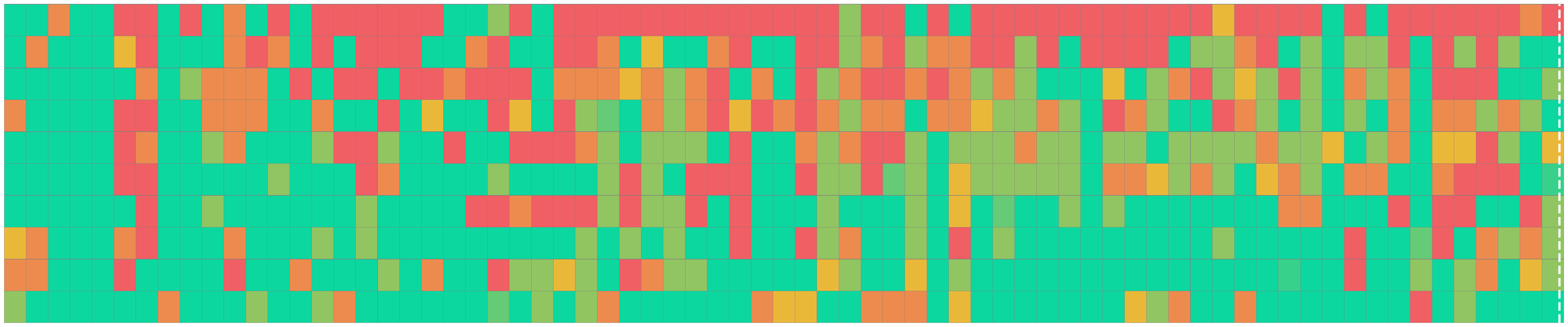}
      \caption{PyramidKV, accuracy 65.1\%}
      \label{fig:NIAH_llama3_pyramidkv}
   \end{subfigure}
   \begin{subfigure}[b]{0.49\textwidth}
      \centering
      \includegraphics[width=\textwidth]{./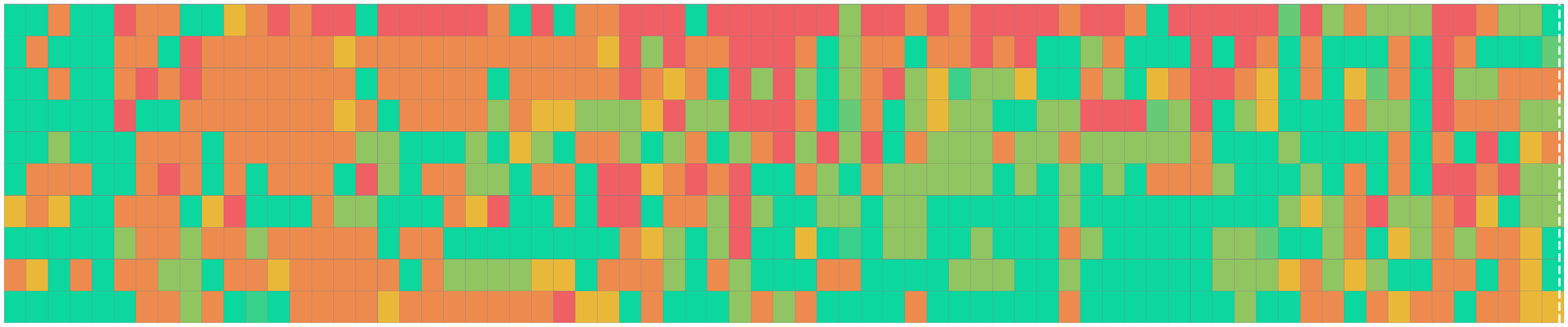}
      \caption{SnapKV, accuracy 58.9\%}
      \label{fig:NIAH_llama3_snapkv}
   \end{subfigure}
   \begin{subfigure}[b]{0.49\textwidth}
      \centering
      \includegraphics[width=\textwidth]{./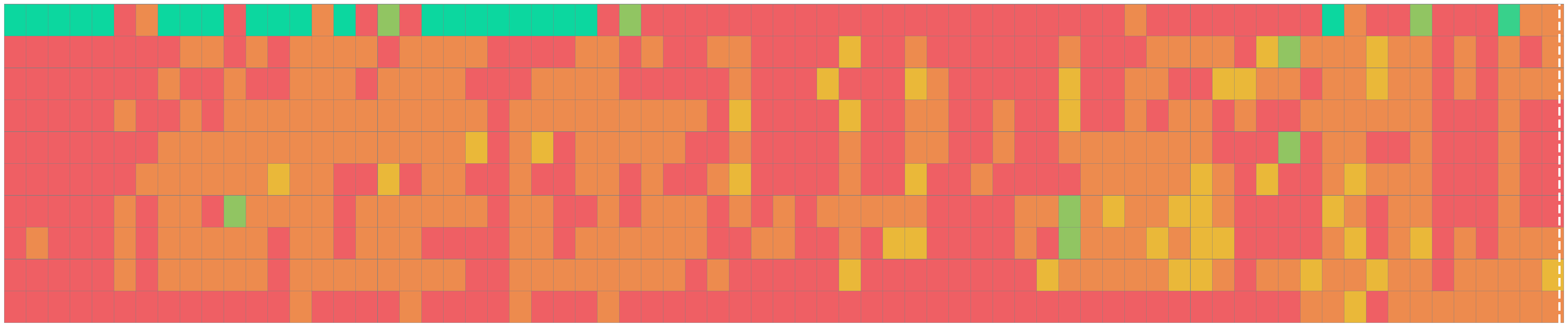}
      \caption{StreamingLLM, accuracy 23.7\%}
      \label{fig:NIAH_llama3_streamingllm}
   \end{subfigure}
   \caption{ NIAH benchmark for LLaMA3-8B-Instruct  with KV cache size=128 under 8k context length.}
   \label{fig:NIAH_llama3}
   \vspace{-20pt}
\end{figure}

Table~\ref{tab:main_NIAH} provides statistical results for different compression methods. These findings clearly indicate the effectiveness of ChunkKV in managing varying token lengths and depth percentages, making it a robust choice for KV cache management in LLMs. Figure~\ref{fig:NIAH_llama3} presents the NIAH benchmark results for LLaMA-3-8B-Instruct. The vertical axis represents the depth percentage, while the horizontal axis represents the token length, with shorter lengths on the left and longer lengths on the right. A cell highlighted in green indicates that the method can retrieve the needle at that length and depth percentage. The detailed visualization of the NIAH benchmark can be found in the Appendix \ref{appendix:niah}. The visualization results demonstrate that \method{} outperforms other KV cache compression methods.

\vspace{-10pt}
\subsection{Index Reuse}
\label{sec:index_reuse}

This section will evaluate the performance of the layer-wise index reuse approach with \method{} from the two aspects of efficiency and performance. 

\textbf{Measuring Efficiency.}
We evaluated the latency and throughput of ChunkKV compared to FullKV using LLaMA3-8B-Instruct on an A40 GPU. All experiments were conducted with reuse layer is 2, batch size set to 1 and inference was performed using Flash Attention 2, each experiment was repeated 10 times and the average latency and throughput were reported.

\begin{wraptable}{r}{0.65\textwidth}
   \centering
   \vspace{-10pt}
   \caption{Latency and throughput comparison between ChunkKV and FullKV under different input-output configurations. Percentages in parentheses indicate improvements over FullKV baseline.}
   \vspace{-5pt}
   \label{tab:efficiency}
   \resizebox{0.65\columnwidth}{!}{
   \begin{tabular}{l|cc|cc}
   \toprule
   \multirow{2}{*}{Method} & \multicolumn{2}{c|}{Sequence Length} & \multicolumn{2}{c}{Performance Metrics} \\
   \cmidrule{2-5}
   & Input & Output & Latency(s) $\downarrow$ & Throughput(T/S) $\uparrow$ \\
   \midrule
   FullKV & 4096 & 1024 & 43.60 & 105.92  \\
   ChunkKV & 4096 & 1024 & 37.52 (13.9\%) & 118.85 (12.2\%) \\
   ChunkKV\_reuse & 4096 & 1024 & \textbf{37.35} (14.3\%) & \textbf{124.09} (17.2\%) \\
   \midrule
   FullKV & 4096 & 4096 & 175.50 & 37.73 \\
   ChunkKV & 4096 & 4096 & 164.55 (6.2\%) & 40.58 (7.6\%) \\
   ChunkKV\_reuse & 4096 & 4096 & \textbf{162.85} (7.2\%) & \textbf{41.12} (9.0\%) \\
   \midrule
   FullKV & 8192 & 1024 & 46.48 & 184.08 \\
   ChunkKV & 8192 & 1024 & 37.83 (18.6\%) & 228.96 (24.4\%) \\
   ChunkKV\_reuse & 8192 & 1024 & \textbf{36.85} (20.7\%) & \textbf{232.99} (26.5\%) \\
   \midrule
   FullKV & 8192 & 4096 & 183.42 & 55.93 \\
   ChunkKV & 8192 & 4096 & 164.78 (10.2\%) & 65.14 (16.5\%) \\
   ChunkKV\_reuse & 8192 & 4096 & \textbf{162.15} (11.6\%) & \textbf{66.05} (18.1\%) \\
   \bottomrule
   \end{tabular}
   }
   \vspace{-10pt}
\end{wraptable}

The results in Table \ref{tab:efficiency} show that the layer-wise index reuse strategy (ChunkKV\_reuse) further boosts performance, achieving up to a 20.7\% reduction in latency, and throughput improvements are particularly notable for longer input sequences, with ChunkKV\_reuse delivering up to a 26.5\% improvement over FullKV. For more detailed results, please refer to Appendix \ref{appendix:efficiency_results}.


\begin{wraptable}{r}{0.45\textwidth}
   \centering
   \vspace{-12pt}
   \caption{Reusing Indexing Performance Comparison on LongBench and GSM8K. $\triangle$ indicates the performance degradation of index reuse compared to the baseline.}
   \vspace{-5pt}
   \resizebox{0.45\textwidth}{!}{
   \begin{tabular}{c|cc}
   \toprule
   \multirow{3}{*}{Model} & \multicolumn{2}{c}{\method{}} \\
   \cmidrule(lr){2-3}
   & Baseline & Index Reuse$_\triangle$  \\

   \midrule
   \multicolumn{3}{c}{LongBench} \\
   \midrule
   LLaMA-3-8B-Inst & 40.51 & 40.27$_{-0.59\%}$  \\
   Mistral-7B-Inst & 46.71 & 46.43$_{-0.59\%}$  \\
   Qwen2-7B-Inst   & 40.88 & 40.76$_{-0.29\%}$  \\
   \midrule
   \multicolumn{3}{c}{GSM8K} \\
   \midrule
   LLaMA-3-8B-Inst & 74.5 & 74.6$_{+0.13\%}$  \\
   Qwen2-7B-Inst   & 71.2 & 71.2$_{+0.00\%}$  \\
   \bottomrule
   \end{tabular}
   }
   \vspace{-10pt}
   \label{tab:reuse_ablation_mini}
\end{wraptable}
\textbf{Measuring Task Performance.}
We evaluate the effectiveness of our proposed layer-wise index reuse technique on LongBench~\citep{bai2023longbench} and GSM8K~\citep{gsm8k} benchmarks. For these experiments, we use the same configuration as our main LongBench experiments in Section~\ref{sec:long-context-overall}, with index reuse applied to consecutive layers (reuse layers = 2).

As shown in Table~\ref{tab:reuse_ablation_mini}, layer-wise index reuse maintains the performance of the models while reducing computational requirements. In LongBench, performance degradation is minimal (less than 0.6\%) for both models, while GSM8K shows neutral or slightly positive effects. This validates that semantic chunks selected by \method{} remain consistently important across adjacent transformer layers, enabling efficient computation without sacrificing accuracy. An additional analysis of different reuse depths and their impact on throughput is provided in Appendix~\ref{appendix:index_reuse}.

Overall, these findings on efficiency and performance suggest that layer-wise index reuse can be an effective technique for optimizing the efficiency-performance trade-off in KV cache compression, with the potential for model-specific tuning to maximize benefits.

\subsection{Chunk Size}
This section aims to investigate the impact of chunk size on the performance of \method{}. Different chunk sizes will lead to varying degrees of compression on the semantic information of the data. We used the same experiment setting as in LongBench and NIAH (Section \ref{sec:long-context-overall}). The chunk size is set from the range $\{3,5,10,20,30\}$ under the compression rate 10\% for LongBench and the 128 KV cache size for NIAH. For more experiments with different compression ratios, see Appendix \ref{appendix:chunk_size}.

\begin{table}[h]
   \vspace{-10pt}
   \centering
   \caption{LongBench and NIAH Results with Chunk Size Ablation}
   \resizebox{1\textwidth}{!}{
   \begin{tabular}{lcccccccc}
   \toprule
   \multirow{2}{*}{\textbf{Model}} & \multirow{2}{*}{\textbf{Full KV}} & \multicolumn{5}{c}{\textbf{Chunk KV}} & \multirow{2}{*}{\textbf{SnapKV}} & \multirow{2}{*}{\textbf{H2O}} \\
   \cmidrule(lr){3-7}
    & & \textbf{size=3} & \textbf{size=5} & \textbf{size=10} & \textbf{size=20} & \textbf{size=30} & & \\
    \midrule
    \multicolumn{9}{c}{\textbf{LongBench $\uparrow$}} \\
    \midrule
   LLaMA-3-8B-Instruct & 41.46 & 40.49 & 40.47 & \textbf{40.51} & 40.05 & 39.57 & 40.15 & 37.06 \\
   Mistral-7B-Instruct & 48.08 & 46.45 & 46.51 & \textbf{46.71} & 46.42 & 45.98 & 46.38 & 43.61 \\
   \midrule
   \multicolumn{9}{c}{\textbf{NIAH $\uparrow$}} \\
   \midrule
   LLaMA-3-8B-Instruct & 74.6 & 65.6 & 69.1 & \textbf{73.8} & 72.0 & 71.2 & 58.9 & 47.9 \\
   Mistral-7B-Instruct & 99.8 & 98.1 & 99.2 & \textbf{99.8} & 99.8 & 99.1 & 91.6 & 88.2 \\
   \bottomrule
   \end{tabular}
   }
   \vspace{-10pt}
   \label{tab:ablation_chunk_size} 
   \end{table}

As shown in Table~\ref{tab:ablation_chunk_size}, the performance remains relatively stable when the chunk size is between 5 and 20, with the best results consistently achieved at chunk size 10. When the chunk size is too small (e.g., 3), the context is fragmented, leading to a slight drop in performance. Conversely, when the chunk size is too large (for example, 30), the semantic granularity becomes too coarse, and important fine-grained information may be lost, also resulting in performance degradation. This trend is consistent across both LongBench and NIAH benchmarks, as well as across different model architectures, indicating that the optimal chunk size is not highly sensitive to the specific task or model. We also provide a line graph in Appendix~\ref{appendix:chunk_size} to visually illustrate this trend. Based on these findings, we recommend using a chunk size of 10 as a robust default for most applications. For users with specific requirements, chunk size can be further tuned, but our results suggest that moderate values (5-20) generally offer a good trade-off between semantic preservation and compression efficiency.

\subsection{Comparing with KV Quantization}
For comprehensively evaluate the effectiveness of \method{}, we performed experiments comparing \method{} with the KIVI quantization methods~\citep{liu2024kivi}. Although both approaches aim to optimize LLM inference, they operate on fundamentally different principles: Quantization reduces KV matrix precision, whereas our eviction method reduces KV matrix size. For more detail, see Appendix~\ref{appendix:kv_cache_quantization}.

From an implementation perspective, quantization methods require the full KV cache during prefilling to produce quantized representations, which are then used during decoding. In contrast, \method{} employs token removal prior to prefilling, enabling operation with a compressed cache throughout the entire inference process. This distinction creates different efficiency profiles, and each method offers unique advantages.

 Due to the fact that KIVI requires an old Python version, the ChunkKV results are not aligned with Table~\ref{tab:efficiency}. The efficiency results in Table~\ref{table:longbench_kv_cache_quantization_efficiency} reveal ChunkKV's significant advantages in latency-critical metrics. Although both methods substantially reduce cache size (ChunkKV to 10\% and KIVI-2bits to 15.63\%), ChunkKV delivers superior metrics for Time to First Token (TTFT) and Token Processing Time (TPOT). Most remarkably, ChunkKV achieves a 164.66s total generation time compared to KIVI's 226.52s at 2-bit quantization, representing a 27.3\% improvement in overall inference speed. 

\begin{table}[!ht]
   \centering
   \small
   \caption{Efficiency Results for ChunkKV and KIVI on LlaMa-3-8B-Instruct}
   \resizebox{\textwidth}{!}{

   \begin{tabular}{lcccccccc}
   \toprule
   \textbf{Configuration} & \textbf{Prompt} & \textbf{Output} & \textbf{Compression} & \textbf{Prefilling} & \textbf{Cache} & \textbf{TTFT} & \textbf{TPOT} & \textbf{Total Gen.} \\
   & \textbf{Length} & \textbf{Length} & \textbf{Ratio / nbits} & \textbf{Time(s) $\downarrow$} & \textbf{Size(GB) $\downarrow$} & \textbf{(s) $\downarrow$} & \textbf{(ms) $\downarrow$} & \textbf{Time(s) $\downarrow$} \\
   \midrule
   FullKV & 8192 & 4096 & - & 1.5621 & 1.0000 & 1.6013 & 45.9421 & 184.2934 \\
   KIVI & 8192 & 4096 & 2bits & 1.4024 & 0.1563 & 1.4325 & 54.9561 & 226.5234 \\
   KIVI & 8192 & 4096 & 4bits & 1.3916 & 0.2813 & 1.4146 & 52.0510 & 214.5634 \\
   \rowcolor{red!20} ChunkKV & 8192 & 4096 & 10\% & 1.3653 & 0.1000 & 1.3914 & 39.8702 & 164.6600 \\
   \bottomrule
   \end{tabular}
   }
   \label{table:longbench_kv_cache_quantization_efficiency}
   \end{table}

\subsection{Analysis of Hybrid Compression: Chunk-level vs. Token-level at Different Layer Depths}
\label{sec:hybrid_analysis}

A critical question raised during the review process was whether the benefits of chunk-level compression diminish in deeper Transformer layers, where semantic information becomes more abstract and diffused. To investigate this, we designed a hybrid compression model that applies different strategies at varying network depths.

\paragraph{Experimental Setup}
We created a hybrid version of the LLaMA-3-8B-Instruct model. For the first 16 layers (bottom half), we applied our chunk-based \method{}. For the final 16 layers (top half), we applied SnapKV, a state-of-the-art token-level compression method. We then compared this hybrid model's performance on the diverse LongBench benchmark against pure \method{} and pure SnapKV at identical compression ratios.

\begin{table}[h!]
\centering
\caption{Performance of the hybrid compression model on LongBench. While the hybrid model shows strengths in global understanding tasks, pure \method{} achieves the best overall performance, validating the robustness of preserving semantic chunks even in deep layers.}
\label{tab:hybrid_experiment}
\resizebox{\textwidth}{!}{%
\begin{tabular}{l|cccccc|c}
\toprule
\textbf{Method} & \textbf{Single-Doc QA} & \textbf{Multi-Doc QA} & \textbf{Summarization} & \textbf{Few-shot} & \textbf{Synthetic} & \textbf{Code} & \textbf{Avg. Score} $\uparrow$ \\
\midrule
FullKV & 32.19 & 34.59 & 24.96 & 68.48 & 36.96 & 54.41 & 41.46 \\
SnapKV (10\% Ratio) & 28.11 & 32.55 & 24.12 & 67.81 & 36.01 & 55.67 & 40.15 \\
\midrule
\rowcolor{blue!10}Hybrid Model (10\% Ratio) & 28.38 & 30.37 & \textbf{24.54} & \textbf{67.87} & 36.37 & 55.32 & 39.80 \\
\rowcolor{red!20}\method{} (10\% Ratio) & \textbf{28.50} & \textbf{33.46} & 22.20 & 67.62 & \textbf{37.47} & \textbf{58.98} & \textbf{40.51} \\
\bottomrule
\end{tabular}%
}
\end{table}

\paragraph{Analysis and Insights}
The results in Table~\ref{tab:hybrid_experiment} provide two key insights. First, the pure \method{} model achieves the highest overall average score. This empirically validates our core hypothesis: preserving local semantic integrity via chunking is a robust and effective strategy across all layers of the model, even where information is highly processed.

Second, the hybrid model reveals a fascinating, task-dependent performance trade-off.
\begin{itemize}[leftmargin=*, topsep=0pt, itemsep=2pt, parsep=0pt]
    \item \textbf{For local information retrieval tasks} (e.g., Single- and Multi-Document QA), pure \method{} is significantly superior. These tasks often require retrieving precise, intact text fragments. \method{}'s ability to preserve complete linguistic units prevents critical information from being fragmented, which is essential in these scenarios.
    \item \textbf{For global understanding tasks} (e.g., Summarization and Few-shot Learning), the hybrid model performs best. In these tasks, synthesizing information from across the entire context is key. In the deeper layers, where abstract representations are formed, the token-level SnapKV method may be more adept at retaining a broader, more diffuse set of globally important signals.
\end{itemize}
This analysis does not undermine our approach but rather enriches it, suggesting that the future of KV cache compression may lie in adaptive, task-aware strategies. However, for a general-purpose and robust solution, pure \method{} proves to be the most effective.
\section{Conclusion}
In this paper, we first indicate that current KV cache methods lack the semantic information of the data, and then we proposed a novel KV cache compression method that preserves semantic information by retaining more informative chunks. Through extensive experiments across multiple state-of-the-art LLMs (including DeepSeek-R1, LLaMA-3, Qwen2, and Mistral) and diverse benchmarks (GSM8K, LongBench, NIAH, and JailbreakV), we demonstrate that ChunkKV consistently outperforms existing methods while using only a fraction of the memory. Our comprehensive analysis shows that ChunkKV's chunk-based approach maintains crucial contextual information, leading to superior performance in complex reasoning tasks, long-context understanding, and safety evaluations. The method's effectiveness is particularly evident in challenging scenarios like many-shot GSM8K and multi-document QA tasks, where semantic coherence is crucial. Furthermore, our proposed layer-wise index reuse technique provides significant computational efficiency gains with minimal performance impact, achieving up to 20.7\% latency reduction and 26.5\% throughput improvement. These findings, supported by detailed quantitative analysis and ablation studies, establish ChunkKV as a significant advancement in KV cache compression technology, offering an effective solution for deploying LLMs in resource-constrained environments while maintaining high-quality outputs.


\section*{Acknowledge}
This work was supported by the NSFC grant 62432008; RGC RIF grant R6021-20; an RGC TRS grant T43-513/23N-2; RGC CRF grants C7004-22G, C1029-22G and C6015-23G; NSFC/RGC grant CRS\_HKUST601/24 and RGC GRF grants 16207922, 16207423 and 16203824; National Natural Science Foundation of China (Grant No.62506318); Guangdong Provincial Department of Education Project (Grant No.2024KQNCX028); Scientific Research Projects for the Higher-educational Institutions (Grant No.2024312096), Education Bureau of Guangzhou Municipality; Guangzhou-HKUST(GZ) Joint Funding Program (Grant No.2025A03J3957), Education Bureau of Guangzhou Municipality;the Guangzhou Municipal Joint Funding Project with Universities and Enterprises under Grant No. 2024A03J0616 and Guangzhou Municipality Big Data Intelligence Key Lab (2023A03J0012).

\bibliographystyle{unsrtnat} 
\bibliography{neurips_2025}


\clearpage
\onecolumn
\appendix 
\etocdepthtag.toc{mtappendix}
\etocsettagdepth{mtchapter}{none}
\etocsettagdepth{mtappendix}{subsection}
\renewcommand{\contentsname}{Appendix}
\tableofcontents 
\clearpage

\section{In-depth Analysis of ChunkKV vs. Discrete Token Methods}
\label{appendix:chunkkv_analysis}

\subsection{Quantitative Analysis}

To rigorously evaluate the effectiveness of ChunkKV compared to discrete token-based methods, we conducted systematic experiments using a LLaMA-3-8B-Instruct model. We randomly selected 100 sequences from the each sub-category of LongBench dataset and analyzed two key metrics across different model layers: KV cache L1 loss and attention cosine similarity. For each sequence, we:
1. Computed the full KV cache and attention patterns without compression as ground truth.
2. Applied ChunkKV, SnapKV, and H2O compression methods with a fixed 10\% compression ratio, and the parameters of the three methods are set the same as in Table \ref{table:longbench}.
3. Measured the differences between compressed and uncompressed versions.
\begin{figure*}[ht]
   \centering
   \includegraphics[width=1\textwidth]{./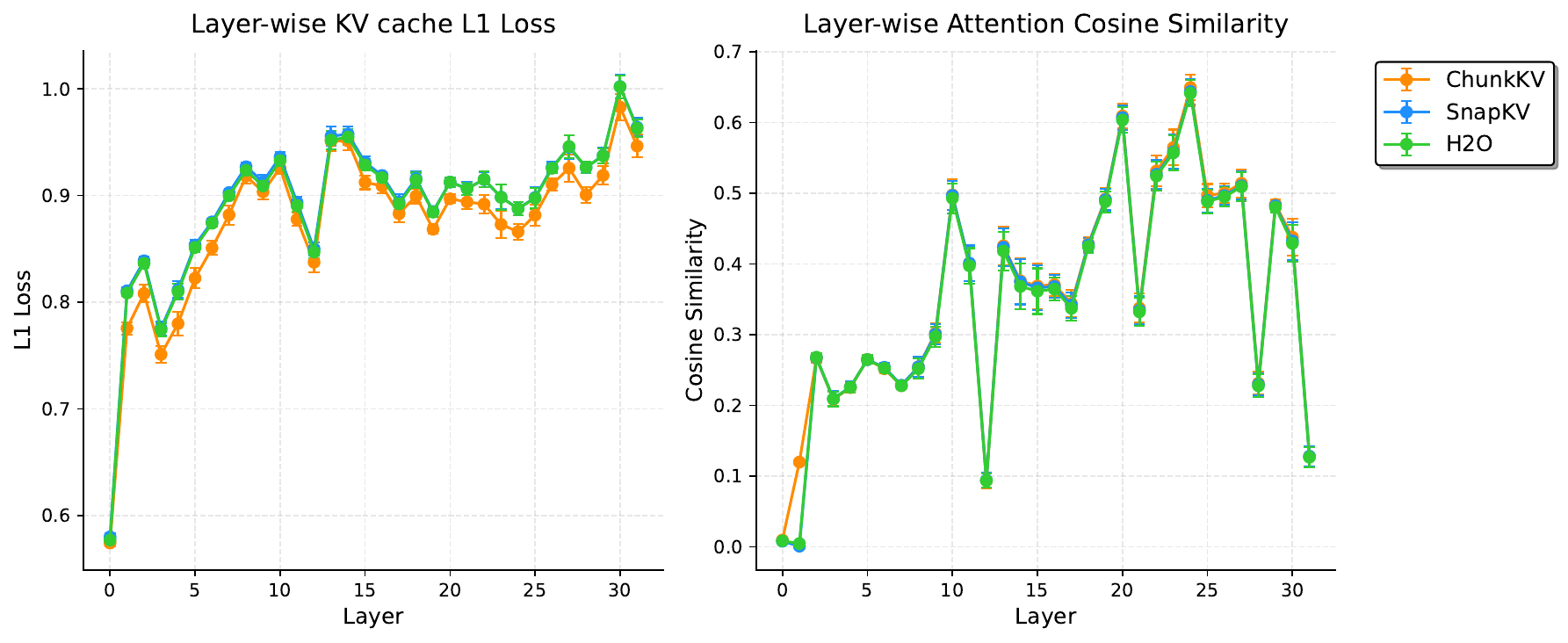}
   \caption{\centering Layer-wise comparison of L1 loss and attention cosine similarity between ChunkKV and discrete token-based methods in Single-Document QA sub-category of LongBench.}
   \label{fig:chunkkv_vs_discrete_token_quantitative}
\end{figure*}

\textbf{Results Analysis} As shown in Figure \ref{fig:chunkkv_vs_discrete_token_quantitative}, ChunkKV demonstrates superior performance across both metrics:

\begin{itemize}
\item \textbf{KV Cache L1 Loss:} ChunkKV achieves consistently lower L1 loss compared to SnapKV and H2O, particularly in the early and middle layers (layers 5-25). This indicates better preservation of the original KV cache information through the semantic chunk-based approach.

\item \textbf{Attention Cosine Similarity:} ChunkKV exhibits higher similarity scores across most layers, with notably strong performance in layers 0-5 and 20-30. This suggests better preservation of attention relationships between tokens, which is crucial for maintaining semantic understanding.
\end{itemize}

To quantify these improvements, we calculated average metrics across all layers, as shown in Table \ref{tab:quantitative_metrics_detailed}. ChunkKV achieves both the lowest L1 loss and highest attention cosine similarity, outperforming both baseline methods.

\begin{table*}[h]
   \caption{Detailed comparison of KV cache metrics across different task categories in LongBench.}
   \label{tab:quantitative_metrics_detailed}
   \centering
   \begin{tabular}{l|ccccc}
   \toprule
   \multirow{2}{*}{Method} & Single-Document & Multi-Document & \multirow{2}{*}{Summarization} & Few-shot & Synthetic \\
    & QA & QA & & Learning & \& Code \\
   \midrule
   \multicolumn{6}{c}{KV Cache L1 Loss $\downarrow$} \\
   \midrule
   \rowcolor{red!20}\textbf{ChunkKV} & \textbf{0.8741} & \textbf{0.8748} & \textbf{0.8770} & \textbf{0.8861} & \textbf{0.8726} \\
   SnapKV & 0.8921 & 0.8933 & 0.8930 & 0.8917 & 0.8938 \\
   H2O & 0.8905 & 0.8917 & 0.8913 & 0.8906 & 0.8915 \\
   \midrule
   \multicolumn{6}{c}{Attention Score Cosine Similarity $\uparrow$} \\
   \midrule
   \rowcolor{red!20}\textbf{ChunkKV} & \textbf{0.3567} & \textbf{0.3651} & \textbf{0.3841} & \textbf{0.4330} & \textbf{0.3805} \\
   SnapKV & 0.3513 & 0.3594 & 0.3771 & 0.4305 & 0.3759 \\
   H2O & 0.3491 & 0.3572 & 0.3750 & 0.4284 & 0.3740 \\
   \bottomrule
   \end{tabular}
   \end{table*}


\textbf{Significance of Results} While the improvements may appear modest in absolute terms (approximately 2\% in L1 loss and 1.5\% in cosine similarity), their practical significance is substantial. These metrics reflect the model's ability to maintain crucial semantic relationships and attention patterns, which are essential for complex reasoning tasks. The consistent improvements across different sequences demonstrate that preserving semantic chunks leads to better information retention than selecting individual tokens.

The enhanced performance is particularly evident in the middle layers of the model, which are typically responsible for higher-level semantic processing. This provides concrete evidence for why ChunkKV achieves superior performance on downstream tasks compared to discrete token-based methods.

\subsection{Hypothetical Scenario}

To provide a deeper understanding of ChunkKV's effectiveness compared to discrete token-based methods, we present a detailed analysis using a hypothetical scenario. This analysis aims to illustrate the fundamental differences between these approaches and explain why ChunkKV is more effective at preserving semantic information in long contexts.

Consider a comprehensive document that contains detailed information on various animals, including their habitats, diets, and behaviors. A user asks the question "What do pandas eat in the wild?"

Both ChunkKV and discrete token-based methods would use this question to calculate attention scores for the document. However, their approaches to selecting and retaining information differ significantly.

\subsubsection{Discrete Token-based Method}
A discrete token-based method might identify and retain individual tokens with high relevance scores, such as:
\begin{itemize}
    \item ``pandas",``eat", ``bamboo", ``wild", ``diet", ``food"
\end{itemize}

Although these tokens are relevant, they lack context and coherence. The method might discard other essential tokens that provide crucial context or complete the information.

\subsubsection{ChunkKV Method}
In contrast, ChunkKV would identify and retain semantically meaningful chunks, such as:
\begin{itemize}
    \item ``In the wild, pandas primarily eat bamboo shoots and leaves"
    \item ``Their diet consists of 99\% bamboo, but they occasionally consume other vegetation"
    \item ``Wild pandas may also eat small rodents or birds when available"
\end{itemize}

By preserving these chunks, ChunkKV maintains not only the relevant keywords but also their contextual relationships and additional pertinent information.

\subsection{Comparative Analysis}
The advantages of ChunkKV become evident when we consider how these retained pieces of information would be used in subsequent processing:

\begin{enumerate}
    \item \textbf{Contextual Understanding}: Discrete tokens require the model to reconstruct meaning from isolated words, which could lead to ambiguity. ChunkKV provides complete phrases or sentences, allowing for immediate and accurate comprehension.
    
    \item \textbf{Semantic Coherence}: ChunkKV preserves the semantic relationships within a chunk, crucial to understanding nuances such as the difference between primary and occasional food sources for pandas.
    
    \item \textbf{Information Density}: A single chunk can contain multiple relevant tokens in their proper context, potentially retaining more useful information within the same compressed cache size compared to discrete methods.
    
    \item \textbf{Reduced Ambiguity}: Discrete methods might retain the token ``eat" from various sentences about different animals. ChunkKV ensures that ``eat" is preserved specifically in the context of pandas in the wild.
    
    \item \textbf{Temporal and Logical Flow}: ChunkKV can maintain the sequence of ideas present in the original text, preserving any temporal or logical progression that may be crucial for understanding.
\end{enumerate}

\subsection{Implications for Model Performance}
This analysis suggests several key implications for model performance:

\begin{itemize}
    \item \textbf{Improved Accuracy}: By retaining contextually rich information, ChunkKV enables more accurate responses to queries, especially those requiring nuanced understanding.
    
    \item \textbf{Enhanced Long-context Processing}: Preservation of semantic chunks allows for better handling of long-range dependencies and complex reasoning tasks.
    
    \item \textbf{Reduced Computational Overhead}: Although both methods compress the KV cache, ChunkKV's approach may reduce the need for extensive context reconstruction, potentially improving inference efficiency.
    
    \item \textbf{Versatility}: The chunk-based approach is likely to be more effective across a wide range of tasks and domains as it preserves the natural structure of language.
\end{itemize}

This in-depth analysis demonstrates why ChunkKV is more effective in preserving semantic information in long contexts. By retaining coherent chunks of text, it provides language models with more contextually rich and semantically complete information, leading to improved performance in tasks that require deep understanding and accurate information retrieval from extensive documents.

\subsection{Implementation Details}
\label{appendix:chunkkv_implementation}
In our implementation of \method{}, we focus on maximizing computational efficiency to ensure practical deployment in real-world applications. In algorithm~\ref{alg:chunkkv_implementation}, we adopt several key optimization strategies to reduce both computational overhead and memory footprint during inference.

\begin{algorithm}
\caption{\method{} Implementation}
\label{alg:chunkkv_implementation}
\begin{algorithmic}[1]
\State \textbf{Input:} $\mathbf{Q} \in \mathbb{R}^{T_q \times d}$, $\mathbf{K} \in \mathbb{R}^{T_k \times d}$, $\mathbf{V} \in \mathbb{R}^{T_v \times d}$, observe window size $w$, chunk size $c$, compressed KV cache max length $L_{\text{max}}$
\State \textbf{Output:} Compressed KV cache $\mathbf{K}'$, $\mathbf{V}'$

\State $q_{\text{observe}} \gets \mathbf{Q}_{T_q - w:T_q}$ \Comment{Extract observe window queries}
\State $\mathbf{A} \gets q_{\text{observe}} \mathbf{K}^T$ \Comment{$\mathbb{R}^{w \times T_k}$ attention scores}

\State $C \gets \left\lceil \frac{T_k}{c} \right\rceil$ \Comment{Number of chunks}
\State Allocate $\text{chunk\_scores} \in \mathbb{R}^C$ to store chunk importance

\State \textbf{Vectorized Chunk Score Calculation:}
\For {$i = 0$ to $C-1$}
    \State $\text{start} \gets i \cdot c$
    \State $\text{end} \gets \min((i+1) \cdot c, T_k)$
    \State $\text{chunk\_scores}[i] \gets \sum_{j=\text{start}}^{\text{end}-1} \sum_{q=0}^{w-1} \mathbf{A}_{q,j}$ \Comment{Vectorized sum}
\EndFor

\State $k \gets \min(\left\lfloor \frac{L_{\text{max}}-w}{c} \right\rfloor, C)$ \Comment{Limit k to available chunks}
\State $\text{kept\_chunks} \gets \text{Top-K}(\text{chunk\_scores}, k)$ \Comment{Indices of top-k chunks}

\State \textbf{Build selection mask:}
\State $\text{mask} \gets \text{zeros}(T_k)$ \Comment{Binary mask for token selection}
\For {each $i$ in $\text{kept\_chunks}$}
    \State $\text{start} \gets i \cdot c$
    \State $\text{end} \gets \min((i+1) \cdot c, T_k)$
    \State $\text{mask}[\text{start}:\text{end}] \gets 1$ \Comment{Mark entire chunk as kept}
\EndFor

\State \textbf{Ensure recent context is preserved:}
\State $\text{mask}[T_k-w:T_k] \gets 1$ \Comment{Always keep most recent tokens}

\State \textbf{Efficient compression:}
\State $\text{indices} \gets \text{nonzero}(\text{mask})$ \Comment{Get indices of tokens to keep}
\State $\mathbf{K}' \gets \text{index\_select}(\mathbf{K}, \text{indices})$
\State $\mathbf{V}' \gets \text{index\_select}(\mathbf{V}, \text{indices})$

\State \textbf{Return} $\mathbf{K}'$, $\mathbf{V}'$
\end{algorithmic}
\end{algorithm}

The optimized implementation features several key improvements:

\textbf{Vectorized Operations.} We leverage batch matrix operations for attention score calculation instead of explicit loops, significantly reducing computational overhead. By computing chunk scores through optimized tensor operations (lines 7-11), we achieve substantial acceleration compared to token-by-token processing.

\textbf{Memory Optimization.} Our implementation employs a binary mask approach (lines 15-20) that avoids redundant memory allocations and reduces fragmentation. This single-pass token selection strategy consolidates both chunk preservation and recent window retention operations, leading to more efficient memory usage during inference.

\textbf{Boundary Handling.} We incorporate robust boundary checks throughout the algorithm (lines 10, 12, 18) to handle edge cases such as incomplete chunks or limited compressed length. This ensures stable performance across variable input lengths without requiring special case handling.

\textbf{Single-pass Compression.} Rather than performing multiple concatenation operations, our implementation builds a comprehensive selection mask that includes both semantically important chunks and recent context tokens. This approach (lines 21-25) minimizes memory copying operations and reduces computational overhead.

Our PyTorch implementation further extends these optimizations by leveraging GPU acceleration for all vector and matrix operations. The integration of CUDA kernels for critical operations like attention score calculation and top-k selection enables real-time performance even with long context windows.

\section{Additional Experiments}

\subsection{Layer-Wise Index Reuse}

\subsubsection{Efficiency Analysis}
\label{appendix:index_reuse_efficiency}
The layer-wise index reuse method significantly reduces the computational complexity of \method{}. Without index reuse, \method{} would be applied to all $N_{\text{layers}}$ layers, resulting in a total compression time of $N_{\text{layers}} \cdot T_{\text{compress}}$, where $T_{\text{compress}}$ is the time taken to compress one layer. With index reuse, \method{} is only applied to $\frac{N_{\text{layers}}}{N_{\text{reuse}}}$ layers, reducing the total time to $\frac{N_{\text{layers}}}{N_{\text{reuse}}} \cdot T_{\text{compress}} + (N_{\text{layers}} - \frac{N_{\text{layers}}}{N_{\text{reuse}}}) \cdot T_{\text{select}}$, where $T_{\text{select}}$ is the time taken to select indices, which is typically much smaller than $T_{\text{compress}}$. This results in a theoretical speedup factor of:

\[
\text{Speedup} = \frac{N_{\text{layers}} \cdot T_{\text{compress}}}{\frac{N_{\text{layers}}}{N_{\text{reuse}}} \cdot T_{\text{compress}} + (N_{\text{layers}} - \frac{N_{\text{layers}}}{N_{\text{reuse}}}) \cdot T_{\text{select}}}
\]

Assuming $T_{\text{select}}$ is negligible compared to $T_{\text{compress}}$, this simplifies to approximately $N_{\text{reuse}}$. In practice, the actual speedup may vary depending on the specific implementation and hardware, but it can still lead to substantial time savings, especially for models with a large number of layers.

\subsubsection{Index Reuse Performance}
\label{appendix:index_reuse}
Figure~\ref{fig:index_reuse_performance_gsm8k} and Table \ref{tab:reuse_GSM8K} illustrates the performance of \method{} with varying index reuse layers on the GSM8K benchmark. Figure~\ref{fig:index_reuse_performance_longbench} and Table \ref{tab:reuse_longbench} illustrates the performance of \method{} with varying index reuse layers on the LongBench benchmark. The experiment reveals that math problems are more sensitive to index reuse layers compared to LongBench. Both LLaMA3-8B-Instruct and Qwen2-7B-Instruct exhibit significant performance degradation, with LLaMA3-8B-Instruct experiencing a steeper decline after two layers of index reuse than Qwen2-7B-Instruct. This suggests that the Qwen2-7B-Instruct model may be more robust to index reuse.

\begin{figure}[!ht]
   \centering
   \includegraphics[width=0.7\textwidth]{./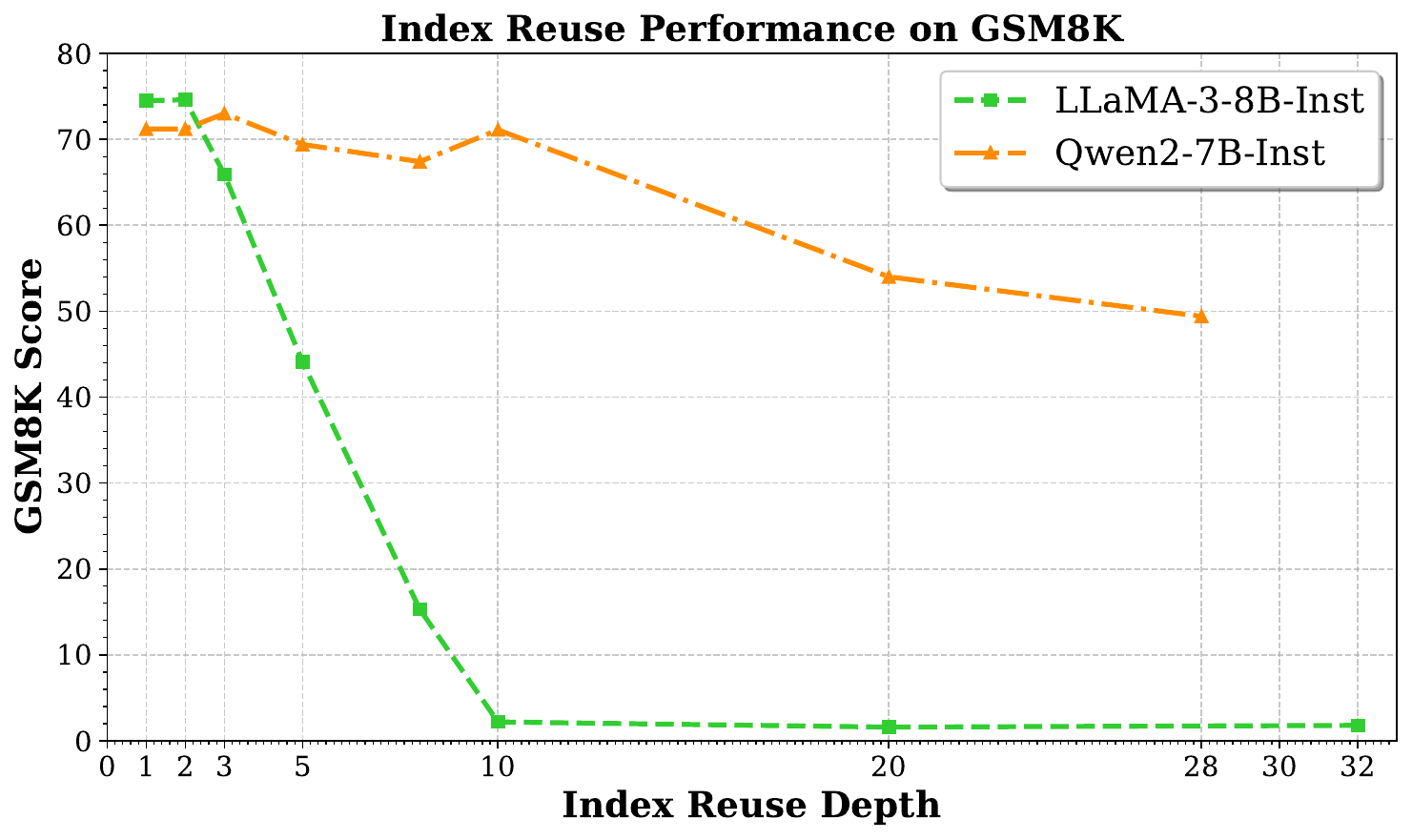}
   \caption{\centering GSM8K Performance Comparison with different index reuse layers}
   \label{fig:index_reuse_performance_gsm8k}
\end{figure}

\begin{table}[!ht]
   \centering
   \caption{Reusing Indexing Performance Comparison on GSM8K}
   \begin{tabular}{c|cccccccc}
   \toprule
   \multirow{3}{*}{Model} & \multicolumn{8}{c}{Number of Index Reuse Layers} \\
   \cmidrule(lr){2-9}
   & 1 & 2 & 3 & 5 & 8 & 10 & 20 & 28/32 \\
   \midrule
   LLaMA-3-8B-Instruct & 74.5 & 74.6 & 65.9 & 44.1 & 15.3 & 2.20 & 1.60 & 1.80 \\
   Qwen2-7B-Instruct   & 71.2 & 71.2 & 73.0 & 69.4 & 67.4 & 71.1 & 54.0 & 49.4 \\
   \bottomrule
   \end{tabular}
   \label{tab:reuse_GSM8K}
\end{table}

\begin{figure}[!ht]
   \centering
   \includegraphics[width=0.8\textwidth]{./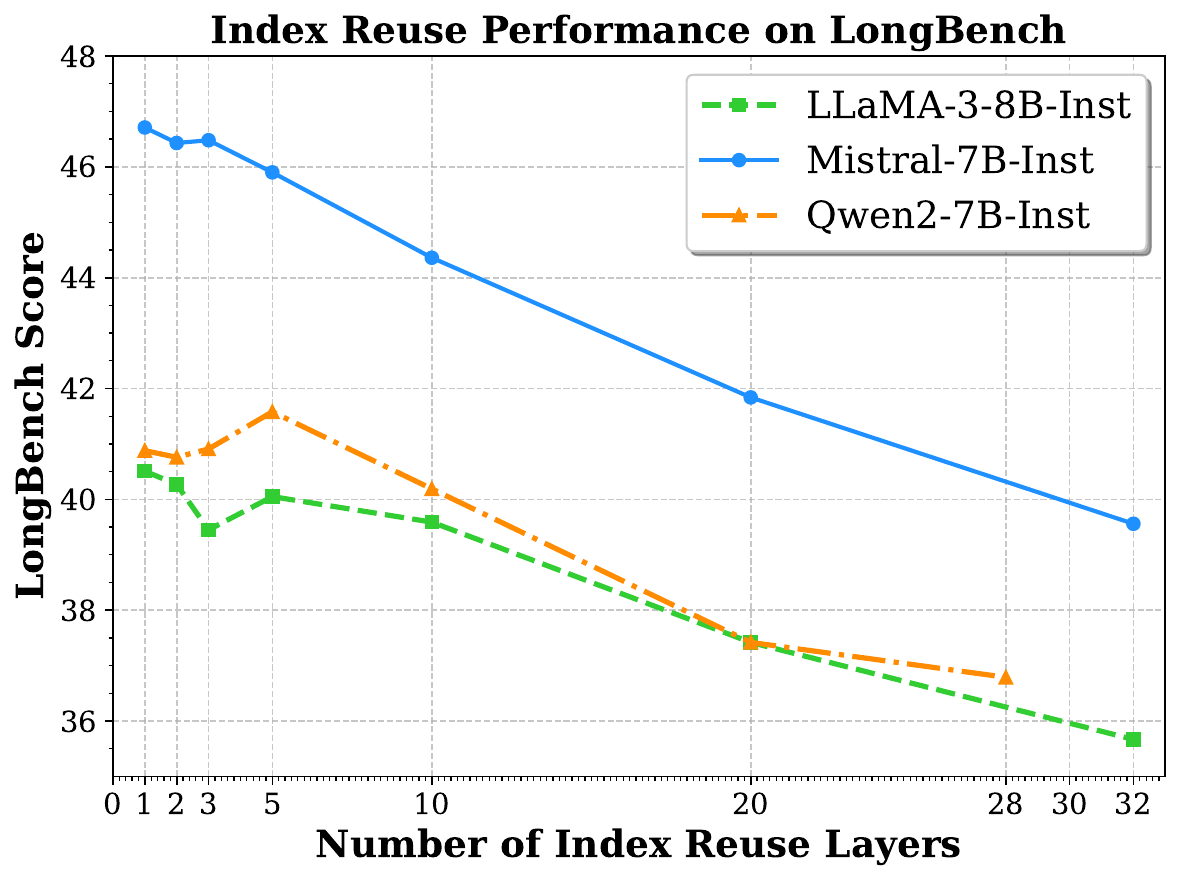}
   \caption{ Comparison with different index reuse layers on LongBench.}
   \label{fig:index_reuse_performance_longbench}
   \vspace{-10pt}
\end{figure}

\begin{table}[!ht]
   \centering
   \caption{Reusing Indexing Performance Comparison on LongBench}
   \begin{tabular}{c|ccccccc}
   \toprule
   \multirow{3}{*}{Model} & \multicolumn{7}{c}{Number of Index Reuse Layers} \\
   \cmidrule(lr){2-8}
   & 1 & 2 & 3 & 5  & 10 & 20 & 28/32 \\
   \midrule
   LLaMA-3-8B-Instruct & 40.51 & 40.27 & 39.45 & 40.05 & 39.59 & 37.42 & 35.67 \\
   Mistral-7B-Instruct & 46.71 & 46.43 & 46.48 & 45.90 & 44.36 & 41.84 & 39.56 \\
   Qwen2-7B-Instruct   & 40.88 & 40.76 & 40.91 & 41.58 & 40.19 & 37.42 & 36.79 \\
   \bottomrule
   \end{tabular}
   \label{tab:reuse_longbench}
\end{table}

\subsubsection{Layer-Wise Index Similarity}
\label{appendix:index_reuse_similarity}

This section details the experiment of layer-wise index reuse similarity described in Section \ref{sec:layer_wise_index_reuse}. The inference prompt is randomly selected from the LongBench benchmark, and the preserved indices for H2O, SnapKV, and \method{} are saved in the log file. For multi-head attention, only the indices of the first head are saved. PyramidKV, which has varying preserved index sizes across different layers, is not applicable for this experiment. Then we calculate the Jaccard similarity of the preserved indices of adjacent layers for different models. Table \ref{tab:app_jaccard_similarity_models} shows the Jaccard similarity of the preserved indices of adjacent layers for different models.

\begin{table}[!ht]
   \centering
   \caption{Retained KV Cache Indices Similarity of Adjacent Layers for Different Models.}
   \resizebox{0.8\textwidth}{!}{
   \begin{tabular}{l|ccc}
   \toprule
   \textbf{Method} & \textbf{H2O} & \textbf{SnapKV} & \textbf{\method{}}  \\
   \midrule
   LLaMA-3-8B-Instruct     & 25.31\%& 27.95\% & \textbf{57.74\%} \\
   Qwen2-7B-Instruct        & 14.91\%& 16.50\% & \textbf{44.26\%} \\
   Mistral-7B-Instruct      & 15.15\% & 15.78\% & \textbf{52.16\%} \\
   \bottomrule
   \end{tabular}
   }
   \label{tab:app_jaccard_similarity_models}
\end{table}

Figures \ref{fig:app_index_reuse_heatmap_llama_h2o}-\ref{fig:app_index_reuse_heatmap_llama_chunkkv} (LLaMA-3-8B-Instruct), 
\ref{fig:app_index_reuse_heatmap_mistral_h2o}-\ref{fig:app_index_reuse_heatmap_mistral_chunkkv} (Mistral-7B-Instruct), and
\ref{fig:app_index_reuse_heatmap_qwen_h2o}-\ref{fig:app_index_reuse_heatmap_qwen_chunkkv} (Qwen2-7B-Instruct) display the heatmaps of layer-wise indices similarity of the preserved KV cache indices by H2O, SnapKV and \method{} on different models. 
The pattern of the layer-wise indices similarity heatmap is consistent across different models, aligning with our findings in Section \ref{sec:layer_wise_index_reuse}.
\clearpage
\begin{figure*}[!ht]
   \centering
      \includegraphics[width=0.8\textwidth]{./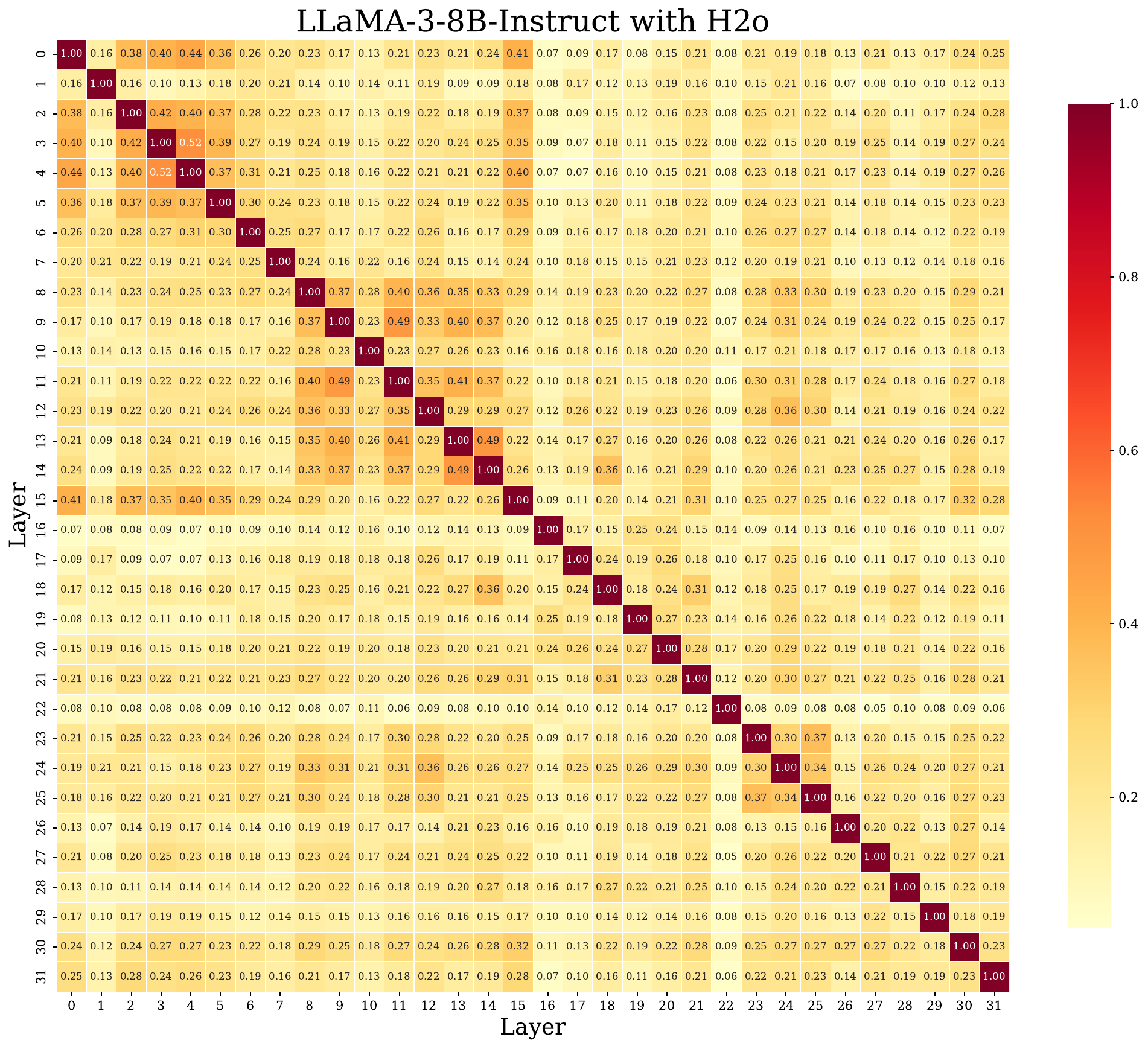}
      \caption{ Layer-wise similarity heatmaps of the preserved KV cache indices by H2O on LLaMA-3-8B-Instruct}
      \label{fig:app_index_reuse_heatmap_llama_h2o}
\end{figure*}

\begin{figure*}[!ht]
   \centering
   \includegraphics[width=0.8\textwidth]{./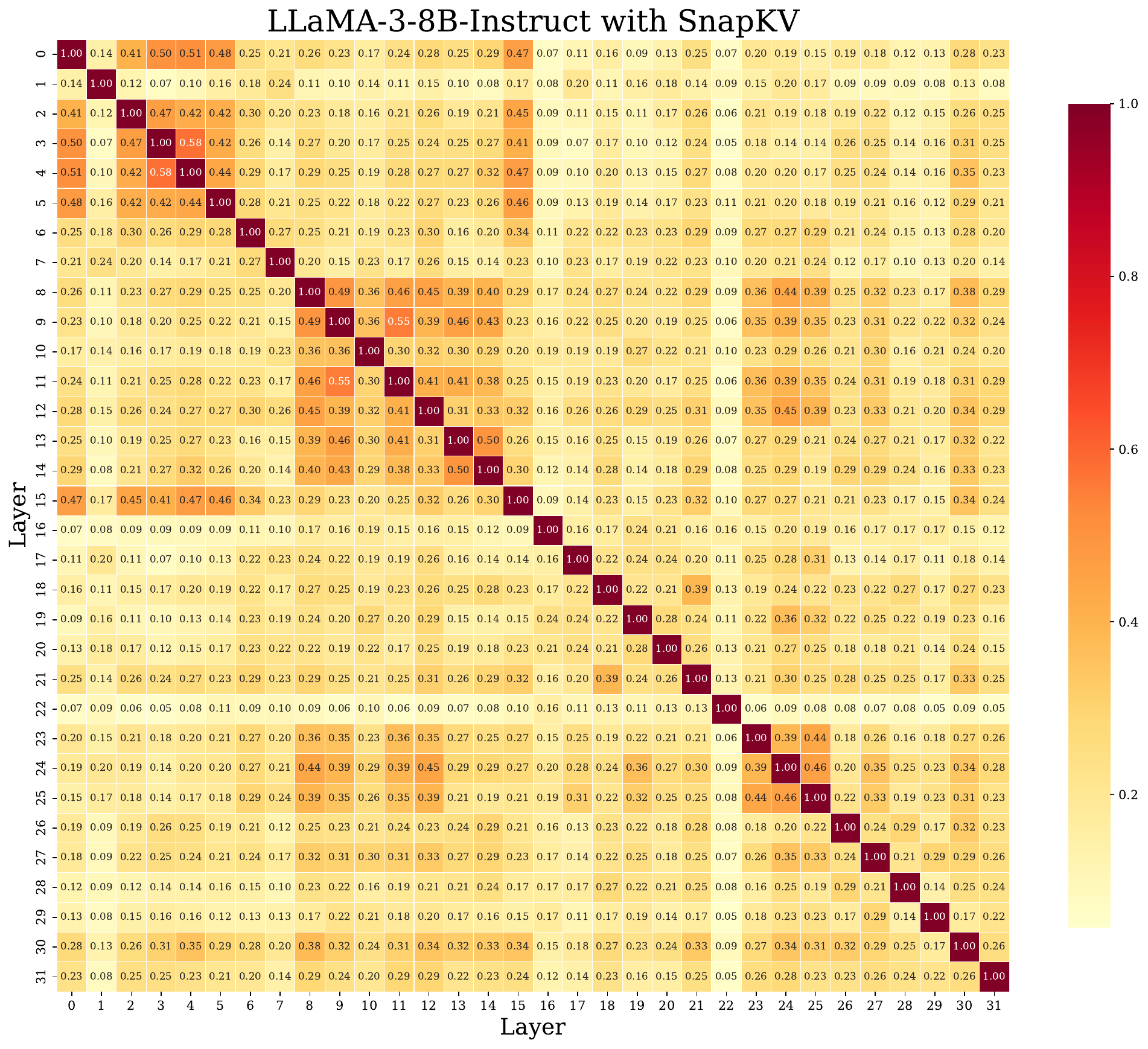}
   \caption{Layer-wise similarity heatmaps of the preserved KV cache indices by SnapKV on LLaMA-3-8B-Instruct}
   \label{fig:app_index_reuse_heatmap_llama_snapkv}   
\end{figure*}

\begin{figure*}[!ht]
   \centering
   \includegraphics[width=0.8\textwidth]{./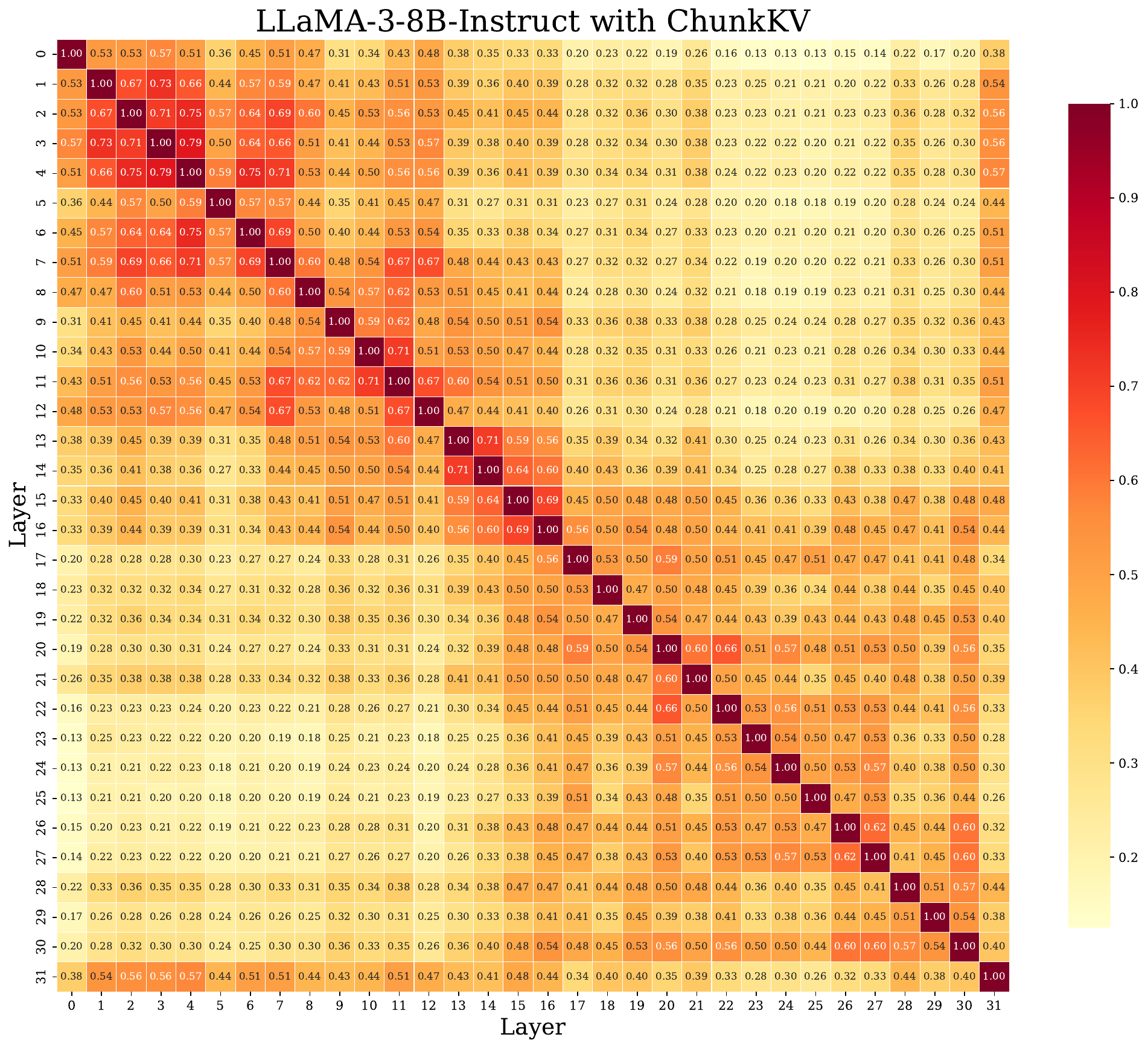}
   \caption{Layer-wise similarity heatmaps of the preserved KV cache indices by ChunkKV on LLaMA-3-8B-Instruct}
   \label{fig:app_index_reuse_heatmap_llama_chunkkv}   
\end{figure*}

\begin{figure*}[!ht]
   \centering
      \includegraphics[width=0.8\textwidth]{./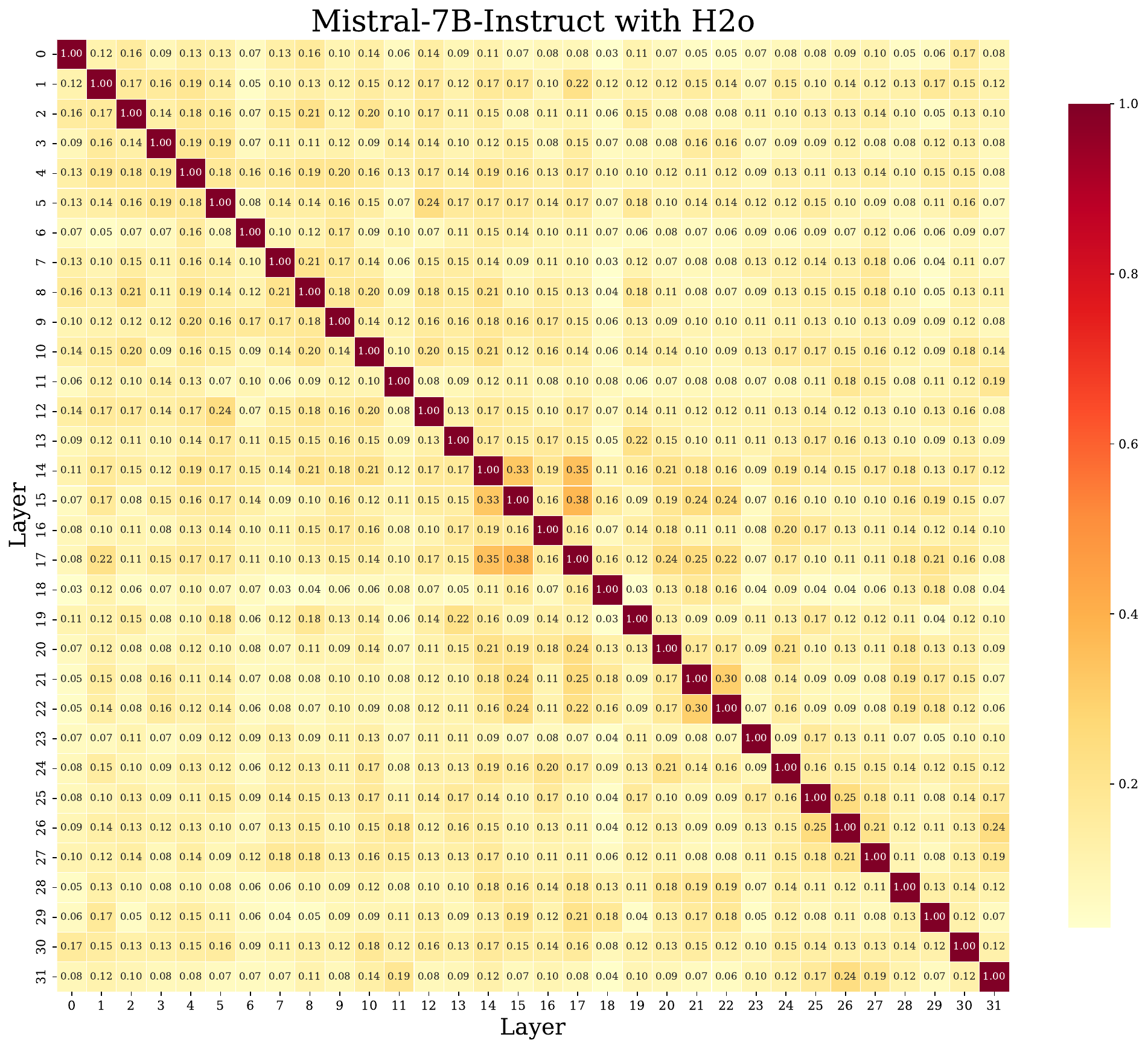}
      \caption{ Layer-wise similarity heatmaps of the preserved KV cache indices by H2O on Mistral-7B-Instruct}
      \label{fig:app_index_reuse_heatmap_mistral_h2o}
\end{figure*}

\begin{figure*}[!ht]
   \centering
   \includegraphics[width=0.8\textwidth]{./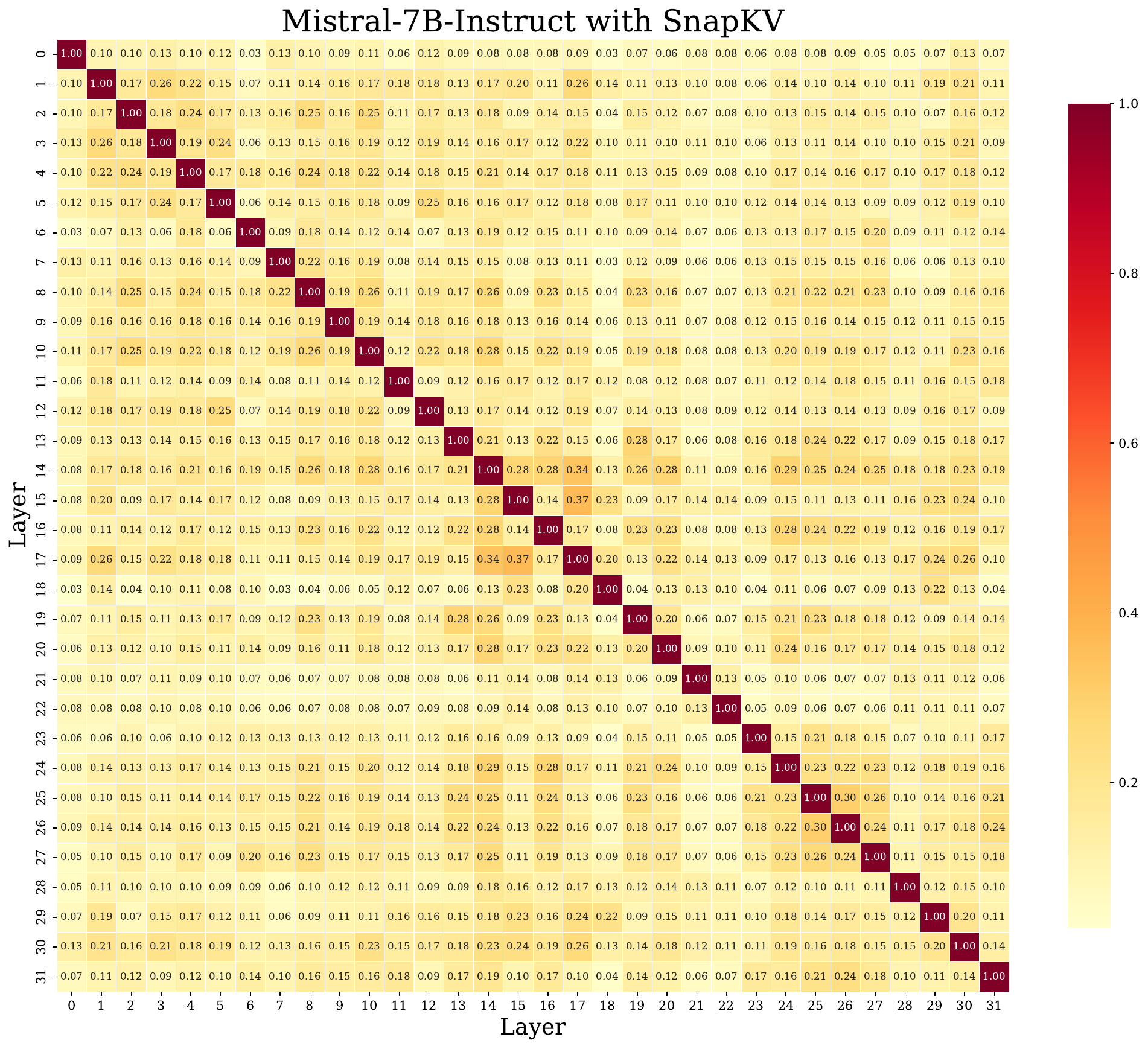}
   \caption{ Layer-wise similarity heatmaps of the preserved KV cache indices  by SnapKV on Mistral-7B-Instruct}
   \label{fig:app_index_reuse_heatmap_mistral_snapkv}   
\end{figure*}

\begin{figure*}[!ht]
   \centering
   \includegraphics[width=0.8\textwidth]{./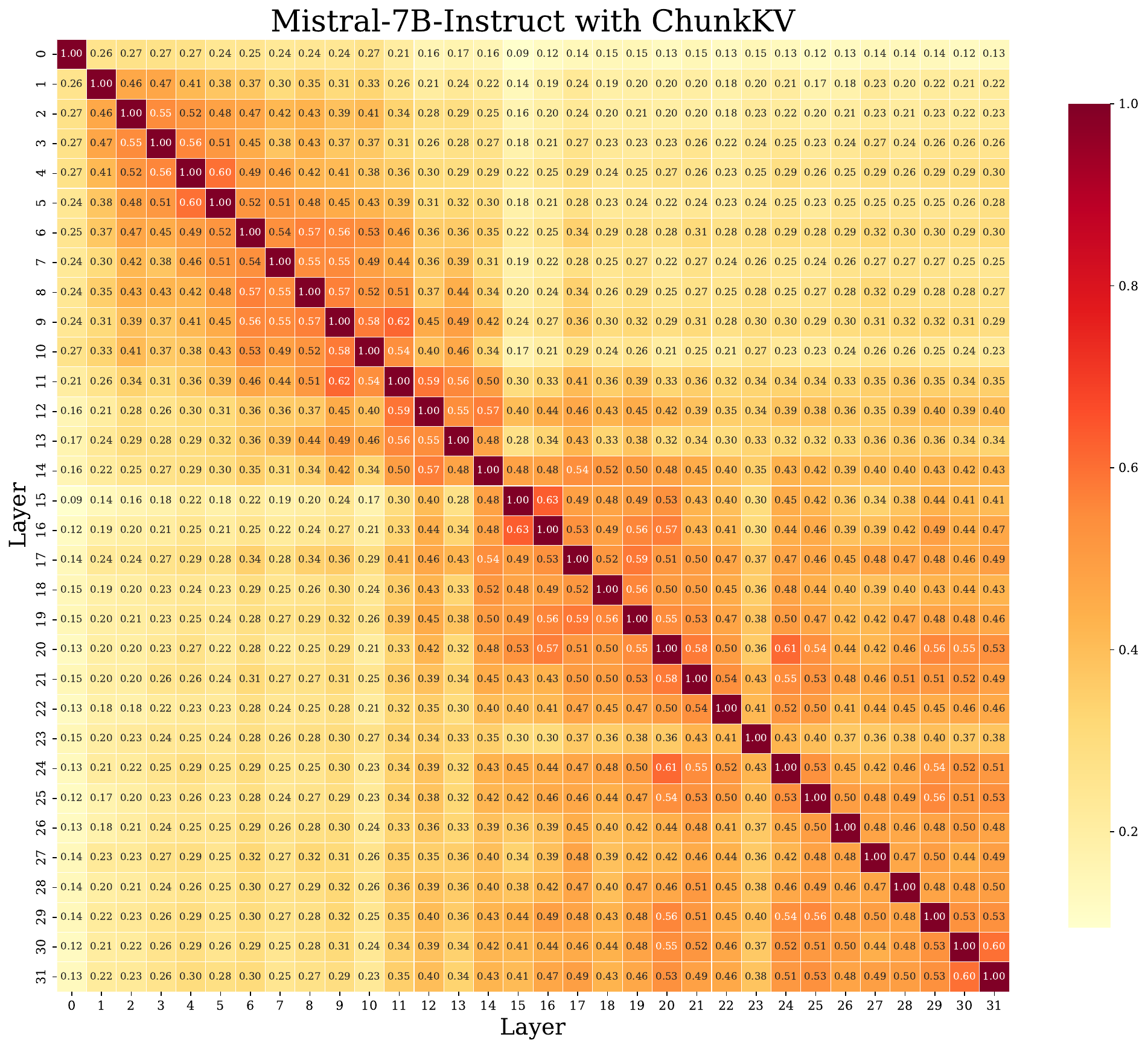}
   \caption{ Layer-wise similarity heatmaps of the preserved KV cache indices  by ChunkKV on Mistral-7B-Instruct}
   \label{fig:app_index_reuse_heatmap_mistral_chunkkv}   
\end{figure*}

\begin{figure*}[!ht]
   \centering
      \includegraphics[width=0.8\textwidth]{./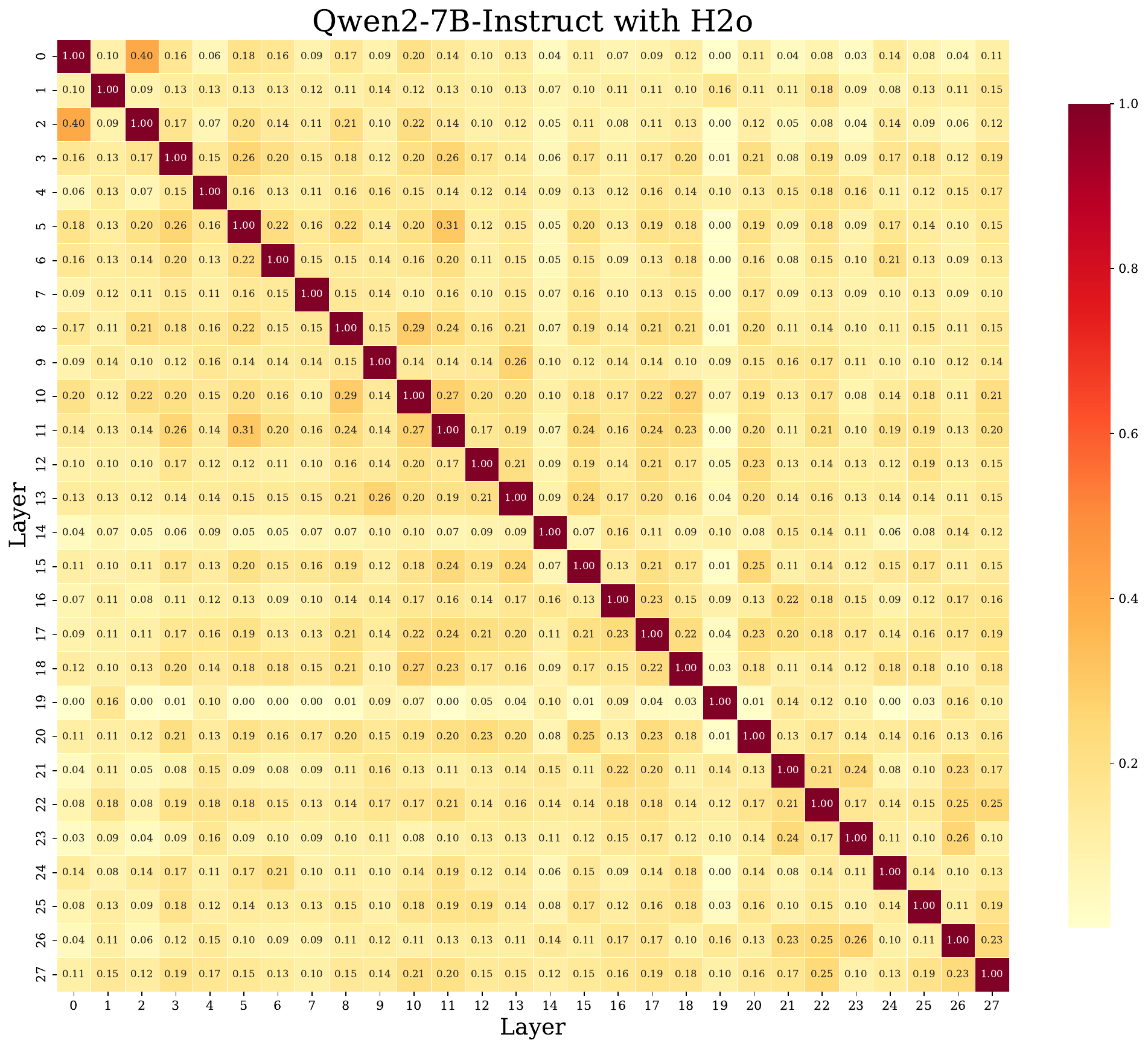}
      \caption{ Layer-wise similarity heatmaps of the preserved KV cache indices  by H2O on Qwen2-7B-Instruct}
      \label{fig:app_index_reuse_heatmap_qwen_h2o}
\end{figure*}

\begin{figure*}[!ht]
   \centering
   \includegraphics[width=0.8\textwidth]{./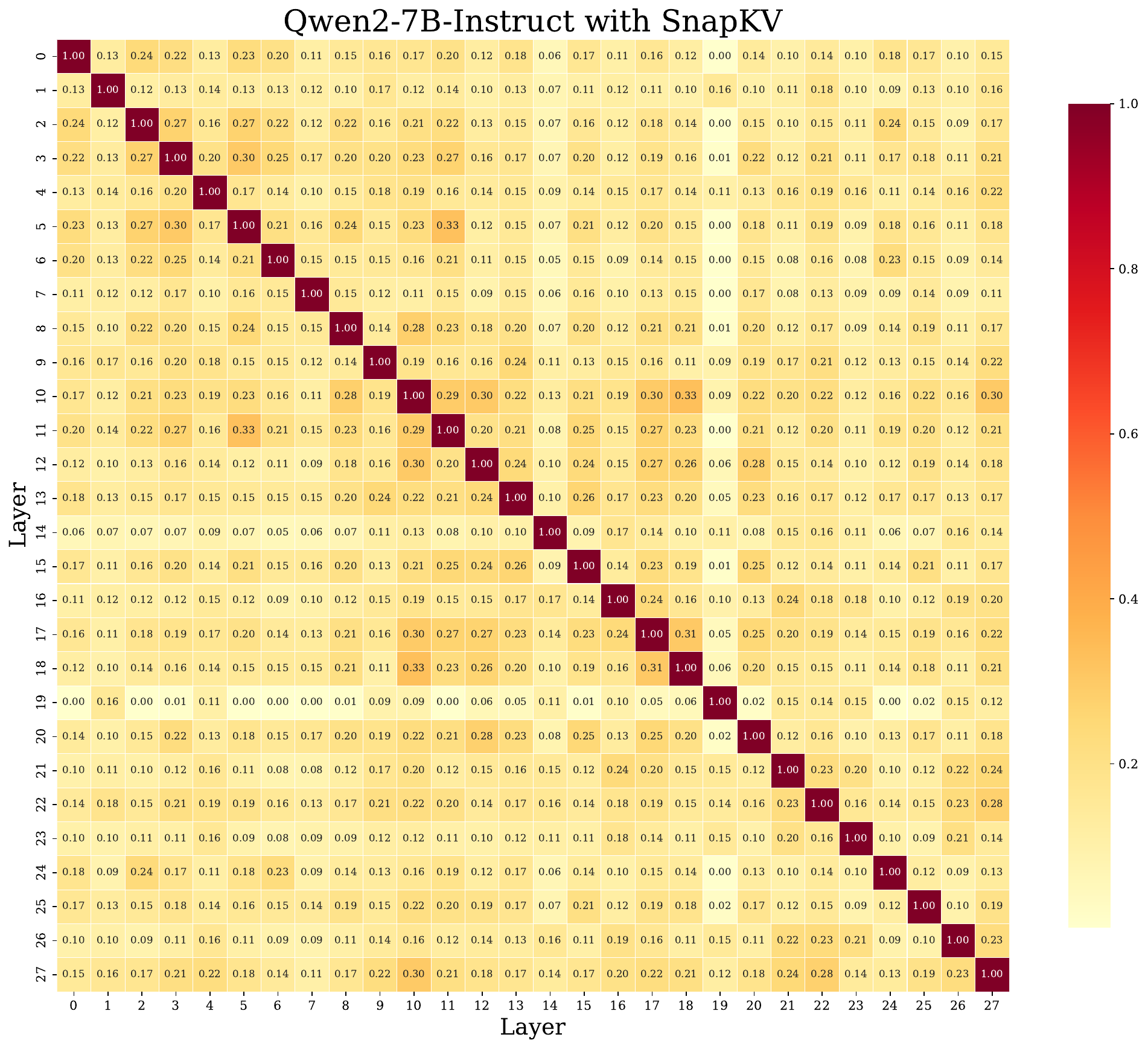}
   \caption{ Layer-wise similarity heatmaps of the preserved KV cache indices  by SnapKV on Qwen2-7B-Instruct}
   \label{fig:app_index_reuse_heatmap_qwen_snapkv}   
\end{figure*}

\begin{figure*}[!ht]
   \centering
   \includegraphics[width=0.8\textwidth]{./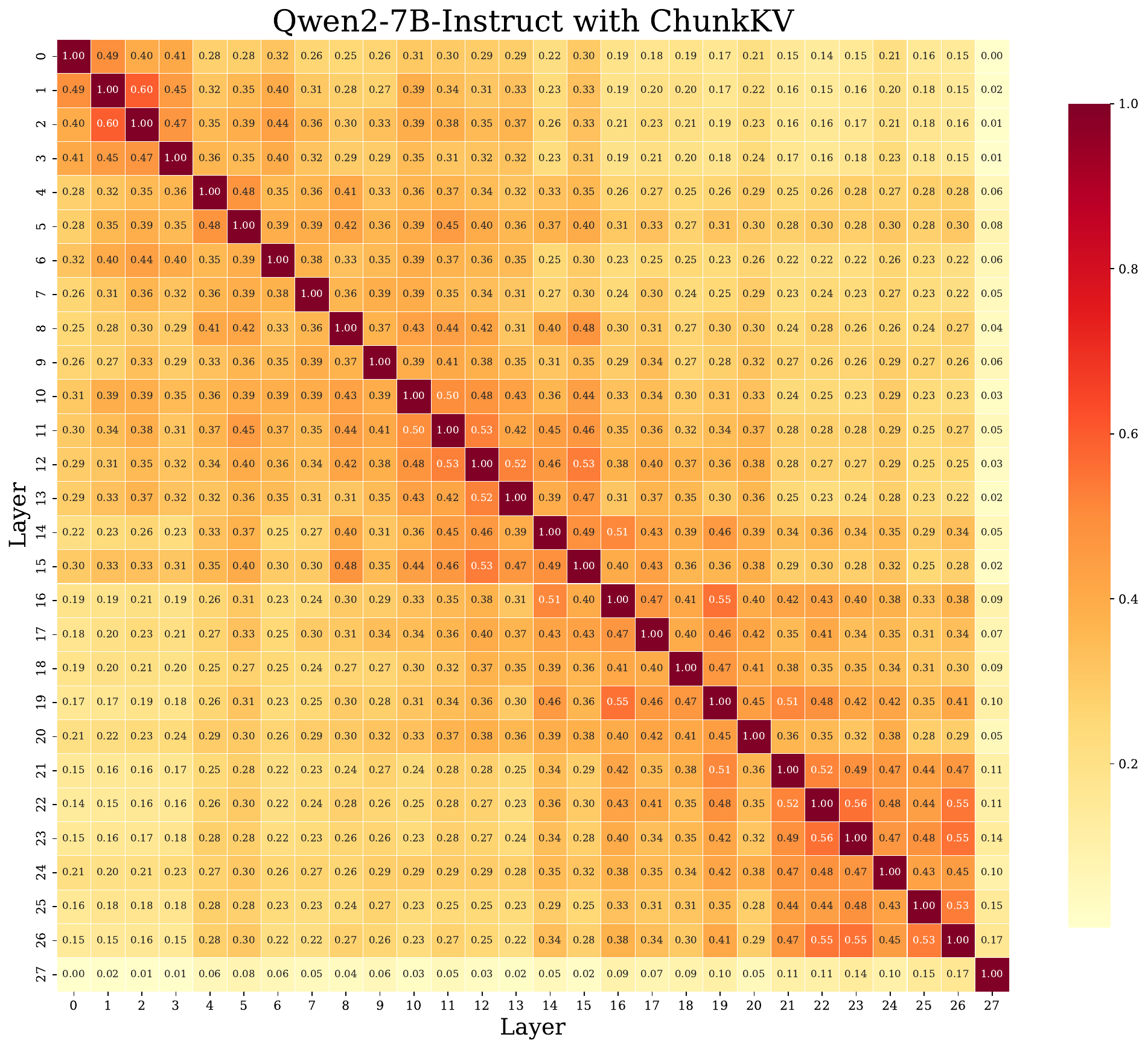}
   \caption{ Layer-wise similarity heatmaps of the preserved KV cache indices  by ChunkKV on Qwen2-7B-Instruct}
   \label{fig:app_index_reuse_heatmap_qwen_chunkkv}   
\end{figure*}

\subsection{LongBench}
\label{appendix:longbench}

The Table \ref{table:longbench} shows the average performance of KV cache compression methods in the LongBench English subtask categories. The \method{} achieves the best performance on the overall average, and the Multi-Document QA category, which supports that chunk method is more effective for semantic preservation.
\begin{table*}[!ht]
   \caption{Comprehensive performance comparison of KV cache compression methods across LongBench English subtasks. Results are shown for various models and tasks, highlighting the effectiveness of different compression techniques.}
   \resizebox{\textwidth}{!}{
   \centering
   \begin{tabular}{l|c@{\hspace{1.5pt}}c@{\hspace{3pt}}c@{\hspace{4pt}}c@{\hspace{0pt}}c@{\hspace{0pt}}c@{\hspace{0pt}}c@{\hspace{0pt}}c@{\hspace{0pt}}c@{\hspace{0pt}}c@{\hspace{4pt}}c@{\hspace{0pt}}c@{\hspace{0pt}}c@{\hspace{2pt}}c@{\hspace{7pt}}c@{\hspace{7pt}}c|c}
   \specialrule{1pt}{0pt}{2pt}
   \multirow{5}{*}{Method}  & \multicolumn{3}{c}{Single-Document QA} & \multicolumn{3}{c}{Multi-Document QA}& \multicolumn{3}{c}{Summarization}& \multicolumn{3}{c}{Few-shot Learning}& \multicolumn{2}{c}{Synthetic} & \multicolumn{2}{c}{Code} & \multirow{6}{*}{\textbf{Avg. $\uparrow$} } \\
   \cmidrule(lr){2-4}\cmidrule(lr){5-7}\cmidrule(lr){8-10}\cmidrule(lr){11-13}\cmidrule(lr){14-15}\cmidrule(lr){16-17}
   & \rotatebox[origin=c]{30}{NrtvQA} & \rotatebox[origin=c]{30}{Qasper} & \rotatebox[origin=c]{30}{MF-en} & \rotatebox[origin=c]{30}{HotpotQA} & \rotatebox[origin=c]{30}{2WikiMQA} & \rotatebox[origin=c]{30}{Musique} & \rotatebox[origin=c]{30}{GovReport} & \rotatebox[origin=c]{30}{QMSum} & \rotatebox[origin=c]{30}{MultiNews} & \rotatebox[origin=c]{30}{TREC} & \rotatebox[origin=c]{30}{TriviaQA} & \rotatebox[origin=c]{30}{SAMSum} & \rotatebox[origin=c]{30}{PCount} & \rotatebox[origin=c]{30}{PRe} & \rotatebox[origin=c]{30}{Lcc} & \rotatebox[origin=c]{30}{RB-P} & \\
   \cmidrule(lr){1-17}
   Avg len &18,409&3,619&4,559&9,151&4,887&11,214&8,734&10,614&2,113&5,177&8,209&6,258&11,141&9,289&1,235&4,206& \\
   
   \midrule
   \multicolumn{18}{c}{LlaMa-3-8B-Instruct, KV Size = Full} \\
   \arrayrulecolor{black}\midrule
   FullKV &25.70 & 29.75 & 41.12 & 45.55 & 35.87 & 22.35 & 25.63 & 23.03 & 26.21 & 73.00 & 90.56 & 41.88 & 4.67 & 69.25 & 58.05 & 50.77 & 41.46 \\
   
   \arrayrulecolor{black}\midrule
   \multicolumn{18}{c}{LlaMa-3-8B-Instruct, KV Size Compression Ratio = $10\%$} \\
   \arrayrulecolor{black}\midrule
   StreamingLLM &20.62 &13.09 &22.10 &36.31 &28.01 &15.61 &21.47 &21.05 &19.39 &62.00 &84.18 &40.27 &4.62 &69.10 &58.84 &55.26 & 35.74\\
   H2O &24.80 &  17.32 &31.80 &40.84 &33.28 &18.90 &22.29 &22.29 &21.82 &40.00 &90.51 &40.55 &5.79 &\textbf{69.50} &58.04 &55.26 & 37.06 \\
   SnapKV & 25.08 & 22.02 & \textbf{37.95} & 43.36 & 35.08 & 20.29 & 22.94 & 22.64 & 21.37 & 71.00 & 90.47 & 40.15 & 5.66 & 69.25 & 58.69 & 56.50 & 40.15 \\
   PyramidKV &\textbf{25.58} &20.77 &35.85 &\textbf{43.80} &33.03 &\textbf{21.45} &\textbf{23.68} &22.26 &\textbf{21.85} &71.50 &90.47 &\textbf{41.66} &5.84 &69.25 &58.52 &55.91 & 40.08 \\
   \rowcolor{red!20}\textbf{\method{}} & 
   24.89 & 
   \textbf{22.96} & 
   37.64 & 
   43.27 & 
   \textbf{36.45} & 
   20.65 & 
   22.80 & 
   \textbf{22.97} & 
   20.82 & 
   \textbf{71.50} & 
   \textbf{90.52} & 
   40.83 & 
   \textbf{5.93} & 
   69.00 & 
   \textbf{60.49} & 
   \textbf{57.48} & 
   \textbf{40.51} \\
   \arrayrulecolor{black}\midrule

   \multicolumn{18}{c}{LlaMa-3-8B-Instruct, KV Size Compression Ratio = $20\%$} \\
   \arrayrulecolor{black}\midrule
   StreamingLLM & 23.35 &18.97 &32.94 &42.39 &29.37 &18.76 &\textbf{25.78} &21.92 &\textbf{25.16} &71.00 &88.85 &40.82 &5.04 &69.00 &56.46 &51.12 & 38.80\\
   H2O & 25.60  &21.88 &35.36 &42.06 &32.68 &19.72 &23.54 &22.77 &22.72 &45.50 &\textbf{90.57} &\textbf{41.67} &\textbf{5.51} &69.25 &54.97 & 50.95 & 37.79 \\
   SnapKV & 25.50 &25.95 &38.43 &44.12 &35.38 &20.49 &24.85 &23.36 &23.51 &\textbf{72.50} &90.52 &40.91 &5.23 &69.25 &56.74 &51.75 & 40.53 \\
   PyramidKV &25.36 &26.88 &37.99 &44.21 &\textbf{35.65} &\textbf{21.43} &25.52 &\textbf{23.43} &23.47 &72.00 &90.56 &41.45 &5.26 &\textbf{69.50} &56.55 &50.93 & 40.63 \\
   \rowcolor{red!20}\textbf{\method{}} & \textbf{26.13} & \textbf{28.43} & \textbf{38.59} & \textbf{44.46} & 34.13 & 21.06 & 24.72 & 23.11 & 22.91 & 71.50 & 90.56 & 41.51 & 5.09 & 69.00 & \textbf{58.17} & \textbf{52.51} & \textbf{40.74}  \\

   \arrayrulecolor{black}\midrule

   \multicolumn{18}{c}{LlaMa-3-8B-Instruct, KV Size Compression Ratio = $30\%$} \\
   \arrayrulecolor{black}\midrule
   StreamingLLM & 24.49 & 22.53 & 35.30 & \textbf{44.33} & 32.81 & 19.00 & \textbf{27.12} & 22.19 & \textbf{25.93} & 72.50 & 89.84 & 41.75 & \textbf{5.41} & 69.00 & 60.40 & 55.13 & 40.48\\
   H2O & 25.87 & 23.03 & 37.06 &43.71 &33.68 &20.93 &24.56 &23.14 &23.58 &50.50 & \textbf{90.77} &41.96 &4.91 &69.25 &59.38 &55.39 & 39.23 \\
   SnapKV & 25.15 & 28.75 & \textbf{39.28} & 43.57 & 36.16 & 21.58 & 25.56 & \textbf{23.19} & 24.30 & \textbf{73.00} & 90.52 & 41.70 & 4.96 & 69.25 & 60.27 & 55.74 & 41.43 \\
   PyramidKV &25.42 &27.91 &38.81 &44.15 &\textbf{36.28} & \textbf{21.72} &26.50 &23.10 &24.28 &72.00 &90.56 &41.87 &4.67 &\textbf{69.50} &60.09 &55.19 & 41.37 \\
   \rowcolor{red!20}\textbf{\method{}} & \textbf{25.88} & \textbf{29.58} & 38.99 & 43.94 & 34.16 & 21.70 & 26.50 & 23.15 & 23.95 & 72.00 & 90.56 & \textbf{42.47} & 5.34 & 69.25 & \textbf{61.68} & \textbf{56.35} & \textbf{41.59}  \\

   \arrayrulecolor{black}\midrule
   \multicolumn{18}{c}{Mistral-7B-Instruct-v0.3, KV Size = Full} \\
   \arrayrulecolor{black}\midrule
   FullKV &29.07&41.58&52.88&49.37&39.01&28.58&34.93&25.68&27.74&76.00&88.59&47.59&6.00&98.50&61.41&62.39 & 48.08 \\
   
   \arrayrulecolor{black}\midrule
   \multicolumn{18}{c}{Mistral-7B-Instruct-v0.3, KV Size Compression Ratio = $10\%$} \\
   \arrayrulecolor{black}\midrule
   StreamingLLM & 25.15 & 25.47 & 30.08 & 44.39 & 32.49 & 19.40 & 24.11 & 20.85 & 19.55 & 65.00 & 88.21 & 44.83 & 4.50 & 79.50 & 59.48 & 58.82 & 40.11 \\
   H2O & 29.35 & 33.39 & 50.39 & 49.58 & 36.76 & 27.42 & 25.16 & 24.75 & 22.12 & 42.00 & 89.00 & \textbf{47.04} & 5.50 & 98.50 & 57.58 & 59.24 & 43.61 \\
   SnapKV & 28.54 & \textbf{36.88} & 53.42 & 50.15 & 38.17 & 27.99 & 26.67 & \textbf{25.21} & \textbf{22.33} & 72.00 & 89.36 & 45.44 & \textbf{5.50} & \textbf{99.00} & 59.79 & \textbf{61.63} & 46.38 \\
   PyramidKV & 29.40 & 35.39 & 52.96 & 49.93 & 38.67 & 28.63 & \textbf{27.59} & 24.99 & 22.77 & 74.00 & \textbf{90.02} & 46.07 & 4.00 & 98.50 & 58.54 & 60.88 & 46.39 \\
   \rowcolor{red!20}\textbf{\method{}} & \textbf{29.75} & 36.82 & \textbf{53.99} & \textbf{50.33} & \textbf{38.72} & \textbf{29.01} & 27.03 & 24.76 & 21.42 & \textbf{76.00} & 88.73 & 46.49 & 5.00 & 98.00 & \textbf{59.98} & 61.47 & \textbf{46.71} \\
   
   \arrayrulecolor{black}\midrule
   \multicolumn{18}{c}{Qwen2-7B-Instruct, KV Size = Full} \\
   \arrayrulecolor{black}\midrule
   FullKV & 25.11 & 42.64 & 44.29 & 14.25 & 13.22 & 9.08 & 36.38 & 23.43 & 26.53 & 77.00 & 89.99 & 44.88 & 6.75 & 75.92 & 60.17 & 61.84 & 40.71\\
   
   \arrayrulecolor{black}\midrule
   \multicolumn{18}{c}{Qwen2-7B-Instruct, KV Size Compression Ratio = $10\%$} \\
   \arrayrulecolor{black}\midrule
   StreamingLLM & 25.15 &45.42 &41.46 &13.66 &11.95 &8.72 &32.79 &21.49 &\textbf{26.24} & 77.50 &89.15 &44.54 &7.50 &50.50 &60.03 &60.91 &38.56 \\
   H2O & 26.17 & 44.33 & 42.54 & 12.81 & 12.46 & 9.15 & \textbf{33.24} & 22.69 & 25.94 & 76.50 & 89.44 & 44.32 & 8.00 & 76.00 & \textbf{61.28} & \textbf{62.39} & 40.45 \\
   SnapKV & 26.84 & \textbf{45.96} & \textbf{45.79} & 14.27 & \textbf{13.35} & 9.91 & 32.62 & 22.70 & 25.83 & 77.00 & 89.19 & 44.71 & 7.50 & 71.50 & 60.35 & 61.37 & 40.55 \\
   PyramidKV & \textbf{27.51} & 44.45 & 43.59 & 13.35 & 13.13 & 9.12 & 32.28 & 22.60 & 25.45 & 77.00 & \textbf{89.44} & 44.53 & 7.00 & 73.50 & 60.91 & 61.24 & 40.31 \\
   \rowcolor{red!20}\textbf{\method{}} & 26.48 & 44.19 & 45.04 & \textbf{15.94} & 12.60 & \textbf{10.52} & 32.38 & \textbf{22.87} & 25.91 & \textbf{77.50} & 89.22 & \textbf{44.78} & \textbf{8.50} & \textbf{76.50} & 60.64 & 61.32 & \textbf{40.88} \\
   
   \arrayrulecolor{black}\bottomrule
   \end{tabular}
   }
   
   \label{table:longbench}
   \vspace{-3mm}
   \end{table*}

Table~\ref{table:longbench_70B} shows additional experiments with LLaMA-3-70B-Instruct, comparing our \method{} method with baselines with KV cache size = 256 on the LongBench dataset:
   \begin{table*}[!ht]
      \caption{Comprehensive performance comparison of \method{} across LongBench English subtasks with 70B model.}
      \resizebox{\textwidth}{!}{
      \centering
      \begin{tabular}{l|c@{\hspace{1.5pt}}c@{\hspace{3pt}}c@{\hspace{4pt}}c@{\hspace{0pt}}c@{\hspace{0pt}}c@{\hspace{0pt}}c@{\hspace{0pt}}c@{\hspace{0pt}}c@{\hspace{0pt}}c@{\hspace{4pt}}c@{\hspace{0pt}}c@{\hspace{0pt}}c@{\hspace{2pt}}c@{\hspace{7pt}}c@{\hspace{7pt}}c|c}
      \specialrule{1pt}{0pt}{2pt}
      \multirow{5}{*}{Method}  & \multicolumn{3}{c}{Single-Document QA} & \multicolumn{3}{c}{Multi-Document QA}& \multicolumn{3}{c}{Summarization}& \multicolumn{3}{c}{Few-shot Learning}& \multicolumn{2}{c}{Synthetic} & \multicolumn{2}{c}{Code} & \multirow{6}{*}{\textbf{Avg. $\uparrow$} } \\
      \cmidrule(lr){2-4}\cmidrule(lr){5-7}\cmidrule(lr){8-10}\cmidrule(lr){11-13}\cmidrule(lr){14-15}\cmidrule(lr){16-17}
      & \rotatebox[origin=c]{30}{NrtvQA} & \rotatebox[origin=c]{30}{Qasper} & \rotatebox[origin=c]{30}{MF-en} & \rotatebox[origin=c]{30}{HotpotQA} & \rotatebox[origin=c]{30}{2WikiMQA} & \rotatebox[origin=c]{30}{Musique} & \rotatebox[origin=c]{30}{GovReport} & \rotatebox[origin=c]{30}{QMSum} & \rotatebox[origin=c]{30}{MultiNews} & \rotatebox[origin=c]{30}{TREC} & \rotatebox[origin=c]{30}{TriviaQA} & \rotatebox[origin=c]{30}{SAMSum} & \rotatebox[origin=c]{30}{PCount} & \rotatebox[origin=c]{30}{PRe} & \rotatebox[origin=c]{30}{Lcc} & \rotatebox[origin=c]{30}{RB-P} & \\
      \cmidrule(lr){1-17}
      Avg len &18,409&3,619&4,559&9,151&4,887&11,214&8,734&10,614&2,113&5,177&8,209&6,258&11,141&9,289&1,235&4,206& \\
      
      \midrule
      \multicolumn{18}{c}{LlaMa-3-70B-Instruct, KV Size = Full} \\
      \arrayrulecolor{black}\midrule
      FullKV &27.56 & 46.00 & 49.75 & 50.86 & 56.16 & 29.77 & 31.66 & 21.80 & 27.42 & 75.50 & 92.58 & 46.08 & 12.00 & 73.50 & 45.02 & 68.74 & 47.15 \\
      
      \arrayrulecolor{black}\midrule
      \multicolumn{18}{c}{LlaMa-3-70B-Instruct, KV Cache Size = 256} \\
      \arrayrulecolor{black}\midrule
      \textbf{H2O} & 
      26.42 & 
      37.52 & 
      36.23 & 
      47.82 & 
      49.34 & 
      24.30 & 
      26.96 & 
      20.67 & 
      \textbf{26.50} & 
      72.00 & 
      92.16 & 
      43.56 & 
      12.00 & 
      73.00 & 
      47.21 & 
      63.13 & 
      43.67  \\
      
      \textbf{SnapKV} & 
      26.79 & 
      42.23 & 
      48.78 & 
      \textbf{50.89} & 
      55.46 & 
      28.14 & 
      23.85 & 
      22.20 & 
      24.62 & 
      73.00 & 
      93.05 & 
      44.53 & 
      12.00 & 
      72.50 & 
      \textbf{54.24} & 
      65.60 & 
      46.11 \\

      \rowcolor{red!20}\textbf{ChunkKV} & 
      \textbf{27.37} & 
      \textbf{45.95} & 
      \textbf{49.23} & 
      50.51 & 
      \textbf{55.49} & 
      \textbf{30.28} & 
      \textbf{30.47} & 
      \textbf{22.63} & 
      26.30 & 
      \textbf{75.50} & 
      92.58 & 
      \textbf{46.51} & 
      \textbf{12.00} & 
      \textbf{73.50} & 
      49.51 & 
      \textbf{68.05} & 
      \textbf{47.24}  \\

      \arrayrulecolor{black}\bottomrule
      \end{tabular}
      }
      
      \label{table:longbench_70B}
      \vspace{-3mm}
      \end{table*}

\subsection{Needle-In-A-Haystack}
\label{appendix:niah}

Figure~\ref{fig:NIAH_llama} and  \ref{fig:NIAH_mistral} visualizes the performance of \method{} on the NIAH benchmark for LLaMA-3-8B-Instruct and Mistral-7B-Instruct with a KV cache size of 128 under 8k and 32k context length. The performance of \method{} is consistently better as the context length increases.
\begin{figure}[!ht]
   \centering
   \begin{subfigure}[b]{0.9\textwidth}
       \centering
       \includegraphics[width=\textwidth]{./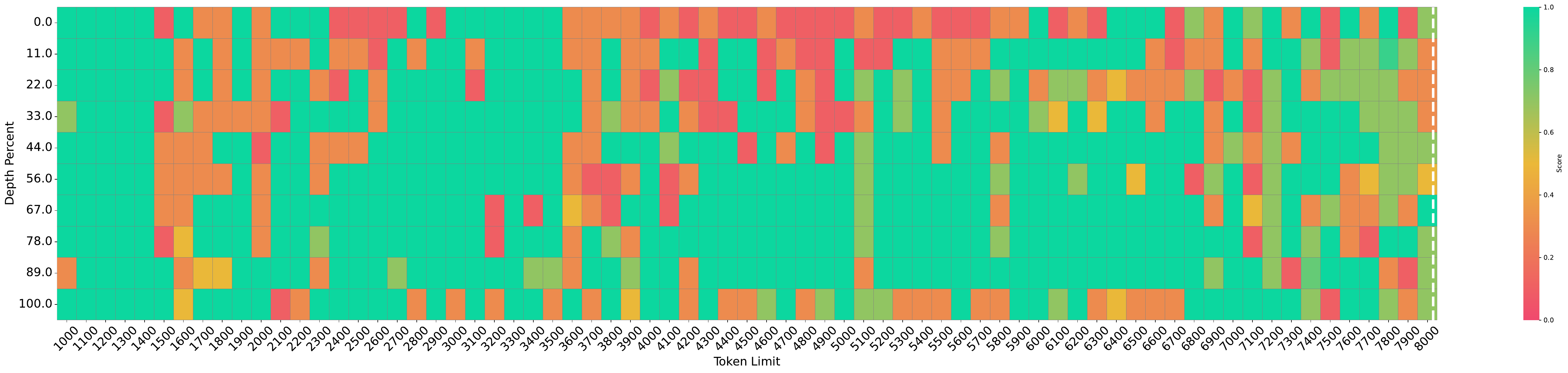}
       \caption{\method{}, accuracy 73.8\%}
       \label{fig:NIAH_llama_spankv}
   \end{subfigure}
   \begin{subfigure}[b]{0.9\textwidth}
      \centering
      \includegraphics[width=\textwidth]{./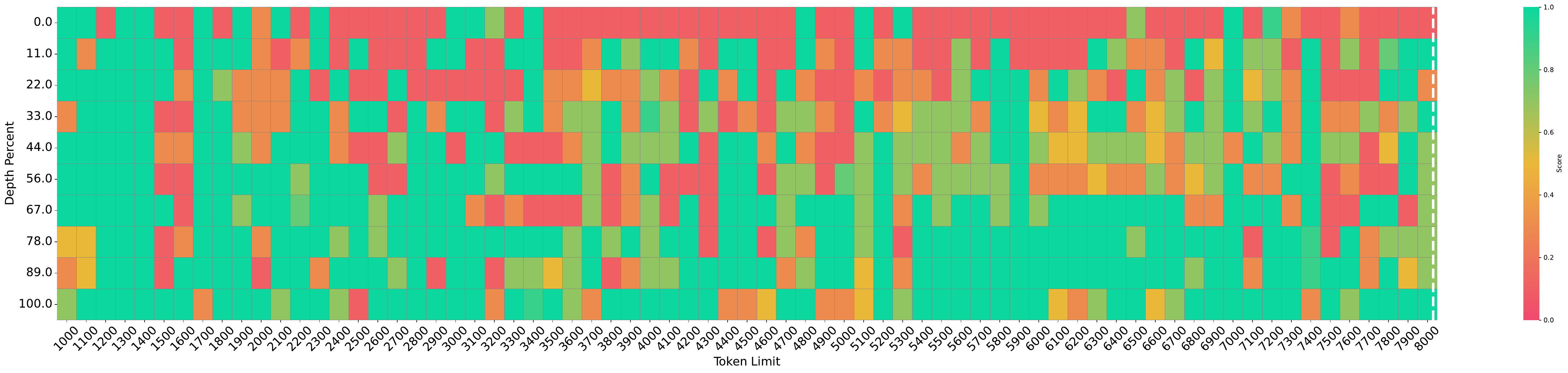}
      \caption{PyramidKV, accuracy 65.1\%}
      \label{fig:NIAH_llama_pyramidkv}
   \end{subfigure}
   \begin{subfigure}[b]{0.9\textwidth}
      \centering
      \includegraphics[width=\textwidth]{./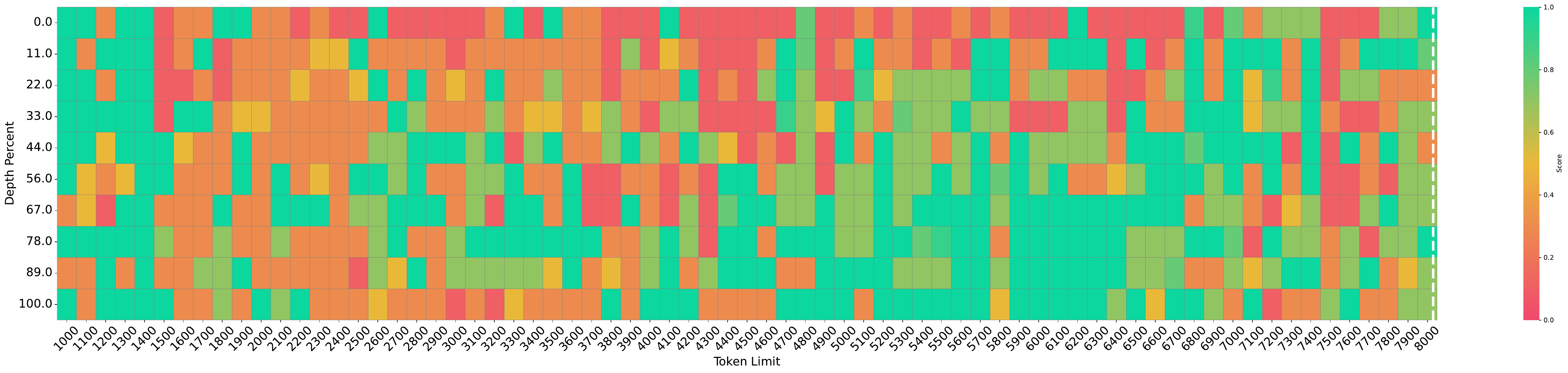}
      \caption{SnapKV, accuracy 58.9\%}
      \label{fig:NIAH_llama_snapkv}
   \end{subfigure}
   \begin{subfigure}[b]{0.9\textwidth}
      \centering
      \includegraphics[width=\textwidth]{./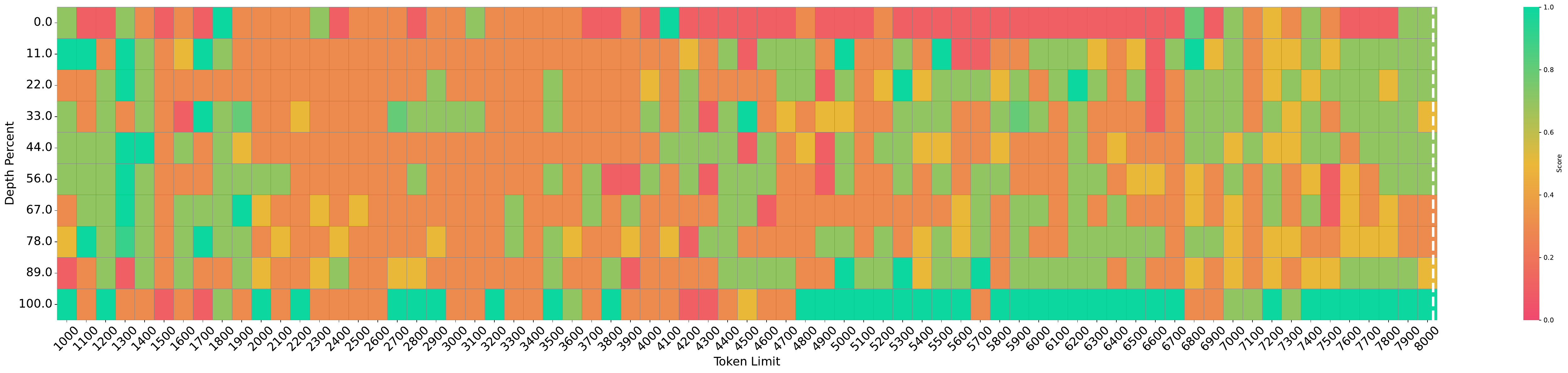}
      \caption{H2O, accuracy 47.9\%}
      \label{fig:NIAH_llama_h2o}
   \end{subfigure}
   \begin{subfigure}[b]{0.9\textwidth}
      \centering
      \includegraphics[width=\textwidth]{./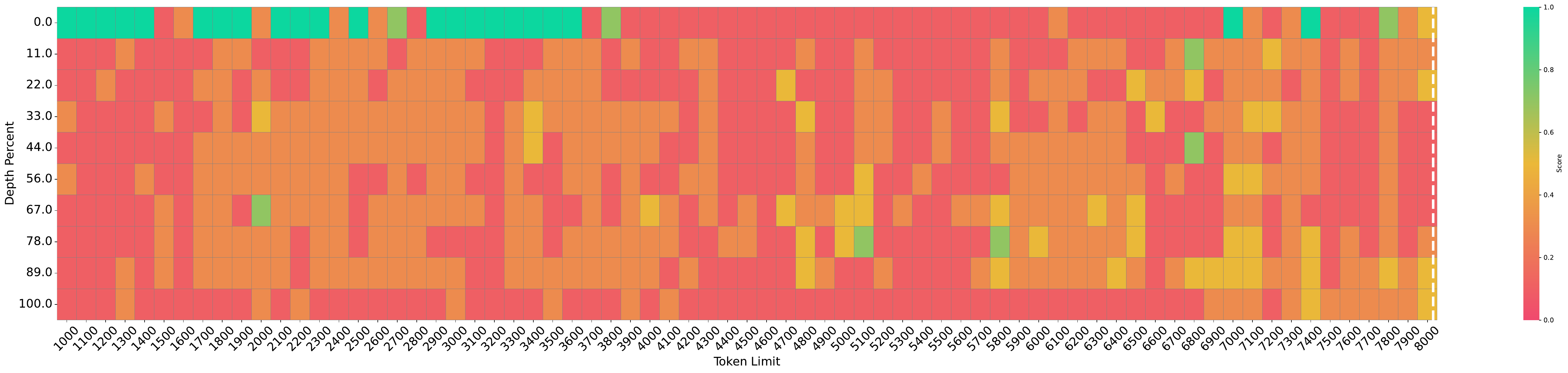}
      \caption{StreamingLLM, accuracy 23.7\%}
      \label{fig:NIAH_llama_streamingllm}
   \end{subfigure}
   \caption{NIAH benchmark for LLaMA-3-8B-Instruct with KV cache size=128 under 8k context length}
   \label{fig:NIAH_llama}
\end{figure}

\begin{figure}[!ht]
   \centering
   \begin{subfigure}[b]{0.9\textwidth}
       \centering
       \includegraphics[width=\textwidth]{./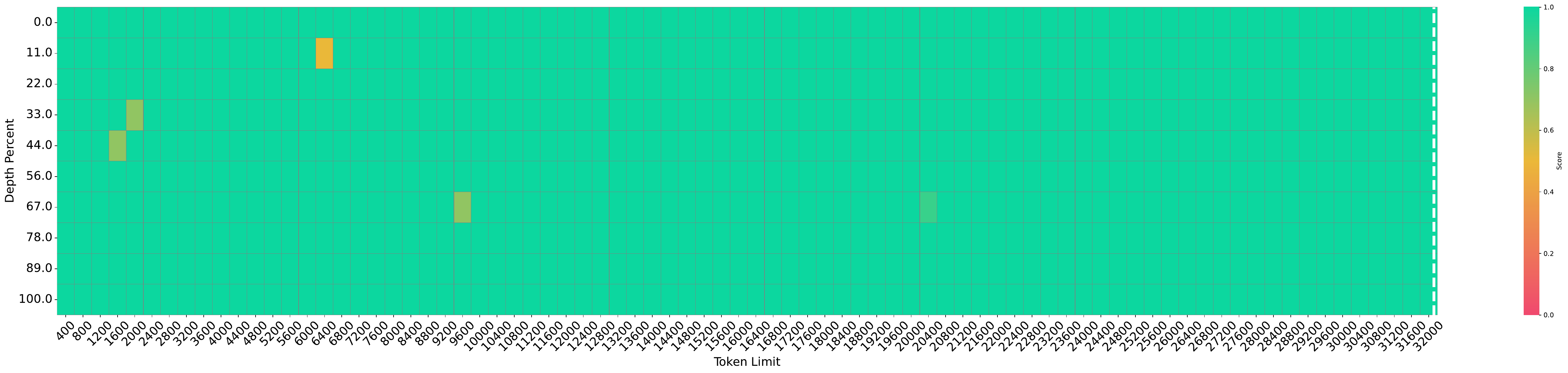}
       \caption{\method{}, accuracy 99.8\%}
       \label{fig:NIAH_mistral_spankv}
   \end{subfigure}
   \begin{subfigure}[b]{0.9\textwidth}
      \centering
      \includegraphics[width=\textwidth]{./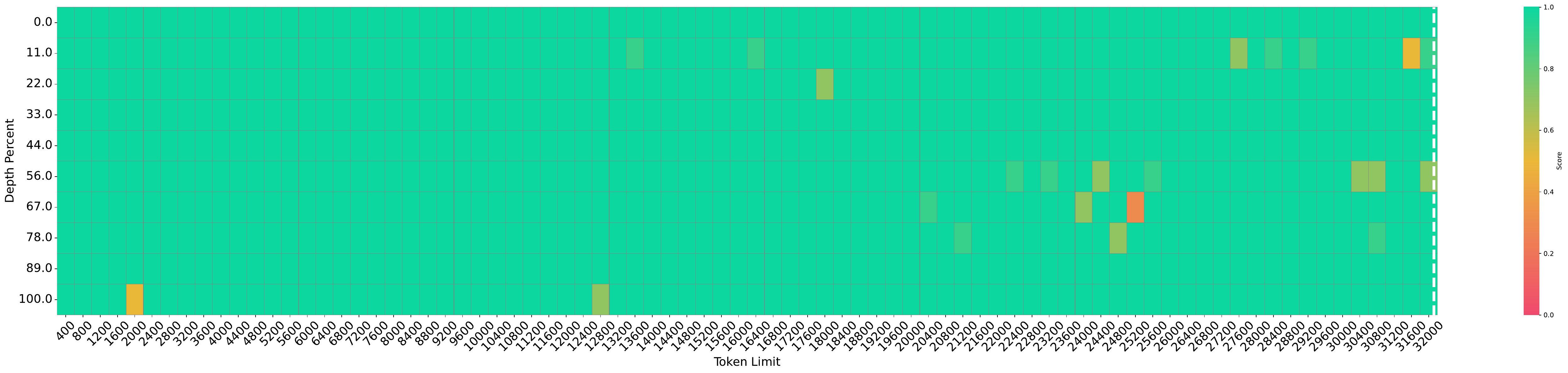}
      \caption{PyramidKV, accuracy 99.3\%}
      \label{fig:NIAH_mistral_pyramidkv}
   \end{subfigure}
   \begin{subfigure}[b]{0.9\textwidth}
      \centering
      \includegraphics[width=\textwidth]{./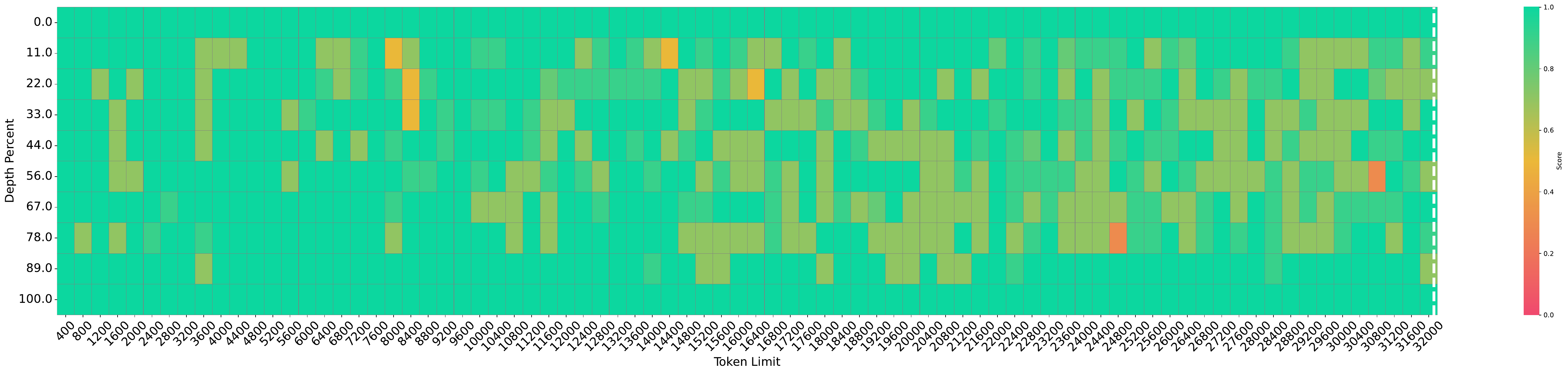}
      \caption{SnapKV, accuracy 91.6\%}
      \label{fig:NIAH_mistral_snapkv}
   \end{subfigure}
   \begin{subfigure}[b]{0.9\textwidth}
      \centering
      \includegraphics[width=\textwidth]{./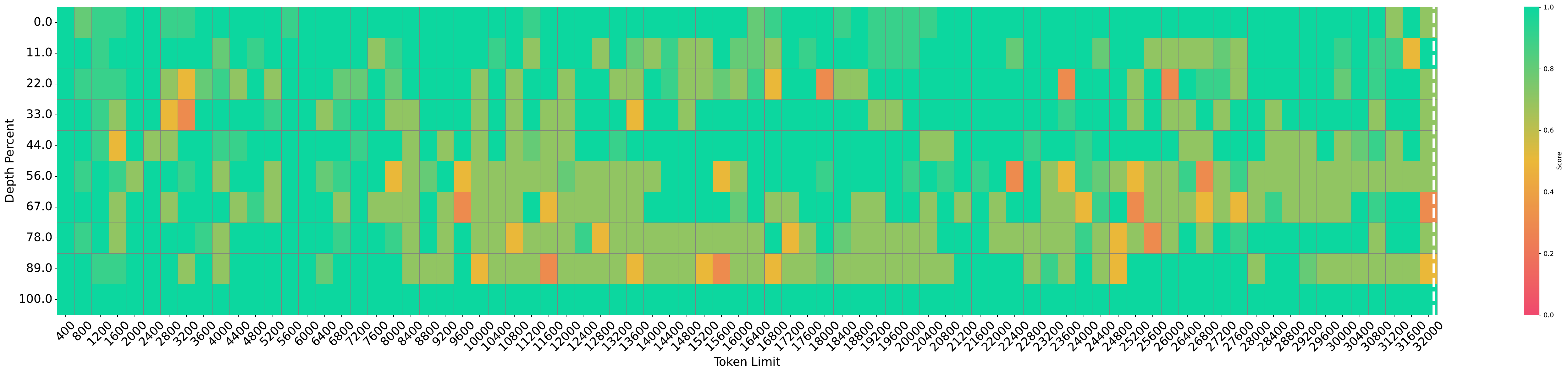}
      \caption{H2O, accuracy 88.2\%}
      \label{fig:NIAH_mistral_h2o}
   \end{subfigure}
   \begin{subfigure}[b]{0.9\textwidth}
      \centering
      \includegraphics[width=\textwidth]{./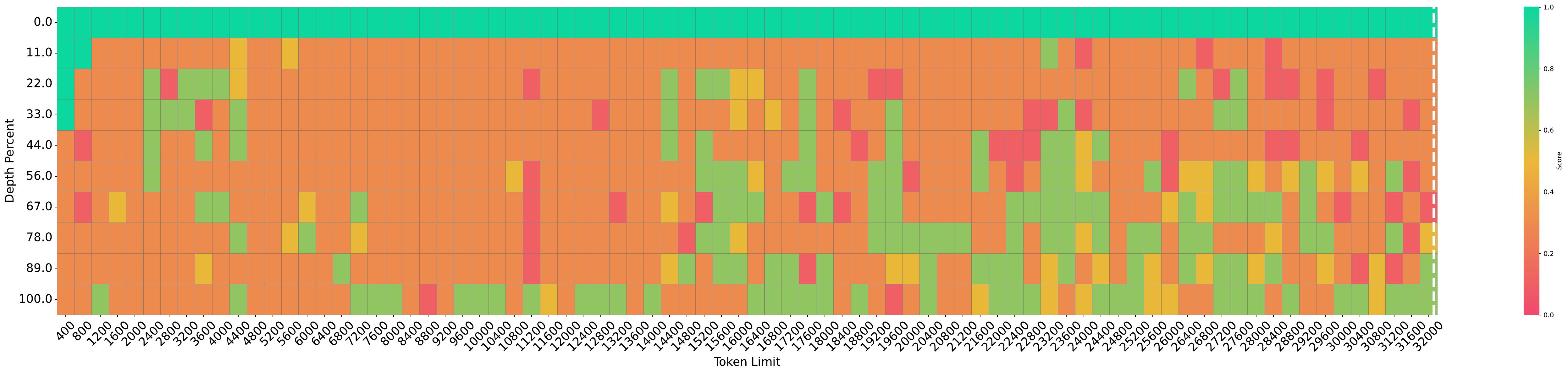}
      \caption{StreamingLLM, accuracy 44.3\%}
      \label{fig:NIAH_mistral_streamingllm}
   \end{subfigure}
   \caption{NIAH benchmark for Mistral-7B-Instruct with KV cache size=128 under 32k context length}
   \label{fig:NIAH_mistral}
\end{figure}
\clearpage


\subsection{Chunk Size}
\label{appendix:chunk_size}

Table \ref{tab:lablation_size_compression_appendix}  and \ref{tab:ablation_size_niah} show the performance of \method{} with different comperession ratios and different chunk sizes on the LongBench and NIAH. We conducted extensive experiments across different compression ratios and KV cache sizes to shows the effectiveness of \method{} and the chunk size is robust.

\begin{table}[!h]
   \centering
   \caption{LongBench Performance with Different Chunk Sizes and Compression Ratios for LLaMA-3-8B-Instruct}
   \resizebox{0.88\columnwidth}{!}{
   \begin{tabular}{c|ccccccc}
   \toprule
   Compression & \multicolumn{7}{c}{Chunk Size} \\
   \cmidrule{2-8}
   Rate & 1 & 3 & 5 & 10 & 15 & 20 & 30 \\
   \midrule
   10\% & 37.32 & 40.49 & 40.47 & \textbf{40.51} & 40.21 & 40.05 & 39.57 \\
   20\% & 38.80 & 40.66 & 40.57 & \textbf{40.74} & 40.53 & 40.46 & 40.04 \\
   30\% & 39.23 & 41.02 & 41.29 & \textbf{41.59} & 41.38 & 41.33 & 41.02 \\
   \bottomrule
   \end{tabular}%
   }
  \vspace{-0.3cm}
   \label{tab:lablation_size_compression_appendix}
\end{table}

\begin{table}[!h]
   \centering
   \caption{NIAH Performance with Different Chunk Sizes and KV Cache Sizes for LLaMA-3-8B-Instruct}
   \resizebox{0.8\columnwidth}{!}{
   \begin{tabular}{c|ccccccc}
   \toprule
   KV Cache & \multicolumn{7}{c}{Chunk Size} \\
   \cmidrule{2-8}
   Size & 1 & 3 & 5 & 10 & 15 & 20 & 30 \\
   \midrule
   96 & 41.0 & 63.2 & 65.2 & \textbf{70.3} & 67.2 & 65.3 & 53.1 \\
   128 & 47.9 & 65.6 & 69.1 & \textbf{73.8} & 72.3 & 72.0 & 71.2 \\
   256 & 61.7 & 70.3 & 71.2 & \textbf{74.1} & 73.2 & 72.3 & 71.1 \\
   512 & 68.6 & 72.6 & 72.5 & \textbf{74.5} & 74.3 & 74.0 & 72.6 \\
   \bottomrule
   \end{tabular}%
   }
   \vspace{-0.3cm}
   \label{tab:ablation_size_niah}
\end{table}

\begin{figure}[!ht]
   \centering
   \includegraphics[width=0.8\textwidth]{./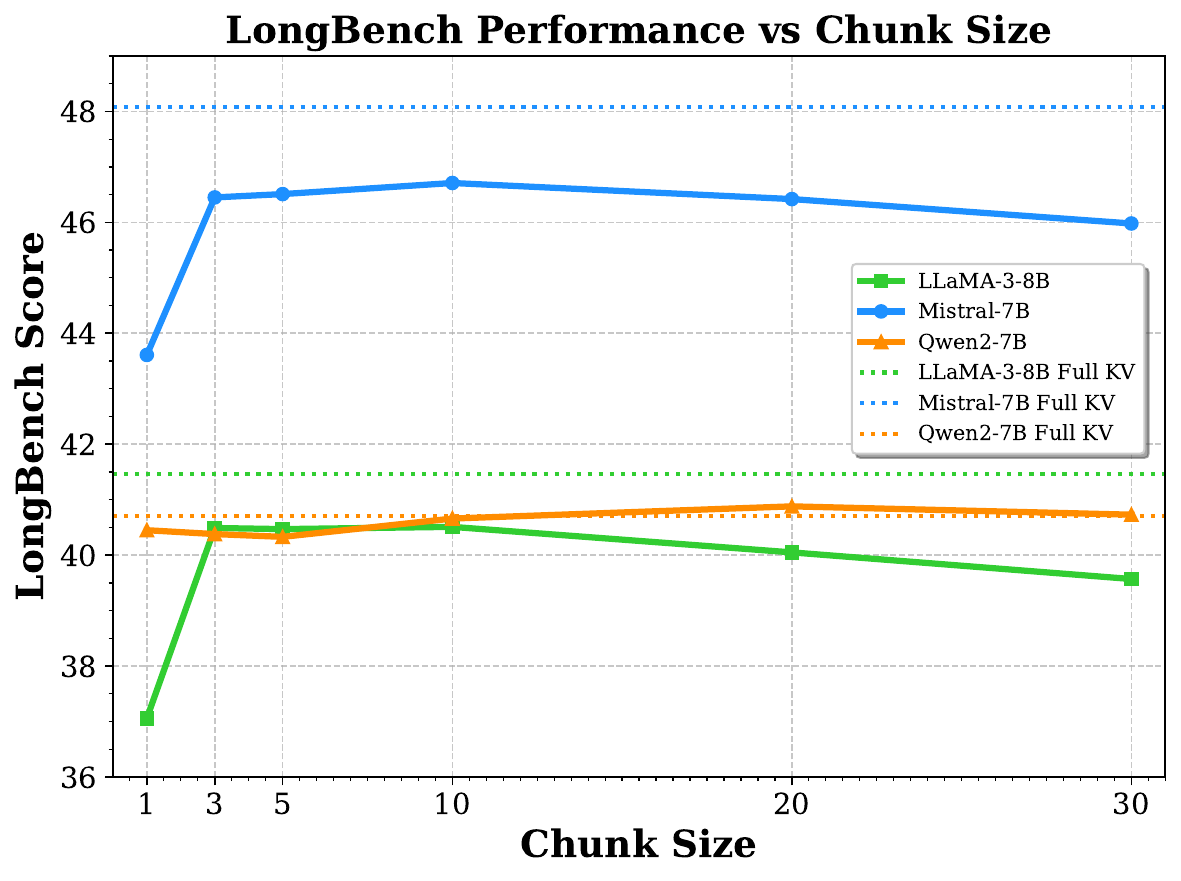}
   \caption{LongBench Performance Comparison with different chunk size under 10\% compression rate.}
   \label{fig:chunk_size_performance}
\end{figure}

Figure~\ref{fig:chunk_size_performance} shows the performance of the \method{} with different chunk size on the LongBench and NIAH benchmarks. The three colorful curves represent three LLMs with different chunk sizes, and the colorful dashed line is the corresponding FullKV performance. From Figure~\ref{fig:chunk_size_performance}, we can observe that the LongBench performance of \method{} is not significantly affected by the chunk size, with performance variations less than 1\%. The three curves are closely aligned, indicating that chunk sizes in the range of $\{10,20\}$ exhibit better performance.

Table \ref{tab:ablation_size_gsm8k} and Figure~\ref{fig:chunk_size_gsm8k} shows that the \method{} with different chunk sizes on GSM8K displays the same curve pattern as LongBench. The CoT prompt length for GSM8K is only 1K tokens, so the optimal chunk size range is smaller.

\begin{figure}[!ht]
   \centering
   \includegraphics[width=1\textwidth]{./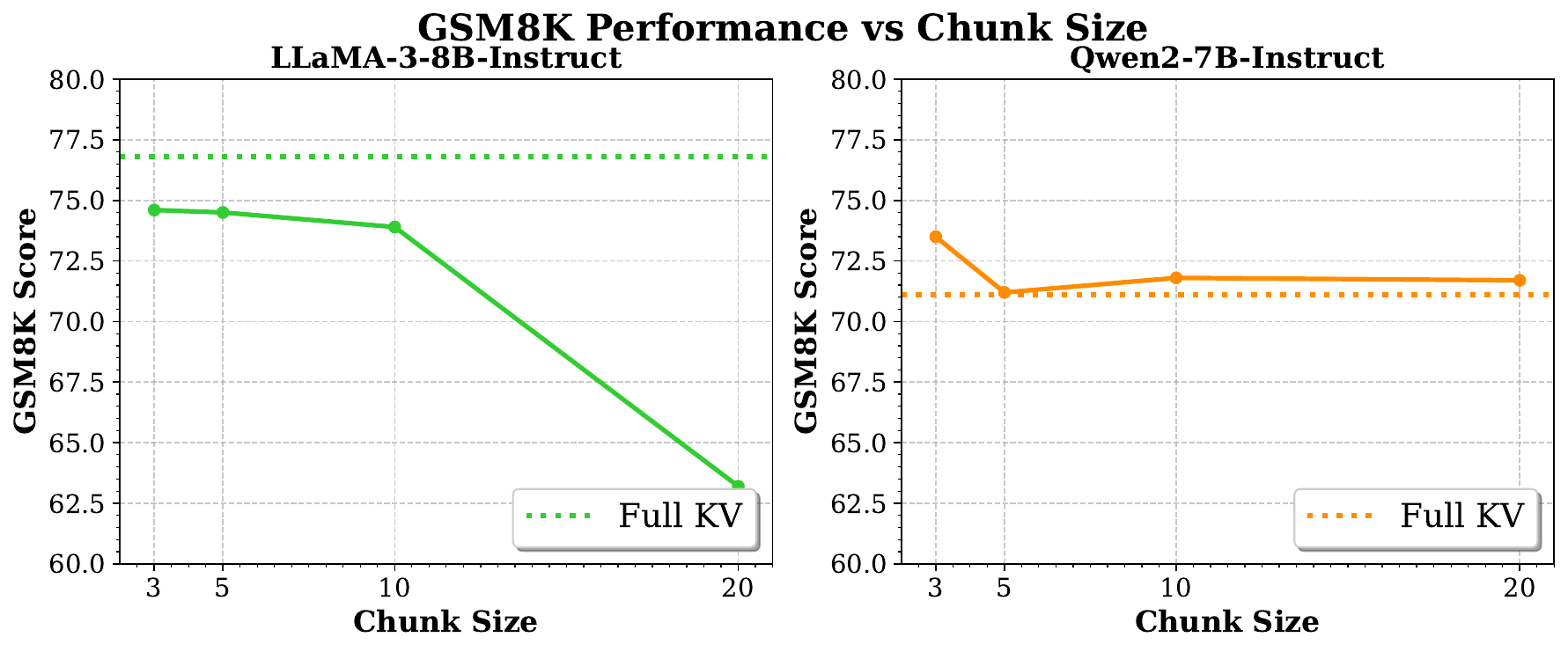}
   \caption{ GSM8K Performance Comparison  with different chunk size}
   \label{fig:chunk_size_gsm8k}
\end{figure}

\begin{table}[!ht]
   \centering
   \caption{GSM8K Performance Comparison with different chunk sizes}
   \begin{tabular}{c|cccc|c}
   \toprule
   \multirow{3}{*}{Model} & \multicolumn{4}{c|}{Chunk Size}  & \multirow{3}{*}{Full KV} \\
   \cmidrule{2-5}
          & 3     & 5     & 10    & 20     & \\
   \midrule
   LLaMA-3-8B-Instruct & \textbf{74.6} & 74.5 & 73.9 & 63.2 & 76.8 \\
   Qwen2-7B-Instruct   & \textbf{73.5} & 71.2 & 71.8 & 71.7  & 71.1 \\
   \bottomrule
   \end{tabular}%
   \label{tab:ablation_size_gsm8k}
\end{table}
\subsection{Multi-Lingual}
\label{appendix:multilingual}
Table~\ref{table:longbench_zh} is the Chinese support model Qwen2-7B-Instruct evaluated on the LongBench Chinese subtask, where \method{} achieves better performance than other compression methods and the full KV cache performance. Both the English and Chinese results indicate that ChunkKV is a promising approach for maintaining crucial information in the KV cache.

\begin{table}[!ht]
   \caption{Performance comparison of Chinese subtask on LongBench for Qwen2-7B-Instruct.}
   \centering
   \resizebox{1\textwidth}{!}{
      \begin{tabular}{l|ccccc|c}
         \arrayrulecolor{black}\toprule

         \multirow{3}{*}{Method}  & Single-Document QA & Multi-Document QA & Summarization & Few-shot Learning & Synthetic  &  \multirow{4}{*}{\textbf{Avg. $\uparrow$}}   \\
 
         \cmidrule(lr){2-2}\cmidrule(lr){3-3}\cmidrule(lr){4-4}\cmidrule(lr){5-5}\cmidrule(lr){6-6}
           & MF-zh & DuReader & VCSum & LSHT & PR-zh \\
         \cmidrule(lr){1-6}
           Avg len &6,701&15,768&15,380&22,337&6,745& \\
         \arrayrulecolor{black}\midrule
         \multicolumn{7}{c}{Qwen2-7B-Instruct, KV Size = Full} \\
         \arrayrulecolor{black}\midrule
         FullKV & 39.17 & 23.63 & 16.21 & 43.50 & 70.50 & 38.60 \\
         
         \arrayrulecolor{black}\midrule
         \multicolumn{7}{c}{Qwen2-7B-Instruct, KV Size Compression Ratio = $10\%$} \\
         \arrayrulecolor{black}\midrule
         StreamingLLM & 38.05 & \textbf{23.24} & 15.92 & 40.50 & 44.50 & 32.44 \\
         H2O & 37.99 &19.58 & 16.16 & 41.67 & 67.35 & 36.55 \\
         SnapKV & 44.25 & 20.27 & 16.24 & \textbf{44.50} & 68.10 & 38.67 \\
         PyramidKV & 36.57 & 20.56 & 16.15 & 43.50 & 66.50 & 36.55 \\
         \rowcolor{red!20}\textbf{\method{}} & \textbf{45.92} & 20.15 & \textbf{16.37} & 43.75 & \textbf{71.10} & \textbf{39.45} \\
         \arrayrulecolor{black}\bottomrule
      \end{tabular}
   }
   
\label{table:longbench_zh}
\vspace{-3mm}
\end{table}

\subsection{KV Cache Quantization}
\label{appendix:kv_cache_quantization}

For comprehensively evaluate the effectiveness of \method{}, we conducted experiments comparing \method{} with quantization methods KIVI~\citep{liu2024kivi}. While both approaches aim to optimize LLM inference, they operate on fundamentally different principles: quantization reduces KV matrix precision, whereas our eviction method reduces KV matrix size.

From an implementation perspective, quantization methods require the full KV cache during prefilling to produce quantized representations, which are then used during decoding. In contrast, \method{} employs token removal prior to prefilling, enabling operation with a compressed cache throughout the entire inference process. This distinction creates different efficiency profiles, and each method offers unique advantages.

The following tables present our detailed analysis comparing \method{} with KIVI, across both performance and efficiency metrics on the LLaMA-3-8B-Instruct model. It should be noted that due to KIVI's dependency on older Python versions, there may be some discrepancies between the efficiency results reported here and those in Table~\ref{tab:efficiency} of our paper.

\begin{table}[!ht]
   \caption{\centering Comprehensive performance comparison of \method{} and quantization methods across LongBench English subtasks.}
   \resizebox{\textwidth}{!}{
   \centering
   \begin{tabular}{l|c@{\hspace{1.5pt}}c@{\hspace{3pt}}c@{\hspace{4pt}}c@{\hspace{0pt}}c@{\hspace{0pt}}c@{\hspace{0pt}}c@{\hspace{0pt}}c@{\hspace{0pt}}c@{\hspace{0pt}}c@{\hspace{4pt}}c@{\hspace{0pt}}c@{\hspace{0pt}}c@{\hspace{2pt}}c@{\hspace{7pt}}c@{\hspace{7pt}}c|c}
   \specialrule{1pt}{0pt}{2pt}
   \multirow{5}{*}{Method}  & \multicolumn{3}{c}{Single-Document QA} & \multicolumn{3}{c}{Multi-Document QA}& \multicolumn{3}{c}{Summarization}& \multicolumn{3}{c}{Few-shot Learning}& \multicolumn{2}{c}{Synthetic} & \multicolumn{2}{c}{Code} & \multirow{6}{*}{\textbf{Avg. $\uparrow$} } \\
   \cmidrule(lr){2-4}\cmidrule(lr){5-7}\cmidrule(lr){8-10}\cmidrule(lr){11-13}\cmidrule(lr){14-15}\cmidrule(lr){16-17}
   & \rotatebox[origin=c]{30}{NrtvQA} & \rotatebox[origin=c]{30}{Qasper} & \rotatebox[origin=c]{30}{MF-en} & \rotatebox[origin=c]{30}{HotpotQA} & \rotatebox[origin=c]{30}{2WikiMQA} & \rotatebox[origin=c]{30}{Musique} & \rotatebox[origin=c]{30}{GovReport} & \rotatebox[origin=c]{30}{QMSum} & \rotatebox[origin=c]{30}{MultiNews} & \rotatebox[origin=c]{30}{TREC} & \rotatebox[origin=c]{30}{TriviaQA} & \rotatebox[origin=c]{30}{SAMSum} & \rotatebox[origin=c]{30}{PCount} & \rotatebox[origin=c]{30}{PRe} & \rotatebox[origin=c]{30}{Lcc} & \rotatebox[origin=c]{30}{RB-P} & \\
   \cmidrule(lr){1-17}
   Avg len &18,409&3,619&4,559&9,151&4,887&11,214&8,734&10,614&2,113&5,177&8,209&6,258&11,141&9,289&1,235&4,206& \\
   
   \midrule
   \multicolumn{18}{c}{LlaMa-3-8B-Instruct, KV Size = Full} \\
   \arrayrulecolor{black}\midrule
   FullKV &25.70 & 29.75 & 41.12 & 45.55 & 35.87 & 22.35 & 25.63 & 23.03 & 26.21 & 73.00 & 90.56 & 41.88 & 4.67 & 69.25 & 58.05 & 50.77 & 41.46 \\
   
   \arrayrulecolor{black}\midrule
   \multicolumn{18}{c}{ChunkKV with different compression ratios} \\
   \arrayrulecolor{black}\midrule
   \textbf{ChunkKV-10\%} & 
   24.89 & 
   22.96 & 
   37.64 & 
   43.27 & 
   36.45 & 
   20.65 & 
   22.80 & 
   22.97 & 
   20.82 & 
   71.50 & 
   90.52 & 
   40.83 & 
   5.93 & 
   69.00 & 
   60.49 & 
   57.48 & 
   40.51 \\ 

   \textbf{ChunkKV-20\%} & 
   26.13 & 
   28.43 & 
   38.59 & 
   44.46 & 
   34.13 & 
   21.06 & 
   24.72 & 
   23.11 & 
   22.91 & 
   71.50 & 
   90.56 & 
   41.51 & 
   5.09 & 69.00 & 58.17 & 52.51 & 40.74  \\

   \rowcolor{red!20}\textbf{ChunkKV-30\%} & 
   25.88 & 
   29.58 & 
   38.99 & 
   43.94 & 
   34.16 & 
   21.70 & 
   26.50 & 23.15 & 23.95 & 72.00 & 90.56 & 42.47 & 5.34 & 69.25 & 61.68 & 56.35 & \textbf{41.59}  \\

   \arrayrulecolor{black}\midrule
   \multicolumn{18}{c}{KIVI with different quantization nbits} \\
   \arrayrulecolor{black}\midrule

   \textbf{KIVI-2bits} & 
   25.45 & 
   29.45 & 
   40.85 & 
   45.25 & 
   35.55 & 
   22.05 & 
   25.48 & 
   22.95 & 
   26.06 & 
   70.25 & 
   89.00 & 
   41.73 & 
   4.25 & 
   64.00 & 
   59.72 & 
   46.70 & 
   40.54  \\

   \textbf{KIVI-4bits} & 
   25.57 & 
   29.60 & 
   41.00 & 
   45.40 & 
   35.70 & 
   22.20 & 
   28.78 & 
   23.00 & 
   26.49 & 
   72.50 & 
   89.84 & 
   42.16 & 
   4.35 & 
   64.52 & 
   59.91 & 
   46.84 & 
   41.11  \\

   \textbf{KIVI-8bits} & 
   25.70 & 
   29.73 & 
   41.11 & 
   45.52 & 
   35.81 & 
   22.34 & 
   25.62 & 
   23.01 & 
   26.19 & 
   72.97 & 
   90.54 & 
   41.83 & 
   4.62 & 
   69.22 & 
   58.02 & 
   50.74 & 
   41.43  \\

   \arrayrulecolor{black}\bottomrule
   \end{tabular}
   }
   
   \label{table:longbench_kv_cache_quantization}
   \vspace{-3mm}
   \end{table}



Table~\ref{table:longbench_kv_cache_quantization} demonstrates that ChunkKV with 30\% compression ratio achieves comparable performance (41.59) to KIVI with 8-bit quantization (41.43) across LongBench subtasks. Notably, ChunkKV shows particularly strong results in code-related tasks, while KIVI maintains better performance in single-document QA scenarios.


These results highlight the complementary strengths of eviction and quantization approaches. ChunkKV's semantic chunk-based method delivers particularly strong efficiency benefits while maintaining competitive performance with state-of-the-art quantization techniques.

\subsection{Comparison with Orthogonal and Training-Based Methods}
\label{appendix:comparison_advanced}

To better position ChunkKV in the broader landscape of inference optimization, we conducted additional experiments comparing our training-free eviction method against Palu~\citep{chang2024palu}, a recent training-based compression method. The experiments were conducted on the LongBench benchmark with the LLaMA-3-8B-Instruct model.

\begin{table}[h!]
\centering
\caption{Performance comparison of ChunkKV with  Palu (training-based) on the LongBench benchmark (LLaMA-3-8B-Instruct). ChunkKV demonstrates a superior performance-efficiency trade-off compared to training-based methods on this diverse benchmark and remains competitive with orthogonal quantization techniques.}
\label{tab:palu_comparison}
\resizebox{\textwidth}{!}{%
\begin{tabular}{lccccccc}
\toprule
\textbf{Method} & \textbf{Single-Doc QA} & \textbf{Multi-Doc QA} & \textbf{Summarization} & \textbf{Few-shot} & \textbf{Synthetic} & \textbf{Code} & \textbf{Average Score} $\uparrow$ \\
\midrule
FullKV (FP16) & 32.19 & 34.59 & 24.96 & 68.48 & 36.96 & 54.41 & 41.46 \\
\midrule
\multicolumn{8}{l}{\textit{Eviction Methods (Ours)}} \\
ChunkKV (10\% Ratio) & 28.50 & 33.46 & 22.20 & 67.62 & 37.47 & 58.98 & 40.51 \\
ChunkKV (30\% Ratio) & 31.48 & 33.27 & 24.53 & 68.34 & 37.30 & 59.02 & 41.59 \\
\midrule
\multicolumn{8}{l}{\textit{Training-Based Methods}} \\
Palu (50\% Ratio) & 8.77 & 7.43 & 20.71 & 61.62 & 8.59 & 18.43 & 22.48 \\
Palu (70\% Ratio) & 10.06 & 8.31 & 23.71 & 68.64 & 35.00 & 38.97 & 28.84 \\
\bottomrule
\end{tabular}%
}
\end{table}

The results in Table~\ref{tab:palu_comparison} highlight key differences. Palu, as a training-based method, experiences significant performance degradation on the diverse LongBench benchmark, which represents an Out-Of-Distribution (OOD) challenge relative to its training data. In contrast, our training-free method, ChunkKV, demonstrates robust performance.

\subsection{Efficiency results}
\label{appendix:efficiency_results}
Table~\ref{table:longbench_kv_cache_efficiency_appendix} shows the efficiency results for different KV cache strategies with varying output lengths and the metrics are Time to First Token (TTFT) and Token Processing Time (TPOT).
\begin{table}[!ht]
   \centering
   \small
   \caption{Efficiency Results for Different KV Cache Strategies with Varying Output Lengths}
   \begin{tabular}{lccccc}
   \toprule
   \multirow{2}{*}{\textbf{Method}} & \multirow{2}{*}{\textbf{Input}} & \multirow{2}{*}{\textbf{Output}} & \multirow{2}{*}{\textbf{TTFT(s) $\downarrow$}} & \multirow{2}{*}{\textbf{TPOT(ms/token) $\downarrow$}} \\
    & & & & \\
   \midrule
   \multicolumn{5}{c}{\textbf{Output Length = 1024}} \\
   \midrule
   FullKV & 4096 & 1024 & 1.17 & 42.58 \\
   StreamingLLM & 4096 & 1024 & 1.02 (12.8\%) & 35.94 (15.6\%) \\
   SnapKV & 4096 & 1024 & 1.04 (11.1\%) & 36.24 (14.9\%) \\
   ChunkKV & 4096 & 1024 & 1.03 (12.0\%) & 36.64 (13.9\%) \\
   ChunkKV\_reuse & 4096 & 1024 & 1.00 (14.5\%) & 36.47 (14.3\%) \\
   \midrule
   \multicolumn{5}{c}{\textbf{Output Length = 4096}} \\
   \midrule
   FullKV & 4096 & 4096 & 1.17 & 42.85 \\
   StreamingLLM & 4096 & 4096 & 1.09 (6.8\%) & 40.01 (6.6\%) \\
   SnapKV & 4096 & 4096 & 1.10 (6.0\%) & 40.24 (6.1\%) \\
   ChunkKV & 4096 & 4096 & 1.10 (6.0\%) & 40.17 (6.3\%) \\
   ChunkKV\_reuse & 4096 & 4096 & 1.08 (7.7\%) & 40.16 (6.3\%) \\
   \bottomrule
   \end{tabular}
   \label{table:longbench_kv_cache_efficiency_appendix}
   \end{table}

\begin{table}[h!]
\centering
\caption{Latency and throughput comparison on LLaMA-3-8B-Instruct, including long-context scenarios up to 16k tokens. Percentages in parentheses indicate improvements over the FullKV baseline. ChunkKV with layer-wise reuse (ChunkKV\_reuse) consistently delivers the best efficiency gains, especially in long-context settings.}
\label{tab:efficiency_long_context}
\resizebox{\textwidth}{!}{%
\begin{tabular}{l|cc|cc}
\toprule
\multirow{2}{*}{\textbf{Method}} & \multicolumn{2}{c|}{\textbf{Sequence Length}} & \multicolumn{2}{c}{\textbf{Performance Metrics}} \\
\cmidrule(lr){2-3} \cmidrule(lr){4-5}
& \textbf{Input} & \textbf{Output} & \textbf{Latency(s) $\downarrow$} & \textbf{Throughput(T/S) $\uparrow$} \\
\midrule
\multicolumn{5}{l}{\textit{4k Context Length}} \\
FullKV & 4096 & 1024 & 43.60 & 105.92 \\
SnapKV & 4096 & 1024 & 37.92 (13.0\%) & 120.42 (13.7\%) \\
ChunkKV & 4096 & 1024 & 37.52 (13.9\%) & 118.85 (12.2\%) \\
\rowcolor{red!20}ChunkKV\_reuse & 4096 & 1024 & \textbf{37.35 (14.3\%)} & \textbf{124.09 (17.2\%)} \\
\midrule
\multicolumn{5}{l}{\textit{16k Context Length}} \\
FullKV & 16384 & 1024 & 49.60 & 323.24 \\
SnapKV & 16384 & 1024 & 39.20 (21.0\%) & 381.11 (17.9\%) \\
ChunkKV & 16384 & 1024 & 38.82 (21.7\%) & 381.61 (18.1\%) \\
\rowcolor{red!20}ChunkKV\_reuse & 16384 & 1024 & \textbf{36.96 (25.5\%)} & \textbf{389.21 (20.4\%)} \\
\bottomrule
\end{tabular}%
}
\end{table}

\section{Theoretical Understanding}
\label{appendix:theory}
In this section, we provide the theoretical interpretation from the perspective from the In-context learning (ICL) to further understand how ChunkKV outperforms token-level KV cache compression.

\textbf{Pretraining Data Distribution.} Given a set of concepts $\Theta$ and a concept $\theta \in \Theta$, we define the pretraining data is sampled from $p(\obs_1, \dots, \obs_T) = \int_{\theta \in \Theta} p(\obs_1, \dots, \obs_T \vert \theta)p(\theta)d \theta$~\citep{xie2022an}. Each token $\obs$ is sampled from a vocabulary $\obsset$. For simplicity, we write $\obs_{1:t} = \obs_1\dots\obs_{t}$.

\textbf{Language Modeling.}
Current LLMs~\citep{brown2020language,touvron2023llama2,xie2022an} usually utilize the next word prediction as the language modelling, which predicts the next token $\obs_{t}$ given the previous tokens $\obs_1\dots\obs_{t-1}$ for all $t=1,\dots, T$. Formally, a language modelling can be writen as the distribution $f(\obs_{t} \vert \obs_{1:t-1})$. And it is pretrained on a huge corpus sampled from the pretraining distribution $p(\obs_1,\dots,\obs_{t+1})$~\citep{xie2022an}. Considering the large scale of the model size and dataset size, it can be assumed that the $f(\obs_1\dots\obs_{t+1})$ has been aligned with the $p(\obs_1\dots\obs_{t+1})$~\citep{xie2022an}.


\textbf{Prompt Distribution.}
Following~\citep{xie2022an}, a prompt is composed of an input token sequence $x$ followed by an output token $y$. Then, the $i$-th training example \footnote{Here, training example in prompts means happens during the prompt learning.} that can appear in any place in the whole prompt $o_{1:T}$ is defined as $O_i$ consisting of an input $x_i=O_i \left[1:k-1 \right]$ (the first $k-1$ tokens) followed by the output $y_i = O_i \left[k\right]$ at the end, where the length $k$ is fixed for simplicity. 

The $i$-th training example is independently generated as follows: 1) Generate a start hidden state $\hiddensegstart_i$ from a \emph{prompt start distribution} $\ppromptstart$;
2) Given $\hiddensegstart_i$, generate the example sequence $\obsseg_i=[\X_i,\y_i]$ from $p(\obsseg_i \vert \hiddensegstart_i, \theta^\star)$.
The test input $\Xtest = \X_{n+1}$ is sampled similarly. Then, the prompt consists of a sequence of training examples ($\promptseq$) followed by the example $\Xtest$:

\begin{align}
    [\promptseq, \Xtest] = [\X_1, \y_1, \X_2, \y_2, \dots, \X_n, \y_n, \Xtest] \sim \pprompt.
\end{align}

\textbf{In-context learning setups and Assumptions.} We follow other settings and assumptions in ~\citep{xie2022an}. With the greedy decoding~\citep{fubreak}, sampling the next token from the language modeling $f(o_t \vert o_{1:t-1})$ becomes the predictor as $y =\argmax_{o_t} f(o_t|o_{1:t-1})$. 

Thus, for $[\promptseq, \Xtest]$, the in-context learning predictor can be written as $f_{n}(\Xtest) := \argmax_y p(y|\promptseq, \Xtest)$, which outputs the most likely prediction over the \emph{pretraining distribution} conditioned on the \emph{prompt distribution}. Its expected 0-1 error with $n$ examples is $\Lzeroone(f_n) = \E_{\Xtest,\ytest \sim \pprompt}[\indicator[f_{n}(\Xtest) \neq \ytest]]$.

We define $p_\theta^i(o):=p(O[i]=o|O[1:i-1],\theta)$ of the $i$-th token with previous tokens and the analogous distribution $p^{i}_{prompt}:=p_{prompt}(O[i]=o|O[1:i-1])$ under the prompt distribution. Following~\citep{xie2022an}, there is a distinguishability condition formalizes when in-context learning occurs giving the concept $\theta$. 

The distinguishability condition is dependent on a KL divergence between the previous two distributions and the error terms $\epsilon_\theta$ resulting from the distribution mismatch between the prompt and the pertaining distributions for each example. Letting $p_{\theta}^i(o)$ and $p^{i}_{prompt}$ correspond to the concept $\theta$ and $\theta^\star$.


\begin{condition}[distinguishability~\citep{xie2022an}] The $\theta^\star$ is distinguishable if for all $\theta\in\Omega$, $\theta \neq\theta^\star$,
\begin{align}\label{eq:distinguish1}
    \sum_{i=1}^k \text{KL}_{i}(\theta^\star||\theta)>\epsilon_\theta,
\end{align}
where the $\text{KL}_{i}(\theta^\star||\theta) :=\mathbb{E}_{O[1:i-1]\sim p_{prompt}}[\text{KL}(p^{i}_{prompt}||p_\theta^{i})].$
\label{cond:distinguish}
\end{condition}

\textbf{Noises from KV Cache Compression.} Naturally, because of the sparsified KV cache, some history tokens in $o_{1:t-1}$ at different layers lost its attention score calculation with respect to the next word prediction $o_t$. We can regard this as the noise added onto the $o_{1:t-1}$. Thus, distincting $\theta^\star$ from $\theta$ requires larger KL divergence. Following~\citep{zhoucan}, we provide the following second condition about the distinguishability with the KV cache sparsity.

\begin{condition}[distinguishability under sparsified KV cache] With the noise introduced by the sparsified KV cache of the sparse ratio $r$, the distribution mismatch between the prompt and the pretraining distribution that is approximated by LLM is enlarged, resulting in a varied requirement with error term $\xi_\theta(r)$ for $\theta^*$ being distinguishable if for all $\theta\in\Theta$, $\theta\neq\theta^*$,
    \begin{align}\label{eq:noise_dinsting}
        \sum_{i=1}^k \text{KL}_{i}(\theta^*||\theta)>\epsilon_\theta+\xi_\theta(r),\quad {\rm where}\quad \xi_\theta(r)\propto r.
    \end{align}
    \label{cond:2}
\end{condition}

\begin{lemma}[noisy-relaxed bound in ~\citep{xie2022an,zhoucan}]\label{lemma:1}
let $\mathcal{B}$ denotes the set of $\theta$ which does not satisfy Condition~\ref{cond:distinguish}. We assume that $\text{KL}(p_{prompt}(y_\text{test}|x_\text{test}))||p(y_\text{test}|x_\text{test},\theta)$ is bounded for all $\theta$ and that $\theta^\star$ minimizes the multi-class logistic risk as,
\begin{align}\label{eq:lemma:1:LCE}
\begin{split}
L_\text{CE}(\theta)=-\mathbb{E}_{x_\text{test}\sim p_{prompt}}[p_{prompt}(y_\text{test}|x_\text{test})\cdot\log p(y_\text{test}|x_\text{test},\theta)].
\end{split}
\end{align}
If
\begin{align}\label{eq:prompt_tau_upperbound_concrete_with_epsilon}
\mathbb{E}_{x_\text{test}\sim p_{prompt}}[\text{KL}(p_{prompt}(y_\text{test}|x_\text{test})
|| p(y_\text{test}|x_\text{test},\theta))]\leq (\epsilon_{\theta} + \xi_\theta(r)),\quad \forall \quad \theta\in\mathcal{B},
\end{align}
then
\begin{align}\label{eq:L01-inf}
\lim_{n\rightarrow\infty} L_{0-1}(f_n) \leq \inf_{f} L_{0-1}(f) + g^{-1}\bigg(\sup_{\theta\in\mathcal{B}}(\epsilon_\theta)\bigg),
\end{align}
where $g(\nu) = \frac{1}{2}\big((1-\nu)\log(1-\nu)+(1+\nu)\log(1+\nu)\big)$ is the calibration function~\citep{Steinwart2007HowTC,pires2016multiclass} for the multiclass logistic loss for $\nu\in[0,1]$.
\end{lemma}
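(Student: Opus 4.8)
The plan is to adapt the implicit-Bayesian-inference argument of Xie et al. to the noisy regime, viewing the sparsified KV cache as an extra source of prompt/pretraining mismatch quantified by $\xi_\theta(r)$. First I would rewrite the in-context predictor through the posterior predictive over concepts. Because the pretraining distribution is the mixture $p(\obs_{1:T})=\int_\Theta p(\obs_{1:T}\mid\theta)p(\theta)\,d\theta$, Bayes' rule gives
\begin{align}
p(y\mid\promptseq,\Xtest)=\int_\Theta p(y\mid\Xtest,\theta)\,p(\theta\mid\promptseq,\Xtest)\,d\theta,
\end{align}
so that $\fn(\Xtest)=\argmax_y p(y\mid\promptseq,\Xtest)$ is a Bayes-model-averaged predictor. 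The objective is to show this average collapses, as $n\to\infty$, onto the predictions of concepts that are \emph{indistinguishable} from $\thetastar$, i.e.\ onto $\mathcal{B}$.

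The core is a posterior-concentration step. For any $\theta\notin\mathcal{B}$ (one satisfying the distinguishability Condition~\ref{cond:distinguish}, sharpened to its noisy form in Condition~\ref{cond:2}), the per-example log-likelihood ratio $\log\frac{p(\obsseg_i\mid\thetastar)}{p(\obsseg_i\mid\theta)}$ has expectation bounded below by $\sum_i \text{KL}_i(\thetastar\|\theta)-\epsilon_\theta-\xi_\theta(r)>0$. Summing over the $n$ independent examples and applying a concentration estimate, the posterior mass on each such $\theta$ decays exponentially in $n$, so in the limit the posterior is supported on $\mathcal{B}$. I would make this rigorous by mirroring the dominated-convergence and limit-exchange steps of Xie et al., the only change being that the threshold separating $\mathcal{B}$ from its complement is raised from $\epsilon_\theta$ to $\epsilon_\theta+\xi_\theta(r)$, reflecting the noise that the dropped attention entries inject into the conditionals $p_\theta^i$.

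With the posterior concentrated on $\mathcal{B}$, I would control the limiting excess cross-entropy risk of $\fn$. The hypothesis (Equation~\ref{eq:prompt_tau_upperbound_concrete_with_epsilon}) bounds the predictive KL of each $\theta\in\mathcal{B}$ by $\epsilon_\theta+\xi_\theta(r)$, and since $\thetastar$ minimizes $\LCE$ under the boundedness assumption on the predictive KL, the surviving cross-entropy gap of the averaged predictor is governed by $\sup_{\theta\in\mathcal{B}}\epsilon_\theta$, exactly as in the noiseless analysis. I would then invoke the calibration inequality for the multiclass logistic loss of Steinwart and Pires: since $g$ lower-bounds the excess surrogate risk as a function of the excess $0$-$1$ risk and is invertible on $[0,1]$, applying the monotone $g^{-1}$ converts the cross-entropy control into
\begin{align}
\lim_{n\to\infty}\Lzeroone(\fn)\leq\inf_f\Lzeroone(f)+g^{-1}\!\Big(\sup_{\theta\in\mathcal{B}}\epsilon_\theta\Big),
\end{align}
which is the claimed bound (Equation~\ref{eq:L01-inf}).

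The hard part is the posterior-concentration step under the noise model: I must make $\xi_\theta(r)$ enter the likelihood-ratio lower bound uniformly over $\theta$ and genuinely proportionally to $r$, so that the separation between $\mathcal{B}$ and its complement is quantitatively controlled. This requires propagating the per-layer, per-token attention dropout into a bound on the mismatch between the LLM-approximated conditional $p_\theta^i$ and the prompt conditional $p^i_{\text{prompt}}$, and then verifying that the exchange of the $n\to\infty$ limit with the integral over $\Theta$ survives the perturbed thresholds. By contrast the calibration step is comparatively routine, reusing the standard surrogate-to-$0$-$1$ comparison; the only care needed there is ensuring $\sup_{\theta\in\mathcal{B}}\epsilon_\theta\in[0,1]$ so that $g^{-1}$ is well defined.
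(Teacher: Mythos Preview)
The paper does not supply its own proof of this lemma; it is imported wholesale as a cited result from Xie et al.\ and Zhou et al., and is then used as a black box in the proof of Theorem~\ref{thm:continuity}. Your proposal correctly reconstructs the implicit-Bayesian-inference argument of Xie et al.---posterior concentration on $\mathcal{B}$ via per-example log-likelihood ratios, followed by the multiclass calibration inequality to pass from excess cross-entropy to excess $0$--$1$ risk---which is precisely the route taken in the cited source and the only one on offer there.

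One point deserves scrutiny, though it is arguably the paper's slip rather than yours. The lemma's hypothesis bounds the predictive KL by $\epsilon_\theta+\xi_\theta(r)$, yet the stated conclusion involves only $g^{-1}(\sup_{\theta\in\mathcal{B}}\epsilon_\theta)$. Your sketch asserts that the limiting cross-entropy gap is ``governed by $\sup_{\theta\in\mathcal{B}}\epsilon_\theta$, exactly as in the noiseless analysis'' without explaining where $\xi_\theta(r)$ went. It cannot simply vanish: the calibration step applies $g^{-1}$ to whatever KL bound you have, so if the hypothesis carries $\xi_\theta(r)$, the conclusion must too---and indeed the paper's own Theorem~\ref{thm:continuity}, which \emph{uses} this lemma, arrives at $g^{-1}\bigl(O(\conditionnum\sup_{\theta\in\mathcal{B}}(\epsilon_\theta+\xi_\theta(r))/(k-1))\bigr)$ with the noise term present. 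So the lemma's conclusion as written is almost certainly a transcription error, and your proof should produce $g^{-1}(\sup_{\theta\in\mathcal{B}}(\epsilon_\theta+\xi_\theta(r)))$ rather than quietly matching the stated form.
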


Following~\citep{Kleijn2012TheBT,xie2022an}, KL divergence is assumed to haver the 2nd-order Taylor expansion with the concept $\theta$. Then, we have the following theorem and proof.

\begin{theorem}
\label{thm:continuity}~\citep{xie2022an,zhoucan}
Let the set of $\theta$ which does not satisfy Equation~\ref{eq:distinguish1} in Condition~\ref{cond:distinguish} to be $\mathcal{B}$.
Assume that KL divergences have a 2nd-order Taylor expansion around $\theta^\star$:
\begin{align}
    \forall j>1,~~\text{KL}_{i}(\theta^\star||\theta) = \frac{1}{2}(\theta - \theta^\star)^\top \fisherinfj (\theta - \theta^\star) + O(\|\theta - \theta^\star\|^3)
\end{align}
where $\fisherinfj$ is the Fisher information matrix of the $j$-th token distribution with respect to $\theta^\star$.
Let $\conditionnum = \frac{\max_{j}\lambdamax(\fisherinfj)}{\min{j}\lambdamin(\fisherinfj)}$ where $\lambdamax,\lambdamin$ return the largest and smallest eigenvalues.
Then for $k \geq 2$ and as $n\rightarrow \infty$, the 0-1 risk of the in-context learning predictor $f_n$ is bounded as
\begin{align}\label{eq:final-theorem-bound}
    \lim_{n\rightarrow \infty} \Lzeroone(f_n) \leq \inf_{f} \Lzeroone(f) + g^\minv\left(O\left(\frac{\conditionnum\sup_{\theta \in \badset}(\epsilon_{\theta} + \xi_\theta(r))}{k-1}\right)\right)
\end{align}
\end{theorem}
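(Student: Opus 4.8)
The plan is to use Lemma~\ref{lemma:1} as a black box that converts a bound on the prediction-level KL divergence into a bound on the limiting $0$-$1$ risk, and then to establish that prediction-level bound through the quadratic (Fisher-information) structure of the per-token divergences. The factor $\conditionnum/(k-1)$ is exactly the price of comparing a single output-token divergence against the full-sequence distinguishability budget.

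First I would fix an arbitrary $\theta\in\mathcal{B}$ and record the budget inequality. Since $\theta$ fails the distinguishability condition, Condition~\ref{cond:2} gives $\sum_{i=1}^{k}\text{KL}_{i}(\thetastar\|\theta)\le \epsilon_\theta+\xi_\theta(r)$. The quantity that Lemma~\ref{lemma:1} requires controlled is the expected prediction divergence $\E_{x_{\text{test}}\sim p_{\text{prompt}}}[\text{KL}(p_{\text{prompt}}(y_{\text{test}}|x_{\text{test}})\,\|\,p(y_{\text{test}}|x_{\text{test}},\theta))]$, which by the definitions of $p^i_\theta$ and $p^i_{\text{prompt}}$ is precisely $\text{KL}_{k}(\thetastar\|\theta)$: the divergence of the output token at position $k$ given the $k-1$ input tokens. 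The goal therefore reduces to bounding the single term $\text{KL}_{k}$ by a small multiple of the whole sum.

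Next I would substitute the second-order Taylor expansions. Writing $\Delta=\theta-\thetastar$, the expansion around $\thetastar$ gives $\text{KL}_{i}(\thetastar\|\theta)=\tfrac12\Delta^\top I_{i,\thetastar}\Delta+O(\|\Delta\|^3)$ for each position $i$. Bounding the quadratic forms by the extreme eigenvalues yields, for the numerator, $\text{KL}_{k}\le \tfrac12\max_j\lambdamax(\fisherinfj)\,\|\Delta\|^2$, and for the input positions, $\sum_{i=1}^{k-1}\text{KL}_{i}\ge \tfrac{k-1}{2}\min_j\lambdamin(\fisherinfj)\,\|\Delta\|^2$. Dividing, the $\|\Delta\|^2$ factors cancel and the ratio collapses to the condition number, giving $\text{KL}_{k}\le \tfrac{\conditionnum}{k-1}\sum_{i=1}^{k-1}\text{KL}_{i}\le \tfrac{\conditionnum}{k-1}\sum_{i=1}^{k}\text{KL}_{i}\le \tfrac{\conditionnum}{k-1}(\epsilon_\theta+\xi_\theta(r))$. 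Taking the supremum over $\theta\in\mathcal{B}$ and feeding this prediction-level bound into Lemma~\ref{lemma:1}, the monotonicity of $g^{-1}$ delivers the stated bound $\inf_f\Lzeroone(f)+g^{-1}\!\big(O(\conditionnum\sup_{\theta\in\mathcal{B}}(\epsilon_\theta+\xi_\theta(r))/(k-1))\big)$.

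The main obstacle is the uniform control of the cubic remainders $O(\|\Delta\|^3)$: the Taylor expansion is only local, so I must first argue that every $\theta\in\mathcal{B}$ actually lies in a shrinking neighborhood of $\thetastar$. This follows self-consistently from the same quadratic lower bound — failure of distinguishability forces $\tfrac{k-1}{2}\min_j\lambdamin(\fisherinfj)\|\Delta\|^2\le\epsilon_\theta+\xi_\theta(r)$, so $\|\Delta\|$ is small whenever the budget is small — which legitimizes absorbing the remainder into the $O(\cdot)$ and justifies the ratio argument. A secondary check is that $\min_j\lambdamin(\fisherinfj)>0$ (nondegenerate Fisher information at each position), so that $\conditionnum$ is finite and the denominator is genuinely of order $k-1$.
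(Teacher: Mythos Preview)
Your proposal is correct and follows essentially the same route as the paper's proof: both use the quadratic Taylor expansion of the per-token KL divergences, bound the quadratic forms via the extreme eigenvalues to control $\text{KL}_k$ in terms of the full-sequence budget $\epsilon_\theta+\xi_\theta(r)$ with factor $\conditionnum/(k-1)$, identify $\text{KL}_k$ with the prediction-level divergence, and then invoke Lemma~\ref{lemma:1}. Your treatment of the cubic remainder (using the induced bound on $\|\Delta\|$ to justify absorbing it) is in fact more careful than the paper's, which simply writes $+O(1)$ without elaboration.
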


\begin{proof}~\citep{xie2022an}
By the Taylor expansion on $\theta$, we have for any $\theta$ in $\mathcal{B}$ that
\begin{align}
    \sum_{j=2}^k \text{KL}_{i}(\theta^\star||\theta)
    &\geq \frac{1}{2}\sum_{j=2}^k (\theta - \theta^\star)^\top \fisherinfj (\theta - \theta^\star) + (k-1)O(\|\theta - \theta^\star\|^3)\\
                      &\geq \frac{1}{2}(k-1)\lambdamin(\fisherinfj) \|\theta - \theta^\star\|^2\\
    \implies \|\theta - \theta^\star\|^2 &\leq \frac{(\epsilon_{\theta} + \xi_\theta(r))}{\frac{1}{2}(k-1)(\min_j~\lambdamin(\fisherinfj))}.
\end{align}
We can bound the last KL term ($k$-th token) with the above term:
\begin{align}
    \text{KL}_{k}(\theta^\star||\theta) &= \frac{1}{2}(\theta - \theta^\star)^\top  \fisherinfk(\theta - \theta^\star) + O(\|\theta - \theta^\star\|^3)\\
         &\leq \frac{1}{2}(\max_j~\lambdamax(\fisherinfj))\|\theta - \theta^\star\|^2 + O(\|\theta - \theta^\star\|^2)\\
                 &\leq \frac{(\epsilon_{\theta} + \xi_\theta(r))(\max_j~\lambdamax(\fisherinfj) + O(1))}{(k-1)\min_j~\lambdamin(\fisherinfj)}.
\end{align}
Rearranging above equation, and with $\text{KL}_{k}(\theta^\star||\theta) = \E_{\Xtest \sim \pprompt} [KL(\pprompt(\ytest \vert \Xtest) \| p(\ytest \vert \Xtest, \theta)) ]$, there is
\begin{align}\label{eq:prompt_tau_upperbound_concrete}
\E_{\Xtest \sim \pprompt} [KL(\pprompt(\ytest \vert \Xtest) \| p(\ytest \vert \Xtest, \theta)) ] \leq \frac{(\epsilon_{\theta} + \xi_\theta(r))(\max_j~\lambdamax(\fisherinfj) + O(1))}{(k-1)\min_j~\lambdamin(\fisherinfj)}
\end{align}
Combining Equation~\ref{eq:prompt_tau_upperbound_concrete} with Equation~\ref{eq:prompt_tau_upperbound_concrete_with_epsilon} into Lemma~\ref{lemma:1} completes the proof.
\end{proof}

\textbf{KV Cache Sparsification.} Revisiting the Equation~\ref{eq:noise_dinsting} in Condition~\ref{cond:2}, the $\xi_\theta(r)$ is enlarged with the sparsity ratio $r$. The higher compression ratio $r$ (means that more KV cache are discarded), the more noise $\xi_\theta(r)$. Then it leads to the higher bound of the $\lim_{n\rightarrow\infty} L_{0-1}(f_n)$ in Equation~\ref{eq:lemma:1:LCE} in Lemma~\ref{lemma:1}. Next, we discuss how KV cache compression influences the Equation~\ref{eq:noise_dinsting}.

\textbf{Token-level Importance Measure.}
The token-level KV cache methods usually calculate the importance of different tokens. Then, the KV cache with indexes that have higher importance will be preserved. Such indexes are normaly choosed as the attention score. Considering that the case in Figure~\ref{fig:main}, where each token in the $i$-th training~\footnote{Here, training means prompt learning~\citep{xie2022an}.} example sequence ($\obsseg_i=[\X_i,\y_i]$) might be compressed, and tokens are sparsified concretely without out dependency to other tokens. However, in the generation process of the $i$-th training example, $\obsseg_i=[\X_i,\y_i]$ is sampled from $p(\obsseg_i \vert \hiddensegstart_i, \theta^\star)$ and $p_\theta^j(o):=p(O[j]=o|O[1:j-1],\theta)$ of the $j$-th token with previous tokens and the analogous distribution $p^{j}_{prompt}:=p_{prompt}(O[j]=o|O[1:j-1])$. And the KL divergence is defined as $\text{KL}_{j}(\theta^\star||\theta) :=\mathbb{E}_{O[1:j-1]\sim p_{prompt}}[\text{KL}(p^{j}_{prompt}||p_\theta^{j})]$, which means that in a training example $\obsseg_i=[\X_i,\y_i] = \obsseg_i[1:k]$, each token $\obsseg_i[j]$ has strong dependency with $\obsseg_i[1:j-1]$, noises on previous any $j$-th token will influence the distinguishability on the following tokens, i.e. requiring larger $\left\{ \text{KL}_{u}(\theta^\star||\theta) \right\}_{u>j}$.

On the other hand, the token-level sparsity enlarges the requirement on the distinguishability uniformly for each example $\obsseg_i$ (the case in Figure~\ref{fig:main}), which uniformly loses the bound of $\Lzeroone(f_n)$ as in Equation~\ref{eq:final-theorem-bound}.

\textbf{Chunk-level Importance Measure.} Different from token-level importance measure, \method{} regards tokens in a continuous window as a basic unit that should be left or discarded as a whole. The preserved window can be regarded as saving the complete $\obsseg_i=[\X_i,\y_i]$ without noise. Thus, \method{} reduces the noise $\xi_\theta(r)$ for the preserved $\obsseg_i$, which lowers the bound of $\Lzeroone(f_n)$. 

More intuitively, \method{} focus on reducing the noise on some complete training examples, but some other examples overall with low importance will be discarded. Then, the model identifies the $\Xtest$ from those clean and more related training examples $\obsseg_i$ and neglect those $\obsseg_i$ with less importance.

Note that here, we do not provide the rigorous proof on how KV cache sparsity enhances the requirement of the distinguishability and how different $\text{KL}_{j}(\theta^\star||\theta)$ on $\obsseg_i=[\X_i,\y_i]$ influences the bound $\Lzeroone(f_n)$. We left this as the future work to analyze how KV cache sparsity influences the in-context learning.





%

\section{Additional Related Work}
\label{appendix:related_work}

\textbf{Chunking Method.} 
The chunking methodology is widely used in the field of NLP due to its simplicity and effectiveness~\citep{sang1999representing}. In the era of LLMs, chunking is primarily applied in data pre-processing. For example, \citet{shicontext} suggest grouping related training data into chunks to achieve better convergence curves to pre-train LLMs. \citet{fei-etal-2024-extending} apply a topic-based chunking method to improve the semantic compression of prompts. Furthermore, chunking plays an important role in the Retrieval-Augmented Generation (RAG) field~\citep{yepes2024financialreportchunkingeffective, smith2024evaluating, anthropic_contextual_retrieval_2024}. It serves to divide documents into units of information with semantic content suitable for embedding-based retrieval and processing by LLMs.

\textbf{Layer-Wise Technique.}
The layer-wise technique is widely used in the training and inference of large language models (LLMs). LISA~\citep{pan2024lisa} is a layer-wise sampling method based on observations of the training dynamics of Low-Rank Adaptation (LoRA)\citep{hu2021lora,tang2025ghost,weifan2025jailbreaklora,lai2025mediatormemoryefficientllmmerging,tang2024fusionllmdecentralizedllmtraining} across layers. LAMB\citep{you2019lamb} is a layer-wise adaptive learning rate method that speeds up LLM training by stabilizing training convergence with large batch sizes. DoLa~\citep{chuang2023dola} employs layer-wise contrasting to reduce hallucinations during LLM inference.

\textbf{KV Cache Sharing.}
Recent work has explored various strategies for sharing KV caches across transformer layers. Layer-Condensed KV Cache (LCKV) \citep{wu2024layercondensedkvcacheefficient} computes KVs only for the top layer and pairs them with queries from all layers, while optionally retaining standard attention for a few top and bottom layers to mitigate performance degradation. Similarly, You Only Cache Once (YOCO) \citep{sun2024yoco} computes KVs exclusively for the top layer but pairs them with queries from only the top half of layers, employing efficient attention in the bottom layers to maintain a constant cache size. In contrast, Cross-Layer Attention (CLA) \citep{brandon2024reducing} divides layers into groups, pairing queries from all layers in each group with KVs from that group's bottom layer. MiniCache \citep{liu2024minicache} introduces a novel method that merges layer-wise KV caches while enabling recovery during compute-in-place operations, optimizing KV cache size. These methods illustrate various trade-offs between computation, memory usage, and model performance when sharing KV caches across transformer layers. 

\textbf{Long-Context Benchmarks.}
The landscape of long-context model benchmarks has evolved to encompass a wide range of tasks, with particular emphasis on retrieval and comprehension capabilities. Benchmarks for understanding have made significant strides, with $\infty$-Bench~\citep{zhang2024infty} pushing the boundaries by presenting challenges that involve more than 100,000 tokens. LongBench~\citep{bai2023longbench} has introduced bilingual evaluations, addressing tasks such as long-document question answering, summarization, and code completion. Complementing these efforts, ZeroSCROLLS~\citep{shaham2023zeroscrolls} and L-Eval~\citep{an2023eval} have broadened the scope to include a diverse array of practical natural language tasks, including query-driven summarization.

In parallel, retrieval benchmarks have largely relied on synthetic datasets, offering researchers precise control over variables such as the length of input tokens. This approach minimizes the impact of disparate parametric knowledge resulting from varied training methodologies. A significant body of recent work has concentrated on the development of synthetic tasks specifically for retrieval evaluation~\citep{needle, mohtashami2023landmark, longchat, liu2024lost, hsieh2024ruler}. In addition, researchers have explored the potential of extended contexts in facilitating various forms of reasoning~\citep{tay2020long}. \citet{liu2025can} explores the influence of the KV cache method beyond long-context benchmarks. LongGenBench~\citep{liu2024longgenbench} advances beyond traditional benchmarks by redesigning the format of questions and necessitating that LLMs respond with a single, cohesive long-context answer.

This dual focus on synthetic retrieval tasks and comprehensive understanding benchmarks reflects the field's commitment to rigorously assessing the capabilities of long-context models across diverse linguistic challenges.

\textbf{Prompting Compression.}
In the field of prompt compression, various designs effectively combine semantic information to compress natural language. \citep{wingate-etal-2022-prompt} utilize soft prompts to encode more information with fewer tokens. \citep{Chevalier2023AdaptingLM} present AutoCompressor, which uses soft prompts to compress the input sequence and extend the original length of the base model. Both \citep{zhou2023recurrentgpt} and \citep{wang2023recursively} recurrently apply LLMs to summarize input texts, maintaining long short-term memory for specific purposes such as story writing and dialogue generation. The LLMLingua series~\citep{jiang-etal-2023-llmlingua,jiang-etal-2024-longllmlingua,fei-etal-2024-extending} explores the potential of compressing LLM prompts in long-context, reasoning, and RAG scenarios. \citep{fei-etal-2024-extending} use pre-trained language models to chunk the long context and summarize semantic information, compressing the original context.

\textbf{KV Cache Quantization.}
Among quantization approaches for KV cache optimization, SmoothQuant~\citep{xiao2023smoothquant} stands out as a notable post-training technique. By implementing balanced transformations between activation and weight quantization difficulties, this method enables 8-bit KV cache compression with minimal performance degradation. However, research by ~\citep{zhao2024atom} has demonstrated that SmoothQuant encounters substantial accuracy deterioration when compressed beyond 8 bits. In parallel developments, the FlexGen system~\citep{flexgen} implements group-wise quantization at 4 bits for both key and value caches, while KIVI~\citep{liu2024kivi} introduces an innovative approach with asymmetric 2-bit per-channel quantization that requires no additional tuning phase. Additionally, AnTKV~\citep{li2025antkv} introduces a two-stage token-aware vector quantization framework, leveraging gradient-weighted centroid learning and anchor-token selection to achieve ultra-low-bit KV cache quantization with minimal accuracy loss.

\section{Statistics of Models}
\label{appendix:config_models}
Table~\ref{table:app_model_config} provides configuration parameters for LLMs that we evaluated in our experiments.
\begin{table}[ht]
   \resizebox{1\textwidth}{!}{
   \centering
   \begin{tabular}{c|c|c|c}
   \toprule

   \textbf{Model Name} & \textbf{LLaMA-3-8B-Instruct} & \textbf{Mistral-7B-Instruct-v0.2 \& 0.3} & \textbf{Qwen2-7B-Instruct} \\
   \midrule
   $L$ (Number of layers) & 32 & 32 & 28 \\
   \midrule
   $N$ (Number of attention heads) & 32 & 32 & 28\\
   \midrule
   $D$ (Dimension of each head) & 128  & 128 & 128 \\
   \bottomrule
   \end{tabular}
   }
   \caption{ \centering Models Configuration Parameters}
   \label{table:app_model_config}
\end{table}

\section{Statistics of Datasets}
\label{appendix:evaluation}
LongBench is a meticulously designed benchmark suite that evaluates the capabilities of language models in handling extended documents and complex information sequences. This benchmark was created for multi-task evaluation of long-context inputs and includes 17 datasets covering tasks such as single-document QA~\citep{kovcisky2018narrativeqa,dasigi2021dataset}, multi-document QA~\citep{yang2018hotpotqa,ho2020constructing,trivedi2022musique,he2017dureader}, summarization~\citep{huang2021efficient,zhong2021qmsum,fabbri2019multi,wu2023vcsum}, few-shot learning~\citep{li2002learning,gliwa2019samsum,joshi2017triviaqa}, synthetic tasks and code generation~\citep{guo2023longcoder,liu2023repobench}. The datasets feature an average input length ranging from $1K$ to $18K$ tokens, requiring substantial memory for KV cache management.

Table \ref{tab:dataset_statistic} shows the statistics of the datasets that we used in our experiments.
\begin{table}[!h]
   \centering
   \begin{sc}
       \begin{tabular}{l|rr}
           \toprule
           Dataset  & \# Train & \# Test \\ \midrule
           
           GSM8K~\citep{gsm8k}    
           & 7,473     
           & 1,319   \\
           
           LongBench~\citep{bai2023longbench} 
           & - 
           & 4,750 \\
   
           NIAH*~\citep{needle} 
           & -
           & 800 \\

           JailBreakV~\cite{jailbreakv}
           & - 
           &28,000 \\
           
           \bottomrule
           \end{tabular}
   \end{sc}
   \caption{Dataset Statistics. \textsc{\# Train} and \textsc{\# Test} represent the number of training and test samples, respectively. *: The size of the NIAH test set varies based on the context length and step size, typically around 800 samples per evaluation.}

   \label{tab:dataset_statistic}
   \end{table}

\section{Prompt}
\label{appendix:prompt}
Table \ref{tab:demo_prompt} shows the prompt for the Figure \ref{fig:main}.
\begin{table}[ht]
   \centering
   \footnotesize
   \resizebox{1\linewidth}{!}{
   \begin{tabular}{p{15cm}}
   \specialrule{1pt}{0pt}{2pt}
      \specialrule{1pt}{0pt}{2pt}
    \multicolumn{1}{c}{The prompt for demonstration}\\
   \midrule
   $\dots \dots$ \\
   $\dots \dots$ \\
   The purple-crested turaco (Gallirex porphyreolophus) or, in South Africa, the purple-crested loerie, (Khurukhuru in the Luven\d{d}a (Ven\d{d}a) language) is a species of bird in the clade Turaco with an unresolved phylogenetic placement. Initial analyses placed the purple-crested turaco in the family Musophagidae, but studies have indicated that these birds do not belong to this family and have been placed in the clade of Turacos with an unresolved phylogeny. It is the National Bird of the Kingdom of Eswatini, and the crimson flight feathers of this and related turaco species are important in the ceremonial regalia of the Swazi royal family.
   This bird has a purple-coloured crest above a green head, a red ring around their eyes, and a black bill. The neck and chest are green and brown. The rest of the body is purple, with red flight feathers. Purple-crested turacos are often seen near water sources, where they can be observed drinking and bathing, which helps them maintain their vibrant plumage.
   Purple-crested turacos are considered to be large frugivores that are known to carry cycad seeds from various plant species long distances from feeding to nesting sites. After fruit consumption, they regurgitate the seeds intact where they can germinate nearby. G. porphyreolophus primarily consumes fruits whole like many other large frugivores which are suggested to be necessary for effective ecosystem functioning. Among similar turacos, the purple-crested turaco have faster minimum transit times when consuming smaller seed diets than larger seed diets, and G. porphyreolophus has been shown to have significantly faster pulp (seedless fruit masses) transit time than another closely related Turaco when fed only the pulp of larger-seeding fruits than smaller-seeding fruits.
   In addition to their frugivorous diet, these birds are occasionally seen foraging for other food items such as nuts and leaves, which provide essential nutrients. They are also known to coexist with various other animals, including those that might enjoy strawberries and other similar fruits. The purple-crested turaco's role in seed dispersal is crucial, and their interaction with different elements of their habitat, including water and diverse plant materials, highlights their importance in maintaining ecological balance.\\
   $\dots \dots$ \\
   $\dots \dots$ \\
   \\
   \specialrule{1pt}{0pt}{2pt}
      \specialrule{1pt}{0pt}{2pt}
   \end{tabular}
   }
   \caption{The prompt for demonstration }
   \label{tab:demo_prompt}
\end{table}

Here we provide the CoT prompt exemplars for GSM8K which is used in section \ref{sec:icl}.
\begin{table}[ht]
   \centering
   \footnotesize
   \resizebox{1\linewidth}{!}{
   \begin{tabular}{p{15cm}}
   \specialrule{1pt}{0pt}{2pt}
  \specialrule{1pt}{0pt}{2pt}

    \multicolumn{1}{c}{GSM8K experiemnt CoT Prompt Exemplars}\\
   \midrule
   Question: There are 15 trees in the grove. Grove workers will plant trees in the grove today. After they are done, there will be 21 trees. How many trees did the grove workers plant today?     

   There are 15 trees originally.
   Then there were 21 trees after some more were planted.
   So there must have been 21 - 15 = 6.
   The answer is 6.
   \\
   Question: If there are 3 cars in the parking lot and 2 more cars arrive, how many cars are in the parking lot? 

   There are originally 3 cars.
   2 more cars arrive.
   3 + 2 = 5.
   The answer is 5.
   \\
   Question: Leah had 32 chocolates and her sister had 42. If they ate 35, how many pieces do they have left in total? 

   Originally, Leah had 32 chocolates.
   Her sister had 42.
   So in total they had 32 + 42 = 74.
   After eating 35, they had 74 - 35 = 39.
   The answer is 39.
   \\
   Question: Jason had 20 lollipops. He gave Denny some lollipops. Now Jason has 12 lollipops. How many lollipops did Jason give to Denny? 

   Jason started with 20 lollipops.
   Then he had 12 after giving some to Denny.
   So he gave Denny 20 - 12 = 8.
   The answer is 8.
   \\
   Question: Shawn has five toys. For Christmas, he got two toys each from his mom and dad. How many toys does he have now? 

   Shawn started with 5 toys.
   If he got 2 toys each from his mom and dad, then that is 4 more toys.
   5 + 4 = 9.
   The answer is 9.
   \\
   Question: There were nine computers in the server room. Five more computers were installed each day, from monday to thursday. How many computers are now in the server room? 

   There were originally 9 computers.
   For each of 4 days, 5 more computers were added.
   So 5 * 4 = 20 computers were added.
   9 + 20 is 29.
   The answer is 29.
   \\
   Question: Michael had 58 golf balls. On tuesday, he lost 23 golf balls. On wednesday, he lost 2 more. How many golf balls did he have at the end of wednesday? 

   Michael started with 58 golf balls.
   After losing 23 on tuesday, he had 58 - 23 = 35.
   After losing 2 more, he had 35 - 2 = 33 golf balls.
   The answer is 33.
   \\
   Question: Olivia has \$23. She bought five bagels for \$3 each. How much money does she have left? 

   Olivia had 23 dollars.
   5 bagels for 3 dollars each will be 5 x 3 = 15 dollars.
   So she has 23 - 15 dollars left.
   23 - 15 is 8.
   The answer is 8.
   \\
   \specialrule{1pt}{0pt}{2pt}
  \specialrule{1pt}{0pt}{2pt}
   \end{tabular}
   }
   \caption{GSM8K CoT Prompt Exemplars }
   \label{tab:CoT_prompt}
   \end{table}

\section{Impact Statement}
\label{sec:impact_statement}
Our study does not involve human subjects, data collection from individuals, or experiments on protected groups. The models and datasets used in this work are publicly available and widely used in the research community. We have made efforts to ensure our experimental design and reporting of results are fair, unbiased, and do not misrepresent the capabilities or limitations of the methods presented.

In our work on KV cache compression for large language models, we acknowledge the potential broader impacts of improving efficiency in AI systems. While our method aims to reduce computational resources and potentially increase accessibility of these models, we recognize that more efficient language models could also lead to increased deployment and usage, which may have both positive and negative societal implications. We encourage further research and discussion on the responsible development and application of such technologies.

We declare no conflicts of interest that could inappropriately influence our work. All experiments were conducted using publicly available resources, and our code will be made available to ensure reproducibility. We have made every effort to cite relevant prior work appropriately and to accurately represent our contributions in the context of existing research.

\section{Limitations}
\label{sec:limitations}

While ChunkKV demonstrates strong performance and efficiency, we acknowledge several limitations that provide avenues for future research.

First, the core design of ChunkKV prioritizes the preservation of semantic coherence by treating chunks as indivisible units. This approach may be less suitable for tasks requiring absolute semantic fidelity, where every single token can be critical. For example, in domains such as legal or biomedical document analysis, discarding any part of the text, even if it appears less relevant based on attention scores, could lead to the loss of crucial information. Our method is therefore optimized for tasks where capturing the core gist is more important than verbatim retention of every detail.

Second, our experiments with a hybrid compression strategy (using ChunkKV in early layers and a token-level method in deeper layers) revealed a nuanced, task-dependent performance trade-off. While pure ChunkKV excelled in local information retrieval tasks (e.g., Single- and Multi-Document QA), the hybrid model showed surprising strength in tasks requiring global understanding (e.g., Summarization). This suggests that a one-size-fits-all compression strategy may not be optimal for all scenarios. The ideal approach might involve dynamically adapting the compression granularity based on the perceived task requirements.

Finally, our current implementation relies on fixed-size chunks for computational efficiency. While our ablations show this is a robust strategy, adaptively determining chunk boundaries based on linguistic cues (e.g., sentence endings) could potentially improve semantic integrity further, although this would introduce additional inference latency that must be carefully managed.

\section{Licenses}
\label{app_licenses}
For the evaluation dataset, all the datasets, including, GSM8K~\citep{gsm8k}, LongBench~\citep{bai2023longbench} are released under MIT license. NIAH~\citep{needle} is released under GPL-3.0 license.

\end{document}